\documentclass{article}
\usepackage[preprint]{arxiv}
\usepackage[utf8]{inputenc} 
\usepackage[T1]{fontenc}    
\usepackage{url}            
\usepackage{booktabs}       
\usepackage{amsfonts}       
\usepackage{nicefrac}       
\usepackage{microtype}      
\usepackage{xcolor}         
\usepackage{wrapfig}
\usepackage{graphicx}
\usepackage[algo2e]{algorithm2e}
\usepackage{adjustbox}
\usepackage{comment}
\usepackage{enumitem}
\setlist{leftmargin=3mm}
\usepackage{multirow}
\usepackage{tcolorbox}
\tcbuselibrary{skins,breakable}
\usepackage{tikz}
\usetikzlibrary{calc,arrows.meta,decorations.pathreplacing,positioning,backgrounds,patterns,shapes.geometric}
\usepackage{listings}

\usepackage{amsmath,amsthm,amssymb,amsfonts}
\usepackage{algorithm}
\usepackage{algorithmic}

\usepackage{comment}
\usepackage{bm}
\usepackage{pifont}

\theoremstyle{plain}
\newtheorem{theorem}{Theorem}[]
\newtheorem{corollary}{Corollary}[]
\newtheorem{lemma}[]{Lemma}
\newtheorem{proposition}{Proposition}

\newtheorem{definition}{Definition}[]
\newtheorem{assumption}{Assumption}[]

\newtheorem{remark}{Remark}[]
\newtheorem{example}{Example}[]

\def\1{\mathbf{1}}

\def\eps{{\epsilon}}

\def\vzero{{\mathbf{0}}}
\def\vone{{\mathbf{1}}}
\def\vmu{{\mathbf{\mu}}}

\def\vb{{\mathbf{b}}}

\def\ve{{\mathbf{e}}}

\def\vt{{\mathbf{t}}}
\def\vu{{\mathbf{u}}}

\def\vy{{\mathbf{y}}}
\def\vz{{\mathbf{z}}}

\def\mI{{\mathbf{I}}}

\def\mU{{\mathbf{U}}}

\def\mSigma{{\mathbf{\Sigma}}}

\DeclareMathAlphabet{\mathsfit}{\encodingdefault}{\sfdefault}{m}{sl}
\SetMathAlphabet{\mathsfit}{bold}{\encodingdefault}{\sfdefault}{bx}{n}

\def\gA{{\mathcal{A}}}

\def\gD{{\mathcal{D}}}
\def\gE{{\mathcal{E}}}
\def\gF{{\mathcal{F}}}

\def\gJ{{\mathcal{J}}}

\def\gL{{\mathcal{L}}}
\def\gM{{\mathcal{M}}}
\def\gN{{\mathcal{N}}}

\def\gP{{\mathcal{P}}}

\def\gR{{\mathcal{R}}}

\def\gX{{\mathcal{X}}}
\def\gY{{\mathcal{Y}}}

\newcommand{\E}{\mathbb{E}}

\newcommand{\R}{\mathbb{R}}

\newcommand{\Var}{\mathrm{Var}}

\newcommand{\Cov}{\mathrm{Cov}}

\DeclareMathOperator*{\argmin}{arg\,min}

\usepackage{color, colortbl}
\usepackage{bbm}
\usepackage{bm}
\usepackage{hyperref}
\usepackage[capitalize,noabbrev]{cleveref}

\lstdefinestyle{mystyle}{
    backgroundcolor=\color{backcolour},   
    commentstyle=\color{codegreen},
    keywordstyle=\color{magenta},
    numberstyle=\tiny\color{codegray},
    stringstyle=\color{codepurple},
    basicstyle=\ttfamily\footnotesize,
    breakatwhitespace=false,         
    breaklines=true,                 
    captionpos=b,                    
    keepspaces=true,                 
    numbers=left,                    
    numbersep=5pt,                  
    showspaces=false,                
    showstringspaces=false,
    showtabs=false,                  
    tabsize=2,
    frame=single,
}
\usepackage[colorinlistoftodos,prependcaption]{todonotes}
\usepackage{soul}
\definecolor{LightCyan}{rgb}{0.9,1,1}
\definecolor{light-gray}{gray}{0.8}
\definecolor{codegreen}{rgb}{0,0.6,0}
\definecolor{codegray}{rgb}{0.5,0.5,0.5}
\definecolor{codepurple}{rgb}{0.58,0,0.82}
\definecolor{backcolour}{rgb}{0.95,0.95,0.92}

\PassOptionsToPackage{cmyk}{xcolor}
\usepackage{csquotes}
\usepackage{adjustbox}
\usepackage{lscape}
\usepackage{subcaption}

\Crefname{assumption}{Assumption}{Assumptions}

\definecolor{abstractboxbg}{gray}{0.94}
\newtcolorbox{abstractbox}{
  enhanced,
  colback=abstractboxbg,
  colframe=abstractboxbg,
  boxrule=0pt,
  arc=3mm,
  left=6mm, right=6mm, top=5mm, bottom=5mm,
  width=\textwidth,
}
\newenvironment{awsabstract}
  {\begin{center}\begin{abstractbox}%
   {\noindent\bfseries Abstract.}\par\vspace{0.4em}\noindent\ignorespaces}
  {\par\vspace{1.0em}%
   \noindent
   \begin{minipage}[b]{0.70\linewidth}
     \textbf{Correspondence:}~\fbox{\texttt{achanish@amazon.com}}
   \end{minipage}\hfill
   \begin{minipage}[b]{0.26\linewidth}
     \raggedleft\includegraphics[height=26pt]{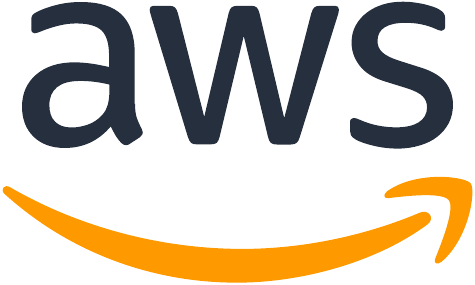}
   \end{minipage}%
   \end{abstractbox}\end{center}}

\title{\textsc{RoPoLL}: Robust Panel of LLM Judges}

\author{%
  Anish Acharya\thanks{Corresponding author.} \\
  Amazon Web Services \\
  \texttt{achanish@amazon.com} \\
  \And
  Kris W.\ Pan \\
  Amazon Web Services \\
  \texttt{kriswpan@amazon.com} \\
  \And
  Brian Verkhovsky \\
  Amazon Web Services \\
  \texttt{bverkhov@amazon.com} \\
}

\date{}

\begin{document}
\maketitle

\begin{awsabstract}
The LLM Jury, a \emph{Panel of LLM Evaluators}
(\textsc{PoLL})~\citep{verga2024replacing} reporting consensus
scores, has become a practical alternative to single judge LLM
evaluation, yet its statistical behaviour remains poorly
understood.
We formalize the LLM Jury setup under the Huber contamination model,
and show that \textsc{PoLL} incurs unbounded bias under any positive
contamination, regardless of jury size, whenever a single judge
fails in a biased, LLM-typical way (mode collapse, sycophancy,
safety refusal).
We frame the jury consensus problem as an instance of classical
robust mean estimation and
propose \textsc{RoPoLL} (\textbf{Ro}bust \textbf{P}anel
\textbf{o}f \textbf{LL}M-as-Judge), which preserves the
\textsc{PoLL} panel and substitutes the aggregation function with a robust mean estimator.
Among classical robust estimators, we instantiate \textsc{RoPoLL}
with the geometric median (GM), a tuning-free,
joint-distance-preserving mean estimator that yields the optimal finite-sample breakdown
point $1/2$.
We establish a finite-sample error bound and an
information-theoretic minimax lower bound that match on the
parametric rate $\sigma\sqrt{d/N}$ and differ on the breakdown
floor by a factor of $\sqrt{d}$ -- a
statistical-computational gap that polynomial-time \textsc{RoPoLL}
pays relative to the (intractable) Tukey halfspace median.
Across $13$ open-weight judges ($4$\,B--$675$\,B), three
reward-model benchmarks, and four corruption regimes at rates up
to $50\%$, \textsc{RoPoLL} dominates \textsc{PoLL} on every biased
corruption type: by $\approx 19\%$ on cross-dimensional attacks at
matched compute, and by orders of magnitude on heavy-tailed
Byzantine adversaries (whose unbounded first moments make any
breakdown-positive aggregator beat averaging unconditionally).
A $3$-judge \textsc{RoPoLL} committee at $38$\,B beats
Mistral-Large-3 ($675$\,B) by $1.31\times$ on HelpSteer-2 under
$30\%$ \texttt{bimodal-random} corruption -- an $18\times$ parameter
advantage with strictly better accuracy.
A Noisy-GT control confirms the premium is paid against
\emph{biased} contamination, not benign Gaussian imprecision (where
\textsc{PoLL} is statistically optimal).
Overall, we establish that robust aggregation of a small, diverse committee is a
parameter-efficient and statistically principled alternative to
scaling a single large LLM-as-judge.
\end{awsabstract}

\tableofcontents


\section{Introduction}
\label{sec:intro}

Reliable evaluation remains the bottleneck in aligning Large
Language Models (LLMs).
Human evaluation, while the gold standard, does not scale to the
iterative development cycles that modern alignment pipelines
demand.
The field has therefore converged on the \emph{LLM-as-a-Judge}
paradigm \citep{zheng2023judging}, in which another LLM
(typically a frontier model) acts as a referee, scoring outputs
along one or more quality attributes.
Subsequent work has trained open judges to match this behaviour
\citep{kim2024prometheus} and standardised rubric-based evaluation
protocols
\citep{li2023alpacaeval,dubois2024alpacafarm,ye2024flask}.
A single judge, however, is a single point of statistical
failure.
The systematic biases its backbone exhibits, e.g., position, verbosity,
self-enhancement, sycophancy, and refusal artefacts, are by now
well documented
\citep{wang2023large,panickssery2024llm,saito2023verbosity,stureborg2024large};
they propagate uncorrected to every score, and the cost-quality
profile of the resulting evaluation is fixed to that of the single
model.

A natural remedy is to evaluate by committee.
The \emph{LLM Jury}, instantiated by the \emph{Panel of LLM
Evaluators} (\textsc{PoLL}) of \citet{verga2024replacing},
ensembles smaller, diverse, cheaper backbones and reports the
arithmetic mean of their scores as the consensus---sufficient, in
their experiments, to match or exceed a single large judge.
Related multi-model evaluators include peer-rank discussion
\citep{li2024prd}, multi-agent debate \citep{chan2024chateval},
and deeper/wider judge networks \citep{zhang2024wider};
these vary the panel structure but inherit \textsc{PoLL}'s
aggregation rule.
\textsc{PoLL} is the optimal aggregator precisely when judge errors
are light-tailed and centered on the truth, in which case averaging
$N$ judges contracts the variance at the parametric rate $1/N$
(Proposition~\ref{prop:variance_reduction},
\S\ref{sec:variance_reduction});
\Cref{fig:p5_jury_vs_individual_clean} (\S\ref{sec:exp_clean_baseline})
shows the clean-baseline parameter-efficiency this delivers
empirically.

\paragraph{The problem: Byzantine failures, not Gaussian noise.}
Real LLM judges fail in ways that are nothing like Gaussian noise.
A judge that produces malformed JSON triggers a parser fallback
to the all-zeros score, dropping a single observation onto the
boundary of the score space.
A judge with sycophancy bias rates every response near the maximum,
flattening genuine quality differences.
A judge that handles one attribute well may catastrophically
mis-score another, producing a vector that is plausible per axis
yet jointly anomalous.
A judge whose parser hallucinates can emit values entirely outside
the bounded score scale.
These four failure modes---\emph{mode collapse}, \emph{sycophancy},
\emph{cross-attribute confusion}, and \emph{heavy-tailed
hallucination}---are all \emph{biased point masses far from the
truth}, not symmetric perturbations of it, and each occurs in real
deployments at non-trivial rates: in our corpus, parser-failure
alone reaches $33\%$ on the smallest judge (Gemma-4B) for
HelpSteer\,3 multilingual prompts, with mean rates of $3.4\%$ on
HelpSteer\,3 and $0.6\%$ on HelpSteer\,2 across the $13$-judge
panel (\Cref{fig:corpus_failure_rates}, \S\ref{sec:huber}).

This is the regime the classical robust-statistics literature
\citep{huber1964robust,tukey1960survey,small1990survey,vardi2000multivariate,
minsker2015geometric,lugosi2019sub}
and Byzantine-robust optimisation literature
\citep{blanchard2017machine,yin2018byzantine,elmhamdi2018hidden,
acharya2022robust,acharya2025geometric}
identify as the wrong regime for \textsc{PoLL}-style aggregation.
The Huber $\eps$-contamination model
(Assumption~\ref{asm:huber}) admits all four failure modes as
specific instantiations of the contamination distribution $Q_{i}$
(\texttt{zeros}, \texttt{inverted}, \texttt{bimodal-random}, and
\texttt{cauchy-far}; mapped explicitly in \S\ref{sec:huber} and
evaluated in \S\ref{sec:exp_cauchy}--\ref{sec:exp_bounded}), and
a direct calculation
(Proposition~\ref{prop:mean_bias}, \S\ref{sec:mean_fragility})
shows that under \emph{any} positive contamination rate
\textsc{PoLL}'s conditional bias grows linearly with the corruption
shift and is unbounded over the corruption class, regardless of
$N$:
the $1/N$ variance reduction that motivates juries cannot rescue
an aggregator whose bias is itself unbounded.

\begin{figure*}[t]
\centering
\begin{subfigure}{0.32\textwidth}
    \centering
    \includegraphics[width=\textwidth]{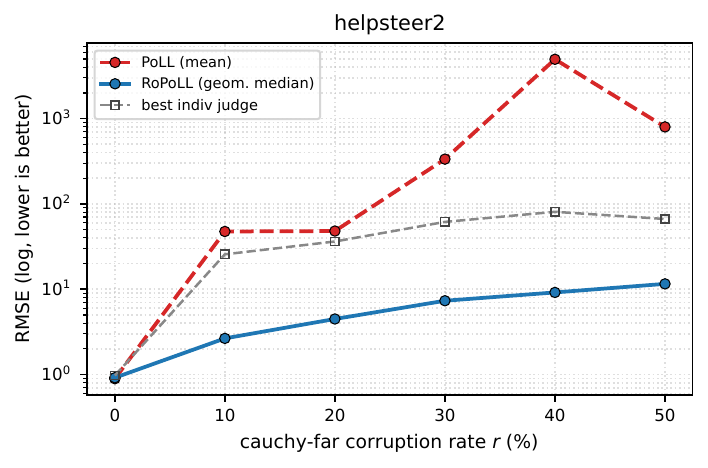}
    \caption{HelpSteer\,2}
\end{subfigure}\hfill
\begin{subfigure}{0.32\textwidth}
    \centering
    \includegraphics[width=\textwidth]{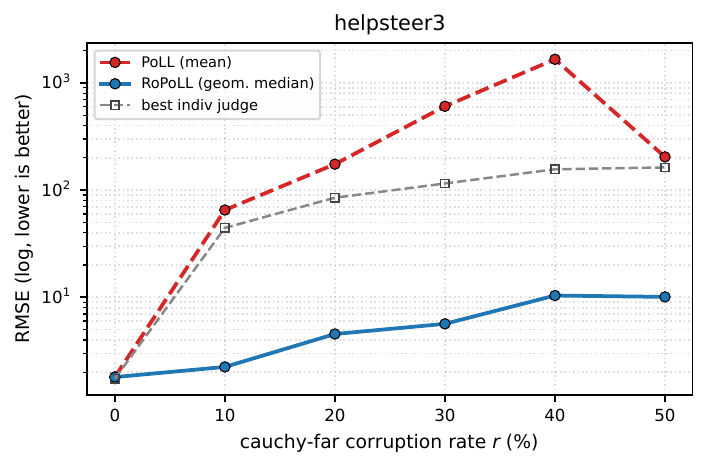}
    \caption{HelpSteer\,3}
\end{subfigure}\hfill
\begin{subfigure}{0.32\textwidth}
    \centering
    \includegraphics[width=\textwidth]{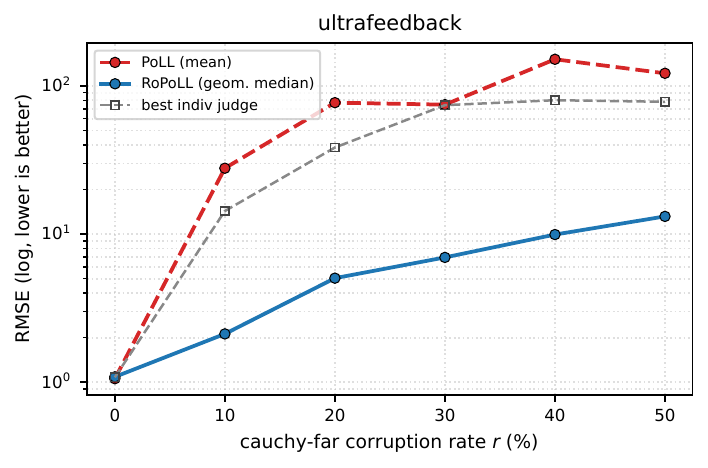}
    \caption{UltraFeedback}
\end{subfigure}
\caption{\textbf{\textsc{PoLL} vs.\ \textsc{RoPoLL} under heavy-tailed
\texttt{cauchy-far} corruption.}
RMSE vs.\ per-case corruption rate $r$ (log $y$-axis) for the
\textsc{Medium} jury ($N{=}3$, ${\approx}89$\,B), with the best single
open-weight judge as a gray dashed reference; coordinate-wise
\textsc{Median} is competitive with \textsc{RoPoLL} here and is
omitted (full three-method comparison in
\Cref{fig:p7_method_contrast}). Each corrupted slot is drawn as
$\hat{y} = y^{\star} + 10 + 2(s_{\max}{-}s_{\min})\,T$ with $T$
component-wise standard Cauchy: a biased heavy-tailed Byzantine
attack with undefined mean and variance, instantiating the
adversarial choice in Proposition~\ref{prop:mean_bias}.}
\label{fig:p1_cauchy_hero}
\end{figure*}

\paragraph{Overview of our approach.}
We propose \textsc{RoPoLL} (\textbf{Ro}bust \textbf{P}anel \textbf{o}f
\textbf{LL}M-as-Judge), a drop-in replacement for the
arithmetic-mean aggregation step of \textsc{PoLL} with a robust
mean estimator.
Among classical candidates---the coordinate-wise median (CoMed),
the trimmed mean, and the geometric median (GM)---only GM is
simultaneously \emph{tuning-free} (no contamination-rate hyperparameter
unlike the trimmed mean), \emph{joint-distance preserving} (operates
on Euclidean distance over the full score vector unlike CoMed,
which decouples coordinates and misses cross-attribute structure
of Example~\ref{ex:cross_dim}), and attains the optimal $1/2$
breakdown point (Definition~\ref{def:breakdown},
Proposition~\ref{prop:gm_properties});
the comparison is developed in detail in \S\ref{sec:choose_estimator}.
We instantiate \textsc{RoPoLL} with the geometric median
(Definition~\ref{def:gm}), computed via the modified Weiszfeld
iteration (Algorithm~\ref{alg:ropoll}, \S\ref{sec:weiszfeld}) at
$O(Nd\log(1/\eps))$ per query.
CoMed and the trimmed mean serve as empirical baselines in
\S\ref{sec:benchmark}.

\paragraph{Contributions.}
\begin{itemize}[leftmargin=*,topsep=2pt,itemsep=2pt]
    \item \textbf{Formalisation.}
    We give the first formal treatment of LLM jury aggregation as a
    robust mean-estimation problem
    (\S\ref{sec:problem_setup}):
    we model the LLM-as-Judge pipeline as a Markov kernel
    (Definition~\ref{def:judge}), define the LLM Jury
    (Definition~\ref{def:jury}), and characterise judge failures
    as Byzantine faults under the Huber contamination model
    (Assumption~\ref{asm:huber}).
    Proposition~\ref{prop:mean_bias} shows that \textsc{PoLL}
    admits unbounded bias under this model.

    \item \textbf{Algorithm and theory.}
    We propose \textsc{RoPoLL} (\S\ref{sec:methodology}) and
    establish its theoretical guarantees (\S\ref{sec:theory}):
    a finite-sample upper bound
    $\|\hat{\vy}_{\mathrm{GM}}-\vy^{\star}\|_{2} \leq C_{\alpha+\beta}\rho$
    with explicit absolute constants
    (Theorem~\ref{thm:ropoll_bound}),
    a correlated-jury extension under the equicorrelated-indicator
    hypothesis (Lemma~\ref{lem:correlated_jury}, with empirical
    indicator-correlation $\bar\gamma_{W} \in [0.45, 0.53]$
    measured on our judge panels,
    \Cref{sec:appendix_gamma_W}), and an information-theoretic
    minimax lower bound (Theorem~\ref{thm:minimax_body}) that
    matches on the parametric rate $\sigma\sqrt{d/N}$ and differs
    on the breakdown floor by a $\sqrt{d}$
    statistical--computational gap, attributed to GM's
    polynomial-time tractability relative to the (intractable)
    Tukey halfspace median.

    \item \textbf{Large-scale empirical validation.}
    We evaluate $13$ open-weight LLM judges spanning four model-size
    tiers ($4$\,B--$675$\,B parameters) on three benchmarks with
    complementary ground-truth sources: HelpSteer\,2
    \citep{wang2024helpsteer2}, HelpSteer\,3
    \citep{wang-etal-2025-helpsteer3}, and UltraFeedback
    \citep{cui2024ultrafeedback}.
    Under systematic adversarial injection at contamination rates
    up to $50\%$ (\S\ref{sec:benchmark}), \textsc{RoPoLL}
    outperforms \textsc{PoLL} by up to three orders of magnitude on
    biased heavy-tailed (\texttt{cauchy-far},
    \Cref{fig:p1_cauchy_hero}) and cross-dimensional
    (\texttt{bimodal-random},
    \Cref{fig:p2_bimodal_ropoll_hero}) attacks;
    a $3$-judge \textsc{RoPoLL} committee at $38$\,B total
    parameters beats Mistral-Large-3 ($675$\,B) by $1.31\times$
    on HelpSteer\,2 under $30\%$ \texttt{bimodal-random} corruption
    (an $18\times$ parameter advantage,
    \S\ref{sec:exp_bimodal}).
    A Noisy-GT control (\S\ref{sec:exp_noisy_gt}) rules out the
    obvious confound that the \textsc{RoPoLL} premium is paid
    against benign Gaussian imprecision rather than against biased
    contamination.

    \item \textbf{Open release of the judge-output corpus.}%
    {\renewcommand{\thefootnote}{$\dagger$}\footnote{Dataset released at \href{https://github.com/aws/RoPoLL}{https://github.com/aws/RoPoLL}.}}
    We release the full $13$-judge $\times$ three-benchmark output
    corpus---approximately $28\mathrm{K}$ scored
    $(\text{judge}, \text{sample})$ cells of parsed attribute scores,
    per-call latencies, and reference labels underlying every figure
    in \S\ref{sec:benchmark} (\S\ref{sec:exp_corpus}).
    To our knowledge this is the first standardised corpus of
    LLM-jury outputs;
    follow-up work on judge calibration, alternative aggregators,
    or new corruption families can be benchmarked against this
    fixed substrate without re-running the inference cost.
\end{itemize}

\paragraph{Paper organisation.}
\S\ref{sec:problem_setup} formalises the problem setup, including
the LLM Jury (Definitions~\ref{def:judge}--\ref{def:jury}), the
Huber contamination model (Assumption~\ref{asm:huber}, with the
empirical natural-failure-rate calibration of
\Cref{fig:corpus_failure_rates}), and the unbounded-bias result
for \textsc{PoLL} (Proposition~\ref{prop:mean_bias}).
\S\ref{sec:methodology} develops the \textsc{RoPoLL} methodology:
the choice of geometric median over coordinate-wise median and
trimmed mean (\S\ref{sec:choose_estimator}), structural
properties of GM (Proposition~\ref{prop:gm_properties}), and the
Weiszfeld iteration (Algorithm~\ref{alg:ropoll}).
\S\ref{sec:theory} states the finite-sample upper bound
(Theorem~\ref{thm:ropoll_bound}), its correlated-jury extension
(Lemma~\ref{lem:correlated_jury}), and the matching minimax lower
bound (Theorem~\ref{thm:minimax_body}).
\S\ref{sec:benchmark} presents the benchmark evaluation organised
by corruption type;
\S\ref{sec:related} situates the work in the LLM-as-Judge,
robust-statistics, and Byzantine-distributed-learning literatures;
\S\ref{sec:discussion} concludes with scope, limitations, and
follow-up directions.
The released corpus and its inter-judge correlation structure,
including the empirical $\bar\gamma_{W}$ measurement, are documented
in \S\ref{sec:benchmark}.
Full proofs and a 2D synthetic visualisation gallery are deferred to
Appendices~\ref{sec:appendix_theory}--\ref{sec:simulation}.

\section{Related Work}
\label{sec:related}

{\bf LLM-as-Judge evaluation and per-judge biases.}
The LLM-as-Judge paradigm was established by \citet{zheng2023judging}
(MT-Bench, Chatbot Arena), demonstrating that strong models such as
GPT-4 can serve as reliable proxies for human annotators.
Subsequent work has extended the paradigm along several axes:
open-source judges with fine-grained rubrics
\citep{kim2024prometheus};
automated frameworks for instruction-following models
\citep{li2023alpacaeval,dubois2024alpacafarm};
and skill-level evaluation \citep{ye2024flask}.
A parallel literature documents systematic biases of single
judges---position, verbosity, self-enhancement, sycophancy, and
prompt-format sensitivity
\citep{wang2023large,panickssery2024llm,saito2023verbosity,stureborg2024large}.
These findings motivate the use of diverse judge panels but treat
each judge in isolation;
no prior work analyzes the \emph{aggregation} step or its failure
modes.

{\bf Jury and panel evaluation.}
\citet{verga2024replacing} introduced the Panel of LLM Evaluators
(\textsc{PoLL}), our direct predecessor: a diverse committee of
smaller backbones aggregated by the arithmetic mean.
Their work established the practical value of LLM juries but did not
analyze robustness;
the mean aggregator is used without justification, and no failure
modes are considered.
\citet{zhang2024wider} studied how panel width and depth affect
evaluation fairness, again without robustness guarantees.
The key gap across this literature is the absence of any analysis of
catastrophic failure modes or formal robustness properties of the
aggregation rule.
Our Proposition~\ref{prop:mean_bias} closes this gap: under any
positive contamination rate \textsc{PoLL}
\citep{verga2024replacing} admits unbounded bias regardless of $N$.

{\bf Multi-agent debate and structured aggregation.}
A distinct family of multi-judge methods produces aggregated
judgments through structured \emph{interaction} rather than
independent scoring.
\citet{li2024prd} propose peer-rank discussion among judges, in
which each judge sees others' scores and updates its own;
\citet{chan2024chateval} propose multi-agent debate, in which judges
argue over a verdict before consensus.
These methods change the joint distribution of
$(\hat{\vy}_{1}, \ldots, \hat{\vy}_{N})$---they introduce dependence
by design, breaking Assumption~\ref{asm:independence}---and trade
independence for deliberation-driven error reduction.
Whether they exhibit the same Byzantine-failure mode as \textsc{PoLL}
is an open question.
The Huber-contamination analysis of this paper does not directly
apply to such interactive aggregators, but the corruption-class
diagnosis (point masses far from the truth) likely transfers,
suggesting robust extensions of debate-based aggregation as a future
direction.
Majority voting in mathematical reasoning
\citep{cobbe2021training} is a related but coarser ensemble
technique on binary correctness;
the analogue of Proposition~\ref{prop:mean_bias} for vote-based
aggregation on $\{0, 1\}$ outputs is the standard $\alpha < 1/2$
Byzantine threshold.

{\bf Calibration as a complementary paradigm.}
A separate line of work removes judge bias \emph{at the source} via
per-judge calibration on a labeled validation slice
\citep{zheng2023judging}.
Calibration assumes a stationary, recoverable bias and trades
worst-case guarantees for average-case efficiency;
\textsc{RoPoLL} assumes nothing on the corruption distribution and
pays a constant-factor insurance premium to bound the worst case.
The two are complementary: \textsc{RoPoLL} can aggregate calibrated
scores, and the calibration-RoPoLL composition---together with
extensions to heterogeneous, correlated, and dependent juries---is
left to future work.

{\bf Robust statistics and the geometric median.}
The Huber contamination model \citep{huber1964robust} and the
breakdown point \citep{tukey1960survey} are the classical framework
for estimation under arbitrary corruption.
The geometric median attains the optimal $1/2$ breakdown for any
translation-equivariant estimator
\citep{lopuhaa1991breakdown,small1990survey,vardi2000multivariate};
in high dimensions, \citet{minsker2015geometric} established
sub-Gaussian concentration for the geometric median of
means---the result Theorem~\ref{thm:ropoll_bound} adapts to
contaminated juries---and \citet{lugosi2019sub} developed
sub-Gaussian mean estimators with optimal dimension dependence.
Recent applications to ML pipelines include block-coordinate GM
descent for robust training \citep{acharya2022robust} and GM
Matching for robust subset selection
\citep{acharya2025geometric};
\citet{acharya2025robust} surveys robust learning from noisy data.
Our setting differs from this literature on three axes:
\emph{(i)~low dimension} ($d \in \{4, 5\}$ evaluation attributes,
so the $\sqrt{d/N}$ rate is dominated by constants and the
$1/(1-2\alpha)$ contamination factor is the load-bearing
dependence);
\emph{(ii)~structured contamination} ($Q_{i}$ arises from specific
LLM failure modes---parser fallback, sycophancy, refusals,
cross-attribute confusion---which inform the four empirical
corruption types in \S\ref{sec:benchmark});
and \emph{(iii)~heterogeneous workers} (per-judge
$\sigma_{i}, \alpha_{i}$ vary across the panel, outside the i.i.d.\
regime that the classical robust-statistics literature targets).
Among broader alternatives in the robust-aggregation toolbox, the
\emph{half-space (Tukey) median} attains the optimal breakdown
$1/2$ in any dimension but is NP-hard to compute and prohibitive at
$d \geq 5$ \citep{small1990survey};
\emph{median of means} \citep{lugosi2019sub} targets heavy-tailed
data rather than Huber contamination concentrated in a minority of
judges;
the geometric median's tuning-free $1/2$ breakdown, joint-distance
objective, and $O(Nd\log(1/\eps))$ cost make it the right default
for the small-$d$, small-$N$, heterogeneous-worker, one-shot regime
that LLM juries occupy.
A systematic empirical comparison against the broader family is
left to future work (\S\ref{sec:discussion}).

{\bf Byzantine-robust distributed learning.}
The connection between robust aggregation and Byzantine fault
tolerance has been worked out in distributed optimization:
Krum \citep{blanchard2017machine}, coordinate-wise median and
trimmed mean as gradient aggregators \citep{yin2018byzantine}, and
Bulyan \citep{elmhamdi2018hidden}.
This literature targets $N$ from tens to thousands of workers, with
adversarial perturbations composed across thousands of training
rounds.
The LLM-jury setting shares the mathematical structure but differs
operationally on three axes:
\emph{(a)~small $N$} (juries operate at $N \in \{3,\dots,13\}$
where every judge is materially expensive, requiring tight
finite-sample guarantees);
\emph{(b)~per-sample heterogeneity} (the contamination indicator
$Z_{i}$ is conditional on the prompt-response pair $x$, not per
round);
\emph{(c)~no iterative learning loop} (LLM-jury aggregation is
one-shot at evaluation time, so the per-instance bias bound matters
directly rather than its cumulative effect across rounds).
These differences explain why our analysis emphasizes finite-sample
distribution-free guarantees over the corruption class
(Theorem~\ref{thm:ropoll_bound});
the heterogeneity of the worker pool, judge correlation, and
explicit dependence (in debate-based methods) are left to future
work and have no direct analogue in the Byzantine
distributed-learning literature.

\medskip
\noindent
To our knowledge, we are the first to formalize LLM jury
aggregation as a robust estimation problem, prove finite-sample
contamination guarantees in this setting, and evaluate robustness
systematically against both natural and adversarial judge failures
at scale.

\section{Problem Setup}
%
\label{sec:problem_setup}

We evaluate a system agent $\gM : \gP \to \gR$ that maps prompts to
responses.
For each evaluation instance $x = (p, r) \in \gX \triangleq \gP \times \gR$
the goal is to estimate a vector of attribute scores describing how
good the response $r$ is for the prompt $p$.

\subsection{System Agent and Reward Space}
\label{sec:reward}

Let $\gP$ and $\gR$ denote the spaces of admissible natural-language
prompts and responses, and define the \emph{instance space}
$\gX \triangleq \gP \times \gR$.
The model under evaluation is the \emph{system agent}
$\gM : \gP \to \gR$, $p \mapsto \gM(p)$.
Given a prompt $p$, the realized response is $r = \gM(p)$ and the
evaluation instance is $(p, r) \in \gX$.
Any stochasticity in the underlying generation procedure is immaterial
for the development below, which is carried out conditional on the
realized pair $(p, r)$.

\begin{definition}[Reward]
\label{def:reward}
Fix $d \in \mathbb{N}$ and write $[d] \triangleq \{1, \ldots, d\}$.
For each $k \in [d]$, let $\gY^{(k)}$ be a measurable space encoding
the admissible judgements for attribute $k$.
The \emph{reward space} is the Cartesian product
$\gY \triangleq \prod_{k=1}^{d} \gY^{(k)}$, and a \emph{reward}
associated with $(p, r) \in \gX$ is a vector
$\vy = (y^{(1)}, \ldots, y^{(d)}) \in \gY$
where $y^{(k)} \in \gY^{(k)}$ records the judgement
of $r$ on attribute $k$.
\end{definition}

Definition~\ref{def:reward} does not impose a common structure across
the coordinate spaces, and typical instantiations include bounded
scalars ($\gY^{(k)} = [0, K_{k}]$), categorical or ordinal labels
($\gY^{(k)} = \{c_{1}, \ldots, c_{L}\}$), and free-form text
($\gY^{(k)} = \gR$).
For ease of exposition we specialize throughout the paper to the
\emph{homogeneous bounded-scalar} setting: there exists $K > 0$ with
$\gY^{(k)} = [0, K]$ for every $k \in [d]$, so
\begin{equation}
\label{eq:scalar_reward_space}
    \gY \;=\; [0, K]^{d} \;\subset\; \R^{d}.
\end{equation}

\begin{assumption}[Latent Reward Functional]
\label{asm:latent_reward}
Under~\eqref{eq:scalar_reward_space} there exists a measurable map
\begin{equation}
    \vy^{\star} : \gX \to [0, K]^{d},
    \qquad
    \vy^{\star}(x)
    = \big(y^{\star(1)}(x), \ldots, y^{\star(d)}(x)\big),
\end{equation}
called the \emph{latent reward functional}, such that $\vy^{\star}(x)$
is the canonical attribute-wise assessment of response $r$ to prompt
$p$ under the reference evaluation protocol
(Definition~\ref{def:reference_protocol}).
The components $y^{\star(k)}(x) \in [0, K]$ are unobservable.
\end{assumption}

\subsection{Reference Protocol, Rubric, and Parser}
\label{sec:pipeline}

Because $\vy^{\star}$ is unobservable, evaluation must proceed through
an observable reference protocol.

\begin{definition}[Reference Protocol]
\label{def:reference_protocol}
A \emph{reference protocol} is a Markov kernel from $\gX$ to
$[0, K]^{d}$
\citep{billingsley1995probability,dudley2002real,kallenberg2002foundations}:
$\mathcal{A} : \gX \rightsquigarrow [0, K]^{d}$,
meaning that for each $x \in \gX$, $\mathcal{A}(\cdot \mid x)$ is a
probability measure on $[0, K]^{d}$ and, for each Borel set
$B \subseteq [0, K]^{d}$, the map $x \mapsto \mathcal{A}(B \mid x)$ is
measurable.
We interpret $\mathcal{A}(\cdot \mid x)$ as the distribution of the
reference label assigned to $x$.
Given evaluation instances $x_{1}, \ldots, x_{M} \in \gX$ with
$x_{j} = (p_{j}, r_{j})$, the corresponding benchmark dataset is
\begin{equation}
\label{eq:benchmark_dataset}
    \gD = \{(x_{j}, \vy_{j}^{\mathrm{ref}})\}_{j=1}^{M},
    \qquad
    \vy_{j}^{\mathrm{ref}} \sim \mathcal{A}(\cdot \mid x_{j}).
\end{equation}
The protocol $\mathcal{A}$ may encode expert human annotation, an
aggregation of multiple human judgements, or a designated reference
model.
In the noiseless idealization
$\mathcal{A}(\cdot \mid x) = \delta_{\vy^{\star}(x)}$, so
$\vy_{j}^{\mathrm{ref}} = \vy^{\star}(x_{j})$ for every $j \in [M]$.
The reference labels are used only for evaluation and are not
available to the predictors under study.
\end{definition}

\begin{definition}[Rubric]
\label{def:rubric}
A \emph{rubric} is a natural-language specification $\rho \in \gP$
that fixes:
(i) the collection of $d$ evaluation attributes and their semantics;
(ii) the score range $[0, K]$ associated with each attribute;
and (iii) the output schema from which scores are extracted.
Associated with $\rho$ is a deterministic encoding map
$\operatorname{enc}_{\rho} : \gX \to \gP$, which serializes an
evaluation instance $x = (p, r)$ into the prompt presented to the
judging model.
\end{definition}

\begin{definition}[LLM-As-Judge]
\label{def:judge}
An \emph{LLM judge} is a triplet $f = (\gM_{f},\, \rho,\, \phi)$,
where (i) $\gM_{f} : \gP \rightsquigarrow \gR$ is a backbone language
model viewed as a Markov kernel from prompts to raw textual outputs;
(ii) $\rho$ is a rubric (Definition~\ref{def:rubric});
and (iii) $\phi : \gR \to \R^{d}$ is a measurable deterministic
parser that extracts a score vector from the raw text.
For an evaluation instance $x \in \gX$, the induced pipeline is
\begin{equation}
\label{eq:judge_pipeline}
    x \;\xrightarrow{\;\operatorname{enc}_{\rho}\;}\; \operatorname{enc}_{\rho}(x)
    \;\xrightarrow{\;\gM_{f}\;}\; T_{f}
    \;\xrightarrow{\;\phi\;}\; \hat{\vy}_{f}(x) \;\in\; \R^{d},
\end{equation}
where $T_{f} \sim \gM_{f}(\,\cdot \mid \operatorname{enc}_{\rho}(x))$
and $\hat{\vy}_{f}(x) = \phi(T_{f})$.
Equivalently, $f$ induces a Markov kernel
$f : \gX \rightsquigarrow \R^{d}$ via
$f(B \mid x)
= \gM_{f}\!\big(\{t \in \gR : \phi(t) \in B\} \mid \operatorname{enc}_{\rho}(x)\big)$
for every Borel set $B \subseteq \R^{d}$.
\end{definition}

\begin{remark}[Operational Stochasticity]
\label{rem:stochasticity}
Even under deterministic API settings (temperature $\tau = 0$), the
induced law $f(\cdot \mid x)$ need not be degenerate.
Effective randomness arises from non-determinism in the inference
stack, sensitivity to prompt serialization, and post-processing
branching;
modeling the judge at the level of the induced kernel $f$ absorbs
these effects without committing to a particular source of
randomness.
\end{remark}

\begin{remark}[Parser-Induced Atoms]
\label{rem:parser}
The parser $\phi$ is part of the estimator, not merely an
implementation detail.
In practice, $\phi$ maps malformed outputs, refusals, or missing
fields to a fixed fallback vector such as $\vzero$, so the law of
$\hat{\vy}_{f}(x)$ may contain non-trivial point masses even when
$T_{f}$ has a diffuse generation law.
This is the formal counterpart of the \emph{mode collapse} failure
mode discussed in \S\ref{sec:intro} and is the mechanism by which
backbone-level Gaussian noise becomes parser-level Huber
contamination.
\end{remark}

\subsection{LLM Jury and Aggregation Function}
\label{sec:jury}

\begin{definition}[\bf LLM Jury]
\label{def:jury}
A \emph{jury} is a finite collection of $N$ LLM judges
$\gJ = \{f_{1}, \ldots, f_{N}\}$ sharing a common rubric $\rho$ and
parser $\phi$ but employing distinct backbones
$\{\gM_{f_{i}}\}_{i=1}^{N}$.
On instance $x$, the jury produces score vectors
$\{\hat{\vy}_{1}, \ldots, \hat{\vy}_{N}\} \subset \R^{d}$.
\end{definition}

\begin{definition}[\bf Aggregation Function]
\label{def:aggregation}
An \emph{aggregation function} is a measurable map
$\gA : (\R^{d})^{N} \to \R^{d}$ producing a consensus estimate
$\hat{\vy}_{\mathrm{agg}} = \gA(\hat{\vy}_{1}, \ldots, \hat{\vy}_{N})$.
The objective is to minimize
$\|\hat{\vy}_{\mathrm{agg}} - \vy^{\star}\|_{2}$ uniformly over the
evaluation distribution.
\end{definition}

The central question of this work is: \emph{which $\gA$ remains
accurate when judges fail in arbitrary, possibly adversarial ways?}
To answer it formally, we adopt the classical contamination model
from robust statistics.

\subsection{Huber Contamination Model and Companion Assumptions}
\label{sec:huber}

\begin{assumption}[\bf Huber $\eps$-Contamination Model]
\label{asm:huber}
Each judge $f_{i} \in \gJ$ has a contamination rate
$\alpha_{i} \in [0, 1)$, and the conditional law of $\hat{\vy}_{i}$
given $\vy^{\star}$ is the mixture
\begin{equation}
\label{eq:huber}
    \hat{\vy}_{i} \,\sim\, (1 - \alpha_{i})\, P_{i} + \alpha_{i}\, Q_{i},
\end{equation}
where the \emph{competent} component $P_{i}$ is unbiased, with
$\E_{P_{i}}[\hat{\vy}_{i}] = \vy^{\star}$ and finite second moment,
and the \emph{corruption} component $Q_{i}$ is an arbitrary
distribution on $\R^{d}$.
At each evaluation an indicator
$Z_{i} \sim \mathrm{Bernoulli}(\alpha_{i})$ selects which component
generates $\hat{\vy}_{i}$.
\end{assumption}

\paragraph{Concrete instantiations of $Q_{i}$.}
The unrestricted $Q_{i}$ admits, as special cases, every LLM-judge
failure mode reported in the single-judge bias literature:
\emph{mode collapse} ($Q_{i} = \delta_{\vzero}$, the parser-fallback
vector emitted on malformed JSON or safety refusals);
\emph{sycophancy} ($Q_{i} = \delta_{K \cdot \vone}$, near-maximum
scores assigned indiscriminately,
\citealp{wang2023large,stureborg2024large});
\emph{anti-correlated Byzantine attacks}
($Q_{i} = \delta_{K\vone - \vy^{\star}}$, mirror-image scores);
\emph{cross-attribute confusion} ($Q_{i} = \mathrm{Unif}\{0,K\}^{d}$,
each coordinate plausible per axis but jointly anomalous,
matching the cross-dimensional failure mode of
Example~\ref{ex:cross_dim});
and \emph{heavy-tailed adversaries} ($Q_{i}$ Cauchy or otherwise
unbounded, modelling parser hallucinations of arbitrarily large
scores).
The four synthetic regimes evaluated in \S\ref{sec:benchmark}
(\texttt{zeros}, \texttt{inverted}, \texttt{bimodal-random},
\texttt{cauchy-far}) instantiate these four $Q_{i}$ choices
respectively;
$\alpha_{i}$ encodes the per-judge unreliability and is expected to
decrease with backbone capacity.

\paragraph{Empirical grounding.}
Naturally-occurring parser failures (the
$Q_{i} = \delta_{\vzero}$ instantiation above) are not hypothetical:
across our $13$-judge $\times$ benchmark grid
(\Cref{fig:corpus_failure_rates}),
mean failure rates span $0.59\%$ on HelpSteer\,2 and $3.38\%$ on
HelpSteer\,3, with the smallest judge (Gemma-4B) failing on $33\%$ of
HS\,3 multilingual signed-preference samples.
The deployment regime is therefore dataset-dependent across one to
two orders of magnitude in $\alpha$, and the
distribution-free contamination class
$\{Q_{i}\}$ is the right object of study.

\begin{figure*}[t]
\centering
\includegraphics[width=0.95\textwidth]{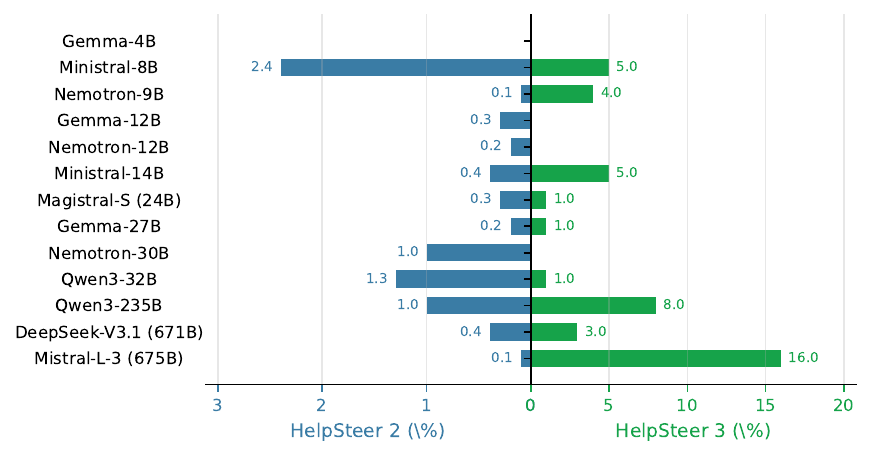}
\caption{\textbf{Naturally-occurring parser-failure rates motivate
the contamination model.}
Horizontal bars per judge (sorted by parameter count, top = smallest)
restricted to the $13$-judge pool common to both benchmarks
(Claude-Opus/Sonnet/Haiku-4.5 are HS\,3-only and excluded here for
panel alignment; their HS\,3 statistics appear in
Table~\ref{tab:corpus_stats}).
The natural failure regime is \emph{dataset-dependent}: $0.59\%$
mean on HelpSteer\,2 and $3.38\%$ mean on HelpSteer\,3---with the
smallest judge (Gemma-4B) failing on $33\%$ of HS\,3 multilingual
signed-preference samples (full $16$-judge pool).
Each parser-failure event is a Dirac mass at the fallback vector
$\vzero$, instantiating $Q = \delta_{\vzero}$ in
Assumption~\ref{asm:huber} (mode collapse).
Naturally-occurring rates already span $0\%$ to $33\%$, motivating
the synthetic sweep $r \in [0\%, 50\%]$ studied in
\S\ref{sec:benchmark}, which covers this natural regime and
stress-tests beyond.}
\label{fig:corpus_failure_rates}
\end{figure*}

\begin{assumption}[Conditional Independence]
\label{asm:independence}
Conditioned on $\vy^{\star}$, the judge outputs
$\hat{\vy}_{1}, \ldots, \hat{\vy}_{N}$ are mutually independent.
\end{assumption}

\begin{remark}[i.i.d.\ as baseline; correlated extension]
\label{rem:correlated_extension}
Assumption~\ref{asm:independence} is the standard i.i.d.\ baseline
of robust statistics.
Real LLM juries trained on overlapping corpora violate this:
inter-judge correlation
$\bar\gamma \in [0.3, 0.7]$ is typical
(\Cref{fig:corpus_judge_corr,sec:appendix_dataset_correlation}).
Lemma~\ref{lem:correlated_jury} (\S\ref{sec:upper_bound})
extends Theorem~\ref{thm:ropoll_bound} to the equicorrelated case
and shows that the breakdown structure ($C_{\alpha+\beta}$ and
$\rho$) is unchanged;
only the high-probability event weakens, from
$1-\exp(-N\beta^{2}/2)$ (Hoeffding under independence) to
$1 - 1/(\beta^{2}N_{\mathrm{eff}})$ (Chebyshev under correlation),
with $N_{\mathrm{eff}} = N/(1+(N-1)\bar\gamma_{W})$.
\end{remark}

\begin{assumption}[Sub-Gaussian Competent Noise]
\label{asm:subgaussian}
For each judge $f_{i}$, the competent component $P_{i}$ of
Assumption~\ref{asm:huber} is $\sigma_{i}^{2}$-sub-Gaussian:
for all $\vu \in \mathbb{S}^{d-1}$,
\begin{equation}
\label{eq:subgaussian}
    \E_{P_{i}}\!\Big[\exp\!\big(\lambda\, \vu^{\top}(\hat{\vy}_{i} - \vy^{\star})\big)\Big]
    \;\leq\; \exp\!\big(\lambda^{2} \sigma_{i}^{2} / 2\big),
    \qquad \forall\, \lambda \in \R.
\end{equation}
The parameter $\sigma_{i}^{2}$ is the per-judge \emph{skill} parameter.
\end{assumption}

\begin{remark}[Sub-Gaussian Is Strictly Weaker than Gaussian]
\label{rem:subg_is_weaker}
Assumption~\ref{asm:subgaussian} is automatically satisfied by any
distribution supported on a bounded set, and in particular by every
score distribution arising from a parser with codomain $[0, K]^{d}$
restricted to its \emph{competent} regime.
The boundedness of the response scale therefore makes the
sub-Gaussian assumption non-restrictive in our setting.
\end{remark}

\begin{assumption}[\bf Minority Corruption]
\label{asm:minority}
The \emph{effective contamination fraction}
\begin{equation}
\label{eq:alpha_global}
    \alpha \;\triangleq\; \frac{1}{N}\sum_{i=1}^{N}\alpha_{i}
\end{equation}
satisfies $\alpha < 1/2$.
\end{assumption}

\begin{remark}[Tightness of Minority Corruption]
\label{rem:minority_tight}
The threshold $\alpha < 1/2$ is {\bf information-theoretically tight}:
if corrupted judges form a majority they can simulate any target law
and $\vy^{\star}$ becomes unidentifiable without further structure on
$\{Q_{i}\}$.
\end{remark}

\subsection{Observation Model and Variance Reduction}
\label{sec:variance_reduction}

Collecting the assumptions above, the complete observation model for
a single evaluation instance is
\begin{equation}
\label{eq:full_model}
\boxed{
    \hat{\vy}_{i}
    \;=\;
    (1 - Z_{i})\,(\vy^{\star} + \bm{\epsilon}_{i})
    \;+\;
    Z_{i}\, \bm{\eta}_{i},
    \qquad
    i = 1, \ldots, N,
}
\end{equation}
where $Z_{i} \sim \mathrm{Bernoulli}(\alpha_{i})$ are independent
latent corruption indicators,
$\bm{\epsilon}_{i} \sim P_{i} - \vy^{\star}$ is zero-mean and
$\sigma_{i}^{2}$-sub-Gaussian (Assumption~\ref{asm:subgaussian}), and
$\bm{\eta}_{i} \sim Q_{i}$ is the arbitrary corruption noise,
independent of $\bm{\epsilon}_{i}$ and $Z_{i}$.
The statistician observes only
$\{\hat{\vy}_{1}, \ldots, \hat{\vy}_{N}\}$ and has no access to
$\{Z_{i}\}$ or $\{Q_{i}\}$.

The canonical jury aggregator is the arithmetic mean adopted by
\textsc{PoLL}~\citep{verga2024replacing}:
\begin{equation}
\label{eq:mean_aggregation}
    \hat{\vy}_{\mathrm{mean}}
    \;\triangleq\;
    \frac{1}{N}\sum_{i=1}^{N}\hat{\vy}_{i}.
\end{equation}
On clean juries the mean enjoys the parametric variance-reduction
rate.

\begin{proposition}[\bf Variance Reduction for the Clean Jury]
\label{prop:variance_reduction}
Assume $\alpha_{i} = 0$ for all $i \in [N]$, so every judge operates
in the competent regime.
Then
$\E[\hat{\vy}_{\mathrm{mean}} \mid \vy^{\star}] = \vy^{\star}$ and
\begin{equation}
\label{eq:mean_cov_general}
    \Cov\!\left(\hat{\vy}_{\mathrm{mean}} \mid \vy^{\star}\right)
    \;=\;
    \frac{1}{N^{2}}\sum_{i=1}^{N}\sum_{j=1}^{N}
    \Cov\!\left(\hat{\vy}_{i}, \hat{\vy}_{j} \mid \vy^{\star}\right).
\end{equation}
Under Assumption~\ref{asm:independence}, the off-diagonal terms
vanish and
\begin{equation}
\label{eq:mean_cov_indep}
    \Cov\!\left(\hat{\vy}_{\mathrm{mean}} \mid \vy^{\star}\right)
    \;=\;
    \frac{1}{N^{2}}\sum_{i=1}^{N}\mSigma_{i},
    \qquad
    \E\!\left[\left\|\hat{\vy}_{\mathrm{mean}} - \vy^{\star}\right\|_{2}^{2}
        \,\middle|\, \vy^{\star}\right]
    \;=\;
    \frac{1}{N^{2}}\sum_{i=1}^{N}\operatorname{tr}(\mSigma_{i}).
\end{equation}
If $\mSigma_{i} \preceq \sigma^{2}\mI_{d}$ uniformly, then
$\E\!\left[\left\|\hat{\vy}_{\mathrm{mean}} - \vy^{\star}\right\|_{2}^{2}
\,\middle|\, \vy^{\star}\right] \leq d\sigma^{2}/N$.
\end{proposition}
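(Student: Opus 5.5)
The plan is a direct second-moment computation, conditional on $\vy^{\star}$. Since $\alpha_{i}=0$ for every $i$, the observation model~\eqref{eq:full_model} reduces to $\hat{\vy}_{i} = \vy^{\star} + \bm{\epsilon}_{i}$. Here $\bm{\epsilon}_{i}$ is zero-mean with finite second moment, by the unbiasedness and finite-second-moment clauses of Assumption~\ref{asm:huber}. I will write $\mSigma_{i} \triangleq \Cov(\hat{\vy}_{i}\mid\vy^{\star}) = \E[\bm{\epsilon}_{i}\bm{\epsilon}_{i}^{\top}\mid\vy^{\star}]$, which is well defined for this reason.

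\emph{Step 1 (unbiasedness).} By linearity of conditional expectation, $\E[\hat{\vy}_{\mathrm{mean}}\mid\vy^{\star}] = \frac{1}{N}\sum_{i}\E_{P_{i}}[\hat{\vy}_{i}] = \vy^{\star}$.

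\emph{Step 2 (general covariance).} The covariance is bilinear, so
\[
\Cov\Big(\tfrac{1}{N}\sum_{i}\hat{\vy}_{i}\,\Big|\,\vy^{\star}\Big) = \tfrac{1}{N^{2}}\sum_{i,j}\Cov(\hat{\vy}_{i},\hat{\vy}_{j}\mid\vy^{\star}).
\]
This is~\eqref{eq:mean_cov_general}. Each cross-covariance is finite by Cauchy--Schwarz, since every $\hat{\vy}_{i}$ has a finite second moment.

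\emph{Step 3 (independent case).} Under Assumption~\ref{asm:independence}, conditional independence gives $\E[\bm{\epsilon}_{i}\bm{\epsilon}_{j}^{\top}\mid\vy^{\star}] = \E[\bm{\epsilon}_{i}\mid\vy^{\star}]\,\E[\bm{\epsilon}_{j}\mid\vy^{\star}]^{\top} = 0$ for $i\neq j$. Only the diagonal terms survive, which gives the first identity in~\eqref{eq:mean_cov_indep}. For the mean squared error, $\hat{\vy}_{\mathrm{mean}}$ is conditionally unbiased, so
\[
\E\|\hat{\vy}_{\mathrm{mean}}-\vy^{\star}\|_{2}^{2} = \operatorname{tr}\Cov(\hat{\vy}_{\mathrm{mean}}\mid\vy^{\star}).
\]
Linearity of the trace then yields $\frac{1}{N^{2}}\sum_{i}\operatorname{tr}(\mSigma_{i})$.

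\emph{Step 4 (uniform bound).} If $\mSigma_{i}\preceq\sigma^{2}\mI_{d}$, then $\operatorname{tr}(\mSigma_{i})\le d\sigma^{2}$, because the trace is monotone in the Loewner order. Summing over the $N$ judges gives $Nd\sigma^{2}/N^{2} = d\sigma^{2}/N$.

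None of these steps is a genuine obstacle. The only point needing care is bookkeeping: every expectation must be conditional on $\vy^{\star}$, and finiteness of $\mSigma_{i}$ must be justified from Assumption~\ref{asm:huber} rather than assumed. The sub-Gaussian hypothesis (Assumption~\ref{asm:subgaussian}) is not needed here, though it would alternatively supply the bound $\mSigma_{i}\preceq\sigma_{i}^{2}\mI_{d}$.
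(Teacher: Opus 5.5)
Your proposal is correct and follows the same route as the paper's proof. Linearity gives unbiasedness, bilinearity of covariance gives the general formula, conditional independence kills the cross terms, $\E[\|\hat{\vy}_{\mathrm{mean}}-\vy^{\star}\|_{2}^{2}\mid\vy^{\star}] = \operatorname{tr}\Cov(\hat{\vy}_{\mathrm{mean}}\mid\vy^{\star})$ gives the MSE identity, and trace monotonicity gives the final bound. You also justify that $\mSigma_{i}$ is finite from the finite-second-moment clause of Assumption~\ref{asm:huber}, which the paper's terser write-up leaves implicit.
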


The proof is a direct application of linearity of expectation and
bilinearity of covariance and is given in
Appendix~\ref{sec:appendix_variance_proof}.

\begin{corollary}[Effective Jury Size Under Correlation]
\label{cor:effective_jury_size}
Assume $\alpha_{i} = 0$ for all $i$ and that there exist
$\mSigma \succeq 0$ and $\gamma \in [-1/(N-1), 1]$ with
$\Cov(\hat{\vy}_{i} \mid \vy^{\star}) = \mSigma$ and
$\Cov(\hat{\vy}_{i}, \hat{\vy}_{j} \mid \vy^{\star}) = \gamma\,\mSigma$
for $i \neq j$.
Then
$\Cov(\hat{\vy}_{\mathrm{mean}} \mid \vy^{\star})
= \tfrac{1 + (N-1)\gamma}{N}\,\mSigma$
and
$\E[\|\hat{\vy}_{\mathrm{mean}} - \vy^{\star}\|_{2}^{2}\mid\vy^{\star}]
= \tfrac{1 + (N-1)\gamma}{N}\,\operatorname{tr}(\mSigma)$.
Defining the \emph{effective jury size}
$N_{\mathrm{eff}} \triangleq N / (1 + (N-1)\gamma)$, the MSE rate is
$\operatorname{tr}(\mSigma) / N_{\mathrm{eff}}$.
\end{corollary}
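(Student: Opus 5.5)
The plan is to specialize the general covariance identity \eqref{eq:mean_cov_general} of Proposition~\ref{prop:variance_reduction}, which holds without independence. With $\alpha_{i}=0$ the same linearity argument gives unbiasedness, $\E[\hat{\vy}_{\mathrm{mean}}\mid\vy^{\star}]=\vy^{\star}$. The corollary then reduces to evaluating the double sum
\[
\frac{1}{N^{2}}\sum_{i=1}^{N}\sum_{j=1}^{N}\Cov(\hat{\vy}_{i},\hat{\vy}_{j}\mid\vy^{\star})
\]
under the equicorrelated structure.

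\textbf{Step 1: split the double sum.} Separate the sum into its $N$ diagonal terms and its $N(N-1)$ off-diagonal terms. Each diagonal term equals $\mSigma$ and each off-diagonal term equals $\gamma\mSigma$. This gives
\[
\Cov(\hat{\vy}_{\mathrm{mean}}\mid\vy^{\star})=\tfrac{1}{N^{2}}\big(N+N(N-1)\gamma\big)\mSigma=\tfrac{1+(N-1)\gamma}{N}\mSigma.
\]

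\textbf{Step 2: the MSE identity.} Because the mean is conditionally unbiased, we have
\[
\E[\|\hat{\vy}_{\mathrm{mean}}-\vy^{\star}\|_{2}^{2}\mid\vy^{\star}]=\operatorname{tr}\Cov(\hat{\vy}_{\mathrm{mean}}\mid\vy^{\star}).
\]
This follows by writing the squared norm as the trace of the outer product and exchanging trace with expectation. Linearity of the trace then yields the factor $\tfrac{1+(N-1)\gamma}{N}\operatorname{tr}(\mSigma)$. Substituting the definition of $N_{\mathrm{eff}}$ gives the rate $\operatorname{tr}(\mSigma)/N_{\mathrm{eff}}$.

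\textbf{Main obstacle: well-posedness.} The calculation itself is routine; the only delicate point is the role of the admissible range $\gamma\in[-1/(N-1),1]$.

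I will note that the stacked $Nd\times Nd$ covariance matrix equals $\big((1-\gamma)\mI_{N}+\gamma\vone\vone^{\top}\big)\otimes\mSigma$. It is positive semidefinite exactly when the eigenvalues $1-\gamma$ and $1+(N-1)\gamma$ of the $N\times N$ factor are nonnegative, which holds on this range. Hence the hypothesis is consistent.

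At $\gamma=-1/(N-1)$ the denominator of $N_{\mathrm{eff}}$ vanishes. There the covariance is zero and the MSE is $0$, so $N_{\mathrm{eff}}$ should be read as $+\infty$ at that endpoint. I will remark on this rather than exclude it.

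I also expect to note that $\gamma=0$ recovers the independent case of Proposition~\ref{prop:variance_reduction}, and that $\gamma=1$ gives $N_{\mathrm{eff}}=1$, so averaging buys nothing.
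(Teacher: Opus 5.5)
Your proposal is correct and follows the paper's proof: substitute the equicorrelated structure into \eqref{eq:mean_cov_general} (diagonal terms $\mSigma$, off-diagonal terms $\gamma\mSigma$), then take traces, using conditional unbiasedness for the MSE identity. Your extra remarks on the admissible range of $\gamma$ (via the eigenvalues $1-\gamma$ and $1+(N-1)\gamma$) and on reading $N_{\mathrm{eff}}=+\infty$ at $\gamma=-1/(N-1)$ are correct and go slightly beyond what the paper states.
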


A controlled synthetic validation
(\Cref{fig:tv_gamma} of \Cref{fig:theory_validation}, \S\ref{sec:upper_bound})
confirms that empirical MSE on an equicorrelated Gaussian jury
matches the closed-form prediction
$\tfrac{1 + (N-1)\gamma}{N}\,d\sigma^{2}$ across the full range
$\gamma \in [0, 0.95]$.
The corollary has a direct implication for jury design: for any
$\gamma > 0$ the effective jury size $N_{\mathrm{eff}}$ saturates at
$1/\gamma$, so adding more judges past $N \approx 1/\gamma$ buys
essentially nothing.
With $\gamma$ in the moderate range $[0.3, 0.5]$ characteristic of
diverse but non-orthogonal LLM backbones, $N_{\mathrm{eff}}$
saturates already at $N \approx 2$--$3$, motivating the three-judge
committees throughout \S\ref{sec:benchmark}.

\subsection{Fragility of \textsc{PoLL}}
\label{sec:mean_fragility}

The next result shows that the $1/N$ variance-reduction rate of
Proposition~\ref{prop:variance_reduction} is irrelevant the moment
any contamination is present.

\begin{proposition}[\bf Unbounded Bias of \textsc{PoLL}]
\label{prop:mean_bias}
Under Assumption~\ref{asm:huber}, suppose each $Q_{i}$ has finite
first moment $\vmu_{i}^{Q} \triangleq \E_{Q_{i}}[\hat{\vy}_{i}] \in \R^{d}$.
Then
\begin{equation}
\label{eq:mean_bias_formula}
    \E\!\left[\hat{\vy}_{\mathrm{mean}} \mid \vy^{\star}\right]
    \;=\;
    \vy^{\star}
    \;+\;
    \frac{1}{N}\sum_{i=1}^{N}\alpha_{i}\!\left(\vmu_{i}^{Q} - \vy^{\star}\right),
\end{equation}
and for any $\alpha > 0$ the conditional bias
$\E[\hat{\vy}_{\mathrm{mean}} \mid \vy^{\star}] - \vy^{\star}$ cannot
be uniformly bounded under Assumption~\ref{asm:huber}, regardless of
$N$.
\end{proposition}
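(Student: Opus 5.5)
The proof has two parts: an exact computation of the conditional mean of $\hat{\vy}_{\mathrm{mean}}$, and an explicit family of corruption laws along which the bias blows up.

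\textbf{The bias formula.} I would start from the observation model~\eqref{eq:full_model}, where $\hat{\vy}_{i} = (1-Z_{i})(\vy^{\star}+\bm{\epsilon}_{i}) + Z_{i}\bm{\eta}_{i}$ and $Z_{i}$ is independent of both $\bm{\epsilon}_{i}$ and $\bm{\eta}_{i}$. Conditioning on $\vy^{\star}$ and using these independences gives
\[
\E[\hat{\vy}_{i}\mid\vy^{\star}] = (1-\alpha_{i})\bigl(\vy^{\star}+\E[\bm{\epsilon}_{i}]\bigr) + \alpha_{i}\,\vmu_{i}^{Q}.
\]
Since $P_{i}$ is unbiased, $\E[\bm{\epsilon}_{i}]=\vzero$, so this equals $\vy^{\star} + \alpha_{i}(\vmu_{i}^{Q}-\vy^{\star})$. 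Equivalently, it is the law of total expectation applied to the mixture~\eqref{eq:huber}. Integrability is guaranteed because $P_{i}$ has a finite second moment and $Q_{i}$ is assumed to have a finite first moment. Averaging over $i$ by linearity of expectation then yields~\eqref{eq:mean_bias_formula}. No independence across judges is needed for this step, so Assumption~\ref{asm:independence} is not invoked.

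\textbf{Unboundedness.} Fix any $N$ and any rates with $\alpha = \tfrac1N\sum_i\alpha_{i} > 0$. Then there is some $i_{0}$ with $\alpha_{i_{0}}>0$. Fix a unit vector $\vu$ and, for $t>0$, choose $Q_{i_{0}} = \delta_{\vy^{\star}+t\vu}$, a point mass with finite first moment. For every $i\neq i_{0}$, choose $Q_{i}=\delta_{\vy^{\star}}$ so that it contributes nothing. The formula then gives conditional bias $\tfrac{\alpha_{i_{0}}}{N}\,t\,\vu$, whose norm $\alpha_{i_{0}}t/N$ grows without bound as $t\to\infty$.

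Alternatively, taking the same shift for all judges, $Q_{i}=\delta_{\vy^{\star}+t\vu}$, gives bias exactly $\alpha t\vu$. Its norm $\alpha t$ shows the lower bound does not degrade with $N$ at all, which is the precise sense of ``regardless of $N$.'' Since Assumption~\ref{asm:huber} places no restriction on $Q_{i}$, this family is admissible, so the supremum of the bias over the corruption class is $+\infty$.

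\textbf{Main obstacle.} The only delicate point is interpretive. The corruption law may depend on $\vy^{\star}$, since it acts conditionally on $\vy^{\star}$; if one insists on $Q_{i}$ not depending on $\vy^{\star}$, I would use $\delta_{t\vu}$ instead, and the bias $\alpha(t\vu-\vy^{\star})$ still diverges because $\vy^{\star}\in[0,K]^{d}$ is bounded. One should also remark that corruptions with no finite mean (e.g.\ Cauchy) make the expectation undefined, which only strengthens the conclusion but lies outside the formula's hypothesis.
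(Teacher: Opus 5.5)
Your proof is correct and follows the paper's argument: the per-judge identity $\E[\hat{\vy}_{i}\mid\vy^{\star}] = (1-\alpha_{i})\vy^{\star} + \alpha_{i}\vmu_{i}^{Q}$ by total expectation, then linearity over judges, then a single Dirac corruption $Q_{i_0} = \delta_{\vy^{\star}+t\vu}$ on a judge with $\alpha_{i_0}>0$. The paper takes $t = NB/\alpha_{i_0}$ along $\ve_{1}$ with $Q_{i} = P_{i}$ elsewhere, so the bias is exactly $B\ve_{1}$; your all-judges variant with bias $\alpha t\vu$ is a clean alternative way to make the ``regardless of $N$'' clause explicit without rescaling by $N$.
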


\begin{proof}[Proof sketch]
The bias formula~\eqref{eq:mean_bias_formula} is immediate from
linearity of expectation and the per-judge identity
$\E[\hat{\vy}_{i} \mid \vy^{\star}]
= (1 - \alpha_{i})\vy^{\star} + \alpha_{i}\vmu_{i}^{Q}$.
For unboundedness, fix any $B > 0$ and any index $i_{0}$ with
$\alpha_{i_{0}} > 0$ (which exists since $\alpha > 0$).
Choose $Q_{i_{0}} = \delta_{\vy^{\star} + (NB/\alpha_{i_{0}})\,\ve_{1}}$
and $Q_{i} = P_{i}$ for $i \neq i_{0}$.
Then~\eqref{eq:mean_bias_formula} reduces to $B\,\ve_{1}$, so the bias
has Euclidean norm $B$;
since $B$ is arbitrary, no constant depending only on
$(\alpha, N, d, \sigma)$ can bound the bias uniformly over $\{Q_{i}\}$.
The full proof is in Appendix~\ref{sec:appendix_mean_bias_proof}.
\end{proof}

Proposition~\ref{prop:mean_bias} is the {\bf central impossibility}
motivating \textsc{RoPoLL}: variance reduction over $N$ judges is
irrelevant when the bias of the aggregator is unbounded over the
corruption class.
The construction in the proof scales the corruption mean linearly
with $N$, exactly cancelling the $1/N$ averaging---so increasing the
jury size cannot fix the problem.
We therefore seek an aggregator that simultaneously
(i) matches the $O(\sigma\sqrt{d/N})$ rate of the mean in the clean
case and (ii) has bounded error under arbitrary contamination with
$\alpha < 1/2$.
The geometric median, introduced in \S\ref{sec:methodology}, achieves
both.

\section{Robust Panel of LLM Judges}
\label{sec:methodology}
%

Proposition~\ref{prop:mean_bias} forces us to abandon the arithmetic
mean: under contamination its bias is unbounded over the corruption
class regardless of jury size $N$.
We therefore propose \textsc{RoPoLL}, a drop-in replacement for the
\textsc{PoLL} aggregation step that swaps the arithmetic mean for a
robust mean estimator.
The framework is agnostic to the choice of estimator;
we instantiate it with the geometric median, motivated below.

\subsection{Choosing the Robust Estimator}
\label{sec:choose_estimator}

Three classical robust mean estimators are natural candidates: the
\emph{coordinate-wise median} (CoMed), the \emph{trimmed mean}, and
the \emph{geometric median} (GM).

\paragraph{Coordinate-wise median.}
The coordinate-wise median applies the univariate median per
dimension, solving the separable problem
\begin{equation}
\label{eq:coord_median}
    \hat{\vy}_{\mathrm{Med}}
    \;=\; \argmin_{\vz \in \R^{d}}
        \sum_{i=1}^{N} \|\vz - \hat{\vy}_{i}\|_{1},
\end{equation}
whose $k$-th coordinate is the univariate median
$\mathrm{med}_i(\hat{y}_i^{(k)})$.
The geometric median operates on joint Euclidean distance instead.
The distinction matters when corruptions are structured across
dimensions, as the following example illustrates.

\begin{example}[Cross-Dimensional Corruption]
\label{ex:cross_dim}
Consider a jury evaluating on two attributes with ground truth
$\vy^{\star} = (2.5, 2.5)$ on $[0, 5]^{2}$.
Suppose a corrupted judge outputs
$\hat{\vy}_{\mathrm{corr}} = (0, 5)$.
Each coordinate individually lies in the plausible range $[0, 5]$,
so the coordinate-wise median treats this as unremarkable per axis.
The joint displacement
$\|\hat{\vy}_{\mathrm{corr}} - \vy^{\star}\|_{2} = \sqrt{12.5}
\approx 3.54$ is large, however, and the geometric median correctly
downweights the point (\Cref{fig:diag_crossdim}).
\end{example}

%
%
%
\begin{figure}[t]
\centering
\begin{tikzpicture}[
    every node/.style={font=\small},
    competent/.style={circle, fill=blue!55, draw=blue!75!black,
                       line width=0.3pt, inner sep=1.6pt},
    corrupted/.style={red!75!black, mark size=3pt,
                       line width=1pt},
    truthstar/.style={star, star points=5, star point ratio=2.4,
                       fill=yellow!85!orange, draw=black,
                       line width=0.4pt, minimum size=10pt, inner sep=0pt},
    radlabel/.style={font=\small, inner sep=1.3pt, fill=white,
                     fill opacity=0.92, text opacity=1, rounded corners=1pt},
    coordtick/.style={red!70!black, line width=1.0pt},
]
\def\Kbox{5.0}
\fill[blue!4] (0,0) rectangle (\Kbox, \Kbox);
\draw[gray!75!black, line width=0.7pt] (0,0) rectangle (\Kbox, \Kbox);

\foreach \v in {0, 1, 2, 3, 4, 5} {
    \draw[gray!60!black, line width=0.4pt] (\v, 0) -- (\v, -0.10);
    \draw[gray!60!black, line width=0.4pt] (0, \v) -- (-0.10, \v);
}
\node[font=\footnotesize, anchor=north] at (0,    -0.18) {$0$};
\node[font=\footnotesize, anchor=north] at (\Kbox,-0.18) {$K$};
\node[font=\footnotesize, anchor=east]  at (-0.18, 0)    {$0$};
\node[font=\footnotesize, anchor=east]  at (-0.18, \Kbox){$K$};
\node[font=\footnotesize, anchor=north] at (\Kbox/2, -0.45) {Attribute 1};
\node[font=\footnotesize, anchor=south, rotate=90]
    at (-0.55, \Kbox/2) {Attribute 2};

\coordinate (yst) at (2.5, 2.5);

\foreach \x/\y in {2.10/2.35, 2.85/2.20, 2.50/2.85} {
    \node[competent] at (\x, \y) {};
}

\coordinate (ycorr) at (0, \Kbox);
\node[corrupted] at (ycorr) {\pgfuseplotmark{x}};

\draw[->, line width=0.65pt, red!55!black, dashed]
    (yst) -- (ycorr);
\node[radlabel, text=red!55!black, anchor=center]
    at ($(yst)!0.32!(ycorr)+(0.95,-0.05)$)
    {$\|\hat{\vy}_{\mathrm{corr}}\!-\!\vy^{\star}\|_{2}\!=\!\sqrt{12.5}\!\approx\!3.54$};

\draw[coordtick] (0, -0.04) -- (0, -0.22);
\node[font=\footnotesize, text=red!70!black, anchor=north, align=center]
    at (0.0, -0.78) {$\hat{y}_{\mathrm{corr}}^{(1)}\!=\!0$};
\draw[coordtick] (-0.04, \Kbox) -- (-0.22, \Kbox);
\node[font=\footnotesize, text=red!70!black, anchor=east, align=right]
    at (-1.05, \Kbox) {$\hat{y}_{\mathrm{corr}}^{(2)}\!=\!K$};

\draw[gray!50!black, line width=0.7pt, dashed]
    (yst) -- (2.5, 0);
\draw[gray!50!black, line width=0.7pt, dashed]
    (yst) -- (0, 2.5);

\node[truthstar] (yststar) at (yst) {};
\node[font=\footnotesize, anchor=west, xshift=5pt] at (yststar) {$\vy^{\star}=(2.5,2.5)$};

\node[font=\footnotesize, text=red!60!black, align=center,
      fill=white, fill opacity=0.92, text opacity=1,
      inner sep=2pt, rounded corners=1pt]
    at (3.7, 0.65)
    {\textbf{CoMed:} both coords\\in $[0,K]$, no anomaly};

\node[font=\footnotesize, text=blue!60!black, align=center,
      fill=white, fill opacity=0.92, text opacity=1,
      inner sep=2pt, rounded corners=1pt]
    at (4.00, 4.45)
    {\textbf{GM:} joint $\ell_{2}$\\displacement large};

\node[radlabel, text=red!75!black, anchor=south]
    at ($(ycorr)+(0.50,0.30)$) {$\hat{\vy}_{\mathrm{corr}}=(0,K)$};

\end{tikzpicture}
\caption{\textbf{Cross-dimensional corruption (Example~\ref{ex:cross_dim}).}
Three competent judges (blue dots) cluster around the truth
$\vy^{\star}=(2.5,2.5)$ in the score box $[0,K]^{2}$ with $K=5$.
A corrupted judge outputs $\hat{\vy}_{\mathrm{corr}}=(0,K)$:
\emph{each coordinate individually} lies in the plausible range
$[0,K]$ (red axis ticks), so any \emph{coordinate-wise} estimator
sees nothing anomalous on either axis.
Jointly, however, the corrupted vector lies at $\ell_{2}$ distance
$\sqrt{12.5}\approx 3.54$ from $\vy^{\star}$ (red dashed arrow), and
the geometric median's joint-distance objective downweights it.
This is the qualitative reason \textsc{RoPoLL} uses GM rather than
CoMed; the empirical analogue at scale is the
\texttt{bimodal-random} sweep of \S\ref{sec:exp_bimodal}.}
\label{fig:diag_crossdim}
\end{figure}

The picture in Example~\ref{ex:cross_dim} extends to a class of
corruptions that are bounded per-coordinate but jointly anomalous:
random vertices of the score hypercube $\{0, K\}^{d}$, mixtures of
extreme corner values, or any corruption whose marginals look
plausible but whose joint structure is not.
The empirical \texttt{bimodal-random} class
(\S\ref{sec:exp_bimodal}), in which each corrupted slot has each
coordinate independently drawn from $\{0, K\}$ with equal
probability, is the canonical instance: per-coordinate the marginal
is $\frac{1}{2}(\delta_{0}+\delta_{K})$, indistinguishable from
plausible scoring; jointly, the corrupted vector sits at a random
hypercube corner, far from $\vy^{\star}$ in $\ell_{2}$.
Because each coordinate-wise estimator must commit per coordinate
without seeing the joint pattern, its per-coordinate bias is
$\Omega(\alpha)$ on this class (a one-dimensional Le~Cam
two-point argument under the symmetric corruption marginal,
\citealp{huber1964robust}, Thm.~5.1), and the per-coordinate
errors compose in $\ell_{2}$ with a $\sqrt{d}$ factor;
\Cref{fig:p2_bimodal_ropoll_hero} (and the
\S\ref{sec:exp_bimodal} sweep) is the empirical analogue.

\paragraph{A note on the geometric median's optimality.}
The geometric median we use is the polynomial-time robust default
at the small jury sizes $N \leq 5$ characteristic of LLM panels;
it is \emph{not} the theoretically optimal estimator at large $N$.
Geometric median-of-means
\citep{lugosi2019sub,hopkins2020mean} achieves the parametric rate
$\sigma\sqrt{d/N}$ \emph{and} a tighter breakdown-floor scaling of
order $\sigma\sqrt{\alpha}$ by aggregating block means before
applying the geometric median.
At $N \leq 5$, however, MoM degenerates: $K = N$ blocks of size 1
gives plain GM, and any coarser blocking lacks per-block
concentration.
We therefore use plain GM throughout.

\paragraph{Trimmed mean.}
The $\beta$-trimmed mean discards the $\beta$ fraction of points
farthest from the sample mean and averages the remainder.
It requires choosing $\beta$, which in turn requires knowledge of the
contamination rate $\alpha$:
if $\beta < \alpha$, corrupted points survive trimming and the bias
of Proposition~\ref{prop:mean_bias} returns;
if $\beta > \alpha$, competent points are discarded, inflating
variance.

\paragraph{Geometric median.}
The geometric median attains the same $1/2$ breakdown as the trimmed
mean, operates on \emph{joint} Euclidean distance, and is
\emph{tuning-free}---it requires no knowledge of the contamination
rate.
We therefore instantiate \textsc{RoPoLL} with the geometric median;
empirical comparisons against CoMed and the trimmed mean are
reported in \S\ref{sec:benchmark}, with the head-to-head against the
trimmed mean on heavy-tailed corruption shown in
\Cref{fig:p1_cauchy_hero}.

\subsection{The Geometric Median: Definition and Properties}
\label{sec:gm_definition}

\begin{definition}[\textsc{RoPoLL} via Geometric Median]
\label{def:gm}
Given jury outputs $\hat{\vy}_{1}, \ldots, \hat{\vy}_{N} \in \R^{d}$,
the \textsc{RoPoLL} estimate of $\vy^{\star}$ is
\begin{equation}
\label{eq:gm}
    \hat{\vy}_{\mathrm{GM}}
    \;\triangleq\;
    \argmin_{\vz \in \R^{d}} \sum_{i=1}^{N} \|\vz - \hat{\vy}_{i}\|_{2}.
\end{equation}
\end{definition}

The geometric median has a long history in location theory dating to
Fermat (1643) and Weber (1909);
its modern robustness analysis is due to
\citet{lopuhaa1991breakdown,small1990survey,vardi2000multivariate,
acharya2022robust,acharya2025geometric}.
We collect the structural properties we will use.

\begin{definition}[Finite-Sample Breakdown Point]
\label{def:breakdown}
For an estimator $T : (\R^{d})^{N} \to \R^{d}$ and a sample
$\hat{\vy}_{1:N} \in (\R^{d})^{N}$, the \emph{finite-sample breakdown
point} of $T$ at $\hat{\vy}_{1:N}$ is the smallest fraction $m/N$
such that there exists a corrupted sample $\hat{\vy}'_{1:N}$
differing from $\hat{\vy}_{1:N}$ in at most $m$ coordinates for which
$\|T(\hat{\vy}'_{1:N}) - T(\hat{\vy}_{1:N})\|_{2}$ can be made
arbitrarily large \citep{lopuhaa1991breakdown}.
\end{definition}

\begin{proposition}[Properties of the Geometric Median]
\label{prop:gm_properties}
Let $\hat{\vy}_{1}, \ldots, \hat{\vy}_{N} \in \R^{d}$ with $N \geq 1$,
and let $F(\vz) = \sum_{i=1}^{N} \|\vz - \hat{\vy}_{i}\|_{2}$.
\begin{enumerate}
    \item \textbf{Existence.} $F$ is continuous, convex, and
    coercive, so a minimizer exists.
    \item \textbf{Uniqueness.} If
    $\hat{\vy}_{1}, \ldots, \hat{\vy}_{N}$ are not collinear, then $F$
    is strictly convex and the minimizer is unique.
    \item \textbf{Affine equivariance.} For any orthogonal
    $\mU \in \R^{d \times d}$ and translation $\vb \in \R^{d}$,
    $\mathrm{GM}(\mU \hat{\vy}_{i} + \vb)
    = \mU\,\mathrm{GM}(\hat{\vy}_{i}) + \vb$.
    \item \textbf{Breakdown point.} The finite-sample breakdown point
    is $\eps^{\star} = \lceil N/2 \rceil / N \to 1/2$ as
    $N \to \infty$, which is optimal among translation-equivariant
    estimators \citep{lopuhaa1991breakdown}.
\end{enumerate}
\end{proposition}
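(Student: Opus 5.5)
The plan is to treat the four items separately. Items 1--3 follow from elementary convex analysis. Item 4 needs a short deterministic perturbation lemma plus an explicit breaking construction, and its optimality clause will simply be cited from \citet{lopuhaa1991breakdown}.

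\textbf{Existence.} Each map $\vz \mapsto \|\vz - \hat{\vy}_{i}\|_{2}$ is a norm composed with a translation, so it is continuous and convex; hence so is $F$. For coercivity, the triangle inequality gives
\[
F(\vz) \geq N\|\vz\|_{2} - \sum_{i}\|\hat{\vy}_{i}\|_{2} \to \infty .
\]
Therefore the sublevel set $\{F \leq F(\vzero)\}$ is compact and nonempty. A continuous function attains its minimum on this set, which gives existence.

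\textbf{Uniqueness.} I will show that $F$ is strictly convex along every segment $[\vz_{0}, \vz_{1}]$ with $\vz_{0} \neq \vz_{1}$. Suppose instead that $F$ is affine on such a segment. Each summand is convex, and the summands add up to an affine function, so each summand must itself be affine on the segment. Now $t \mapsto \|\vz_{0} + t(\vz_{1}-\vz_{0}) - \hat{\vy}_{i}\|_{2}$ is the Euclidean distance from a moving point on a line to a fixed point. This is a hyperbola-type function, and it is affine on an interval only if $\hat{\vy}_{i}$ lies on the line through $\vz_{0}$ and $\vz_{1}$. Applying this to every $i$ forces all $\hat{\vy}_{i}$ to be collinear, which contradicts the hypothesis. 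Two distinct minimizers would make $F$ constant, hence affine, on the segment joining them, so the minimizer is unique.

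\textbf{Equivariance.} Orthogonal maps preserve Euclidean norms. Hence
\[
\textstyle\sum_{i}\|\vz - \mU\hat{\vy}_{i} - \vb\|_{2} = \sum_{i}\|\mU^{\top}(\vz-\vb) - \hat{\vy}_{i}\|_{2} = F\big(\mU^{\top}(\vz-\vb)\big).
\]
The change of variables $\vz \mapsto \mU\vz + \vb$ is a bijection, so it maps the set of minimizers of $F$ onto the set of minimizers of the transformed objective. This gives $\mathrm{GM}(\mU \hat{\vy}_{i} + \vb) = \mU\,\mathrm{GM}(\hat{\vy}_{i}) + \vb$. When the minimizer is non-unique, the statement is read for the argmin set, or for any equivariant selection from it.

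\textbf{Breakdown: the bounded side.} This is the main step. Suppose $m < N/2$ points are replaced, and put $\vz^{\star} = \hat{\vy}_{\mathrm{GM}}$. Let $\mathcal{C}$ be the set of $N-m$ uncorrupted indices. Let $\vb$ be any fixed point, for example the original GM, and let $R = \max_{i\in\mathcal{C}}\|\hat{\vy}_{i}-\vb\|_{2}$ and $D = \|\vz^{\star}-\vb\|_{2}$. I will compare $F(\vz^{\star})$ with $F(\vb)$, using the fact that $\vz^{\star}$ is a minimizer.
\begin{itemize}
\item For each clean index, $\|\vz^{\star}-\hat{\vy}_{i}\|_{2} \geq D - R$.
\item For each corrupted index $j$, $\|\vz^{\star}-\hat{\vy}'_{j}\|_{2} \geq \|\vb-\hat{\vy}'_{j}\|_{2} - D$.
\item Upper-bounding $F(\vb)$ uses the same corrupted terms $\|\vb-\hat{\vy}'_{j}\|_{2}$, plus at most $(N-m)R$ from the clean terms.
\end{itemize}
The unknown corrupted distances therefore cancel, and $F(\vz^{\star}) \leq F(\vb)$ yields
\[
(N-2m)\,D \leq 2(N-m)\,R .
\]
Since $N-2m>0$, this bounds $D$ independently of the corrupted values, so breakdown does not occur when $m < N/2$. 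The same inequality, in a sharper form, should also feed the contamination factor in Theorem~\ref{thm:ropoll_bound}.

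\textbf{Breakdown: the breaking side.} Now let $m = \lceil N/2 \rceil$.
\begin{itemize}
\item For odd $N$, I move $m$ points to a common far point $\vb + t\ve_{1}$. The directional derivative of $F$ then shows that the minimizer lies within $O(R)$ of that point as $t\to\infty$, so the estimate is carried off to infinity.
\item For even $N$, with $m = N/2$, the minimizer set contains a long segment stretching toward $\vb + t\ve_{1}$. An adversarial selection from this set is therefore unbounded, which matches $\lceil N/2\rceil/N$.
\end{itemize}
Finally, the bound $\lfloor (N+1)/2\rfloor/N$ on the breakdown of any translation-equivariant estimator, which gives the optimality claim, is taken from \citet{lopuhaa1991breakdown}.

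The delicate point is the even-$N$ case. There the statement depends on how ties in the argmin are resolved, and I would state that convention explicitly.
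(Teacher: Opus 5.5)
Items 1--3 of your proposal coincide with the paper's proof: Weierstrass on a compact sublevel set for existence, strict convexity along any line that misses some data point for uniqueness (you argue it contrapositively), and isometry invariance of the objective for equivariance. Both you and the paper cite \citet{lopuhaa1991breakdown} for the optimality clause.

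\textbf{The real difference is the non-breakdown half of item 4.} The paper argues by first-order optimality. If the corrupted median $\vz'$ escaped to infinity, the $|S|>N/2$ clean unit vectors $(\vz'-\hat{\vy}_i)/\|\vz'-\hat{\vy}_i\|_2$ would nearly align, so their sum would have norm close to $|S|$. The corrupted terms, including subgradient balls at ties, have total norm at most $|S^c|<|S|$ and cannot cancel this, contradicting $\vzero\in\partial F'(\vz')$.

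You instead compare objective values, $F'(\vz^\star)\le F'(\vb)$, and let the unknown corrupted distances cancel through the triangle inequality, obtaining $(N-2m)D\le 2(N-m)R$. Your route is more elementary and fully rigorous. It needs no limiting ``small cone'' step and no subdifferential bookkeeping, and it gives an explicit displacement bound rather than mere boundedness.

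What your route does not give is the sharp constant. With $m=\alpha N$ it yields $D\le 2(1-\alpha)R/(1-2\alpha)$. The angular first-order argument behind the paper's sketch and Lemma~\ref{lem:gm_breakdown} gives $(1-\alpha)/\sqrt{1-2\alpha}$. So your remark that the same inequality, ``in a sharper form'', feeds Theorem~\ref{thm:ropoll_bound} holds only if you switch to that angular argument.

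\textbf{One correction on the breaking side for even $N$.} The minimizer set generically does not contain a long segment. Place the $N/2$ corrupted points at a common point $\mathbf{p}$. The optimality condition at $\mathbf{p}$ is $\bigl\|\sum_{i:\hat{\vy}_i\neq\mathbf{p}}(\mathbf{p}-\hat{\vy}_i)/\|\mathbf{p}-\hat{\vy}_i\|_2\bigr\|_2\le N/2$. This always holds, because the left side is the norm of a sum of $N/2$ unit vectors. So $\mathbf{p}$ itself is a geometric median.

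By item~2, $\mathbf{p}$ is the unique median whenever the data are not collinear. You can arrange this by placing $\mathbf{p}$ off the line through the surviving clean points, which is possible for $d\ge 2$ unless those points all coincide. The estimate is therefore carried to infinity at $m=N/2$ without any tie convention. The tie dependence you flag arises only in degenerate collinear configurations, such as $d=1$.

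For odd $N$ the far point is likewise the exact median, not merely within $O(R)$ of it. One way to see this is your own inequality with the roles of clean and corrupted swapped: take the $(N+1)/2$ points at $\mathbf{p}$ as the concentrated set, with $\vb=\mathbf{p}$ and $R=0$, which gives $D\le 0$.
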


\begin{proof}[Proof sketch]
Existence follows from continuity, convexity, and coercivity of $F$
via Weierstrass.
Strict convexity holds whenever some data point lies off any given
line, which is the non-collinearity hypothesis.
Affine equivariance is a direct calculation using
$\|\mU\vu\|_{2} = \|\vu\|_{2}$.
For the breakdown point, a subgradient argument shows that fewer
than $\lceil N/2 \rceil$ corrupted points cannot dominate the
competent unit-vector sum at infinity; tightness comes from placing
$\lceil N/2 \rceil$ points at a divergent location.
The full proof is in Appendix~\ref{sec:appendix_gm_properties}.
\end{proof}

\begin{remark}[Discrete breakdown threshold at small $N$]
\label{rem:small_N_breakdown}
The asymptotic $1/2$ breakdown is attained as $N \to \infty$;
at finite $N$ the discrete breakdown threshold is
$\lceil N/2 \rceil/N$, strictly above $1/2$ for odd $N$.
At the practical $N = 3$ used throughout \S\ref{sec:benchmark},
the threshold is $2/3$, but the \emph{integer} cutoff is
``one corrupted of three''---two corrupted out of three
breaks the geometric median regardless of $\alpha$.
The empirical sweep up to per-cell rate $r = 50\%$ thus operates
at this discrete cutoff:
in expectation $1.5$ of $3$ judges are corrupted at $r = 50\%$, so
$\sim 50\%$ of cells have one corrupted judge (within breakdown)
and $\sim 50\%$ have two or more (at or beyond breakdown).
The corruption-class dependence visible in
\Cref{fig:p7_method_contrast} reflects this discrete structure:
under symmetric mean-preserving $Q_{i}$ (\texttt{zeros},
\texttt{inverted}) the cells beyond breakdown still average
out, while under
biased $Q_{i}$ (\texttt{bimodal-random}, \texttt{cauchy-far}) they
do not, and the gap to \textsc{PoLL} grows accordingly.
\end{remark}

\subsection{The Weiszfeld Iteration}
\label{sec:weiszfeld}

The geometric median has no closed form for $d \geq
2$~\citep{bajaj1988algebraic}; we compute it via the modified
Weiszfeld iteration \citep{weiszfeld1937point,vardi2000multivariate}.

\paragraph{Derivation.}
At any non-data point $\vz \neq \hat{\vy}_{i}$ (for all $i$), the
gradient of $F(\vz) = \sum_{i} \|\vz - \hat{\vy}_{i}\|_{2}$ is
$\nabla F(\vz)
= \sum_{i=1}^{N} (\vz - \hat{\vy}_{i})/\|\vz - \hat{\vy}_{i}\|_{2}$.
Setting $\nabla F(\vz) = \vzero$ and rearranging gives the fixed-point
\begin{equation}
\label{eq:weiszfeld_fixedpoint}
    \vz
    \;=\;
    \frac{\sum_{i=1}^{N} \hat{\vy}_{i} / \|\vz - \hat{\vy}_{i}\|_{2}}
         {\sum_{i=1}^{N} 1 / \|\vz - \hat{\vy}_{i}\|_{2}}.
\end{equation}
A modified weight
$1 / \max(\|\vz - \hat{\vy}_{i}\|_{2}, \eta)$ for small stability
parameter $\eta > 0$ handles the singularity when the iterate
coincides with a data point \citep{vardi2000multivariate}, yielding
Algorithm~\ref{alg:ropoll}.

\begin{algorithm}[t]
\caption{\textsc{\bf RoPoLL}}
\label{alg:ropoll}
\begin{algorithmic}[1]
\REQUIRE Jury scores $\hat{\vy}_{1}, \ldots, \hat{\vy}_{N} \in \R^{d}$;
    tolerance $\eps > 0$; stability $\eta > 0$
\STATE $\vz^{(0)} \leftarrow \frac{1}{N}\sum_{i=1}^{N} \hat{\vy}_{i}$
\FOR{$t = 0, 1, 2, \ldots$}
    \STATE $w_{i}^{(t)} \leftarrow
        1 / \max(\|\vz^{(t)} - \hat{\vy}_{i}\|_{2},\, \eta)$
        \quad for each $i$
    \STATE $\vz^{(t+1)} \leftarrow
        \sum_{i} w_{i}^{(t)} \hat{\vy}_{i}
        \,\big/\,
        \sum_{i} w_{i}^{(t)}$
    \IF{$\|\vz^{(t+1)} - \vz^{(t)}\|_{2} < \eps$}
        \STATE \textbf{break}
    \ENDIF
\ENDFOR
\RETURN $\hat{\vy}_{\mathrm{GM}} \leftarrow \vz^{(t+1)}$
\end{algorithmic}
\end{algorithm}

Each iteration is a reweighted mean in which points far from the
current consensus receive small weights, automatically downweighting
corrupted judges.

\paragraph{Convergence and cost.}
\citet{vardi2000multivariate} prove that the modified Weiszfeld
iteration converges to the unique geometric median at a linear rate
whenever the data are not collinear:
$\|\vz^{(t)} - \hat{\vy}_{\mathrm{GM}}\|_{2}
\leq \rho^{t} \|\vz^{(0)} - \hat{\vy}_{\mathrm{GM}}\|_{2}$
for some $\rho \in (0, 1)$.
The number of iterations to reach tolerance $\eps$ is therefore
$O(\log(1/\eps))$, and each iteration costs $O(Nd)$, giving total
cost $O(Nd\log(1/\eps))$.
For a typical LLM jury ($N \leq 20$, $d \leq 5$, $\eps = 10^{-8}$)
this is microseconds on a modern processor---negligible relative to
the seconds of GPU time per judge inference.
A full convergence analysis is in
Appendix~\ref{sec:appendix_weiszfeld}.

%
%
\begin{figure}[t]
\centering
\begin{tikzpicture}[
    every node/.style={font=\small},
    competent/.style={circle, fill=blue!55, draw=blue!75!black,
                       line width=0.3pt, inner sep=1.4pt},
    corrupted/.style={red!75!black, mark size=2.7pt,
                       line width=0.9pt},
    truthstar/.style={star, star points=5, star point ratio=2.4,
                       fill=yellow!85!orange, draw=black,
                       line width=0.4pt, minimum size=10pt, inner sep=0pt},
    gmtri/.style={regular polygon, regular polygon sides=3,
                  fill=blue!70!black, draw=black, line width=0.4pt,
                  minimum size=8pt, inner sep=0pt},
    radlabel/.style={font=\small, inner sep=1.3pt, fill=white,
                      fill opacity=0.92, text opacity=1, rounded corners=1pt},
]
\def\Rinner{1.7}
\def\Router{3.2}
\coordinate (ystar) at (0,0);
\fill[blue!8]   (ystar) circle (\Router);
\fill[blue!18]  (ystar) circle (\Rinner);
\draw[blue!55!black, line width=0.7pt]            (ystar) circle (\Rinner);
\draw[blue!55!black, line width=0.7pt, dashed]    (ystar) circle (\Router);

\foreach \x/\y in {
    -0.55/0.45, 0.75/-0.10, -1.05/-0.30, 0.25/-0.85,
    -0.30/0.95, 0.95/-0.55, -1.25/0.25, -0.65/-0.95,
    1.15/0.20, 0.40/-0.45, -0.20/-0.55, 1.25/-0.55,
    -0.95/0.65, -1.30/-0.55, 0.55/0.35, 0.05/-1.15,
    -0.45/-0.25, 1.05/-0.95}
    \node[competent] at (\x,\y) {};

\node[corrupted] at (-2.6,1.6) {\pgfuseplotmark{x}};
\node[corrupted] at (2.5,-1.9) {\pgfuseplotmark{x}};
\node[corrupted] at (3.6,0.8)  {\pgfuseplotmark{x}};
\node[corrupted] at (-3.0,-1.4){\pgfuseplotmark{x}};
\node[corrupted] at (1.8,2.4)  {\pgfuseplotmark{x}};

\node[truthstar] (yst) at (ystar) {};
\node[gmtri]    (gm) at (-1.10,1.30) {};

\node[font=\footnotesize, anchor=west, xshift=5pt] at (yst) {$\vy^{\star}$};
\node[font=\footnotesize, anchor=south, yshift=2.5pt] at (gm.north)
    {$\hat{\vy}_{\mathrm{GM}}$};

\draw[-Stealth, line width=0.55pt, blue!55!black]
    (ystar) -- ($(ystar)+(60:\Rinner)$);
\node[radlabel] at ($(ystar)+(60:0.78*\Rinner)+(0.18,0.0)$) {$\rho$};

\draw[-Stealth, line width=0.55pt, blue!55!black, dashed]
    (ystar) -- ($(ystar)+(-55:\Router)$);
\node[radlabel] at ($(ystar)+(-55:0.65*\Router)+(0.55,-0.05)$)
    {$C_{\alpha+\beta}\,\rho$};

\node[font=\footnotesize, blue!55!black, align=center,
      fill=white, fill opacity=0.92, text opacity=1,
      inner sep=1.5pt, rounded corners=1pt]
    at (-2.05,-2.45) {cluster $\overline{B}(\vy^{\star},\rho)$\\$\geq(1{-}\alpha{-}\beta)N$ pts};

\node[font=\footnotesize, blue!45!black, align=center,
      fill=white, fill opacity=0.92, text opacity=1,
      inner sep=1.5pt, rounded corners=1pt]
    at (3.30,2.45) {GM envelope\\$\overline{B}(\vy^{\star},C_{\alpha+\beta}\rho)$};

\end{tikzpicture}
\caption{\textbf{Geometry of Theorem~\ref{thm:ropoll_bound}.}
Lemma~\ref{lem:cluster_radius} guarantees that at least
$(1{-}\alpha{-}\beta)N$ judge outputs (blue dots) lie inside the
\emph{cluster ball} $\overline{B}(\vy^{\star}, \rho)$ of
sub-Gaussian radius $\rho$ (solid disk).
Lemma~\ref{lem:gm_breakdown} then forces the geometric median
$\hat{\vy}_{\mathrm{GM}}$ (blue triangle) to lie inside the
\emph{GM envelope} $\overline{B}(\vy^{\star}, C_{\alpha+\beta}\rho)$
(dashed disk),
\emph{regardless of where the remaining $(\alpha{+}\beta)N$
corrupted points (red $\times$) are placed}---this is the
distribution-free breakdown property of the geometric median.
The two-step composition is exactly Theorem~\ref{thm:ropoll_bound}.}
\label{fig:diag_thm1}
\end{figure}

%
%
%
\begin{figure}[t]
\centering
\begin{tikzpicture}[
    every node/.style={font=\small},
    competent/.style={circle, fill=blue!55, draw=blue!75!black,
                       line width=0.3pt, inner sep=1.4pt},
    corrupted/.style={red!75!black, mark size=2.7pt,
                       line width=0.9pt},
    truthstar/.style={star, star points=5, star point ratio=2.4,
                       fill=yellow!85!orange, draw=black,
                       line width=0.4pt, minimum size=10pt, inner sep=0pt},
    gmtri/.style={regular polygon, regular polygon sides=3,
                  fill=blue!70!black, draw=black, line width=0.4pt,
                  minimum size=8pt, inner sep=0pt},
    radlabel/.style={font=\small, inner sep=1.3pt, fill=white,
                      fill opacity=0.92, text opacity=1, rounded corners=1pt},
    rightangle/.style={blue!55!black, line width=0.4pt},
]
\def\rball{1.5}
\coordinate (z)    at (2.5, 0);
\coordinate (xs)   at (-2.5, 0);
\coordinate (tup)  at (2.05,  1.43);
\coordinate (tdn)  at (2.05, -1.43);

\fill[blue!18] (z) circle (\rball);
\draw[blue!55!black, line width=0.7pt] (z) circle (\rball);

\foreach \dx/\dy in {
    -0.20/0.40, 0.45/-0.05, -0.65/-0.20, 0.10/-0.65,
    0.65/0.55, -0.40/0.75, 0.85/-0.40, -0.85/0.30,
    0.20/0.10, -0.30/-0.75, 0.95/0.10, 0.05/0.85,
    -0.55/0.50, 0.55/-0.85, -0.75/-0.55, 0.30/-0.35}
    \node[competent] at ($(z)+(\dx,\dy)$) {};

\node[corrupted] at (-3.5, -1.6) {\pgfuseplotmark{x}};
\node[corrupted] at (-3.0,  1.7) {\pgfuseplotmark{x}};
\node[corrupted] at ( 4.4,  1.6) {\pgfuseplotmark{x}};

\draw[line width=0.55pt, blue!55!black]
    (xs) -- (z);

\draw[line width=0.5pt, blue!55!black, dashed] (xs) -- (tup);
\draw[line width=0.5pt, blue!55!black, dashed] (xs) -- (tdn);

\draw[line width=0.5pt, blue!55!black] (z) -- (tup);

\draw[rightangle]
    ($(tup)+(-0.954*0.18,-0.300*0.18)$) --
    ($(tup)+(-0.954*0.18,-0.300*0.18)+(-0.30*0.18,0.953*0.18)$) --
    ($(tup)+(-0.30*0.18,0.953*0.18)$);

\draw[blue!55!black, line width=0.5pt]
    ($(xs)+(0:0.85)$) arc (0:17.46:0.85);

\node[truthstar] (zst) at (z) {};
\node[font=\footnotesize, anchor=west, xshift=5pt, yshift=-1pt] at (zst) {$z$};

\node[gmtri] (xsm) at (xs) {};
\node[font=\footnotesize, anchor=east, xshift=-5pt] at (xsm) {$x_{*}$};

\node[radlabel] at ($(xs)!0.5!(z)+(0,0.22)$) {$\Delta = \|x_{*}-z\|_{2}$};

\node[radlabel] at ($(z)!0.55!(tup)+(0.18,0.0)$) {$r$};

\node[font=\small, blue!55!black] at ($(xs)+(8.7:1.18)$) {$\gamma$};

\node[radlabel, text=blue!45!black, align=center]
    at (3.6, 2.4) {$\sin\gamma \leq \dfrac{r}{\Delta}$};

\node[font=\footnotesize, blue!55!black, align=center,
      fill=white, fill opacity=0.92, text opacity=1,
      inner sep=1.5pt, rounded corners=1pt]
    at (2.5,-2.30) {$\geq(1{-}\alpha)k$ pts in $\overline{B}(z,r)$};

\node[font=\footnotesize, blue!45!black, align=center,
      fill=white, fill opacity=0.92, text opacity=1,
      inner sep=1.5pt, rounded corners=1pt]
    at (-1.10,-1.10) {visibility cone};

\end{tikzpicture}
\caption{\textbf{Geometric core of Lemma~\ref{lem:gm_breakdown}.}
The contradiction hypothesis $\Delta \triangleq \|x_{*}-z\|_{2} > C_{\alpha}r$
places $x_{*}$ \emph{outside} the cluster ball $\overline{B}(z, r)$,
so the ball subtends a cone of half-angle $\gamma$ at $x_{*}$ with
$\sin\gamma = r/\Delta$ (right-angle at the tangent point shown).
Every cluster point $x_{j} \in \overline{B}(z, r)$ lies inside this
cone, hence makes angle $\gamma_{j} \leq \gamma$ with the central
ray $x_{*}\to z$, so
$\cos\gamma_{j} \geq \sqrt{1 - r^{2}/\Delta^{2}} = \alpha/(1-\alpha)$
when $\Delta > C_{\alpha}r$ with $C_{\alpha}=(1-\alpha)/\sqrt{1-2\alpha}$.
Summing this lower bound over the $(1{-}\alpha)k$ cluster indices
forces the directional derivative
$DF(x_{*}; z-x_{*})$ to be strictly negative, contradicting the
first-order optimality of the geometric median---hence
$\Delta \leq C_{\alpha}r$.}
\label{fig:diag_lem1}
\end{figure}

%
%
\begin{figure}[t]
\centering
\begin{tikzpicture}[
    every node/.style={font=\small},
    huberA/.style={draw=blue!55!black, line width=0.7pt,
                   fill=blue!18, fill opacity=0.55},
    huberB/.style={draw=red!55!black, line width=0.7pt,
                   fill=red!18,  fill opacity=0.55},
    truthstar/.style={star, star points=5, star point ratio=2.4,
                       fill=yellow!85!orange, draw=black,
                       line width=0.4pt, minimum size=10pt, inner sep=0pt},
    truthstarB/.style={star, star points=5, star point ratio=2.4,
                       fill=red!75!black, draw=black,
                       line width=0.4pt, minimum size=10pt, inner sep=0pt},
    commonpt/.style={diamond, fill=violet!75, draw=black,
                     line width=0.4pt, minimum size=8pt, inner sep=0pt},
    radlabel/.style={font=\small, inner sep=1.3pt, fill=white,
                     fill opacity=0.92, text opacity=1, rounded corners=1pt},
]

\begin{scope}[yshift=2.2cm]
    \def\sigA{1.0}      
    \def\Dstar{1.05}    
    \def\peakh{1.5}     
    \coordinate (yzero) at (-\Dstar, 0);
    \coordinate (yone)  at ( \Dstar, 0);

    \draw[->, line width=0.5pt, gray!70!black]
        (-3.4, 0) -- (3.4, 0) node[right, font=\footnotesize, black] {};

    \draw[blue!55!black, line width=0.85pt, smooth, samples=80,
          domain=-3.2:3.2]
        plot ({\x}, {\peakh*exp(-(\x-(-\Dstar))*(\x-(-\Dstar))/(2*\sigA*\sigA))});
    \draw[red!55!black, line width=0.85pt, smooth, samples=80,
          domain=-3.2:3.2]
        plot ({\x}, {\peakh*exp(-(\x-\Dstar)*(\x-\Dstar)/(2*\sigA*\sigA))});

    \fill[violet!28, opacity=0.85]
        plot[smooth, samples=40, domain=-2.4:0]
            ({\x}, {\peakh*exp(-(\x-\Dstar)*(\x-\Dstar)/(2*\sigA*\sigA))})
        -- plot[smooth, samples=40, domain=0:2.4]
            ({\x}, {\peakh*exp(-(\x-(-\Dstar))*(\x-(-\Dstar))/(2*\sigA*\sigA))})
        -- (2.4, 0) -- (-2.4, 0) -- cycle;

    \node[truthstar]  at (yzero) {};
    \node[truthstarB] at (yone)  {};
    \node[font=\footnotesize, anchor=north, yshift=-3pt] at (yzero) {$\vy_{0}$};
    \node[font=\footnotesize, anchor=north, yshift=-3pt] at (yone)  {$\vy_{1}$};

    \draw[<->, line width=0.5pt, gray!50!black]
        ($(yzero)+(0,-0.55)$) -- ($(yone)+(0,-0.55)$);
    \node[radlabel] at (0, -0.55) {$\Delta_{\star}$};

    \node[font=\footnotesize, text=blue!55!black, anchor=east]
        at (-2.2, 0.95) {$\gN(\vy_{0}, \sigma^{2}\mI_{d})$};
    \node[font=\footnotesize, text=red!55!black, anchor=west]
        at ( 2.2, 0.95) {$\gN(\vy_{1}, \sigma^{2}\mI_{d})$};

    \node[radlabel, text=violet!55!black, anchor=south]
        (tvlbl) at (0.0, 1.85)
        {TV $= 2\Phi(\Delta_{\star}/2\sigma)-1 \;\leq\; \alpha/(1{-}\alpha)$};
    \draw[->, line width=0.45pt, violet!60!black]
        (tvlbl.south) -- (0.0, 0.55);
\end{scope}

\begin{scope}[yshift=-1.6cm]
    \coordinate (cA) at (-1.3, 0);
    \coordinate (cB) at ( 1.3, 0);

    \draw[huberA, smooth, line width=0.8pt]
        (cA) ellipse [x radius=2.1, y radius=1.35];
    \draw[huberB, smooth, line width=0.8pt]
        (cB) ellipse [x radius=2.1, y radius=1.35];

    \node[commonpt] (Fpt) at (0, 0) {};
    \node[font=\footnotesize, anchor=north, yshift=-3pt, text=violet!55!black]
        at (Fpt) {$F$};

    \node[truthstar]  at (cA) {};
    \node[truthstarB] at (cB) {};
    \node[font=\footnotesize, anchor=south, yshift=2pt] at (-1.3, 0.15) {$\vy_{0}$};
    \node[font=\footnotesize, anchor=south, yshift=2pt] at ( 1.3, 0.15) {$\vy_{1}$};

    \node[font=\footnotesize, blue!55!black]
        at (-2.65, 1.15) {$\gF_{\alpha}(\vy_{0})$};
    \node[font=\footnotesize, red!55!black]
        at ( 2.65, 1.15) {$\gF_{\alpha}(\vy_{1})$};

    \node[radlabel, text=violet!55!black, align=center, anchor=north]
        at (0, -1.05) {$F \in \gF_{\alpha}(\vy_{0}) \cap \gF_{\alpha}(\vy_{1})$};
\end{scope}

\end{tikzpicture}
\caption{\textbf{Modulus of continuity for Theorem~\ref{thm:minimax_body}.}
\textbf{Top:}
total variation between two equal-covariance Gaussians at separation
$\Delta_{\star}$ is $2\Phi(\Delta_{\star}/2\sigma) - 1$
(\emph{dimension-free}; depends only on the line through the two
centers).
The overlap is shaded; when the overlap mass exceeds
$\alpha/(1{-}\alpha)$, the two Huber neighborhoods touch.
\textbf{Bottom:}
the contamination class
$\gF_{\alpha}(\vy) = \{(1-\alpha)\gN(\vy, \sigma^{2}\mI_{d}) + \alpha Q\}$
is depicted as a cloud of distributions around each center;
under the threshold above, a single distribution
$F \in \gF_{\alpha}(\vy_{0}) \cap \gF_{\alpha}(\vy_{1})$
is consistent with \emph{both} truths.
No estimator can distinguish $\vy_{0}$ from $\vy_{1}$ on observations
drawn from $F$, hence Le Cam's two-point bound forces minimax error
$\geq \Delta_{\star}/4 \geq \tfrac{\sqrt{2\pi}}{4}\,\sigma\alpha/(1{-}\alpha)$,
\emph{independent of $N$}.}
\label{fig:diag_thm2}
\end{figure}

\begin{table}[t]
\centering
\small
\setlength{\tabcolsep}{6pt}
\renewcommand{\arraystretch}{1.10}
\begin{tabular}{@{}l p{0.72\linewidth}@{}}
\toprule
\textbf{Result} &
\textbf{One-line statement} \\
\midrule
\multicolumn{2}{l}{\textit{\cref{sec:problem_setup} \;Problem Setup}} \\
\midrule
~\cref{asm:huber}                & Huber $\eps$-contamination model. \\
~\cref{asm:independence}         & Conditional independence of judges. \\
~\cref{asm:subgaussian}          & $\sigma$-sub-Gaussian competent noise. \\
~\cref{asm:minority}             & Minority corruption $\alpha < 1/2$. \\
~\cref{prop:variance_reduction} & Clean-jury MSE $\operatorname{tr}(\mSigma)/N$. \\
~\cref{cor:effective_jury_size}  & Effective jury size $N_{\mathrm{eff}}{=}N/(1{+}(N{-}1)\gamma)$. \\
~\cref{prop:mean_bias}          & \textsc{PoLL} bias unbounded for any $\alpha > 0$. \\
\midrule
\multicolumn{2}{l}{\textit{\cref{sec:methodology} \;Robust Panel of LLM Judges}} \\
\midrule
~\cref{ex:cross_dim}              & Cross-dimensional corruption: per-coord plausible, jointly anomalous. \\
~\cref{def:gm}                   & \textsc{RoPoLL} via geometric median. \\
~\cref{def:breakdown}            & Finite-sample breakdown point. \\
~\cref{prop:gm_properties}      & GM existence, uniqueness, equivariance, breakdown $1/2$. \\
~\cref{alg:ropoll}               & Modified Weiszfeld iteration: $O(Nd\log(1/\eps))$. \\
\midrule
\multicolumn{2}{l}{\textit{\cref{sec:theory} \;Theoretical Guarantees}} \\
\midrule
~\cref{lem:gm_breakdown}         & GM within $C_{\alpha}r$ if $(1{-}\alpha)$ fraction of points are within $r$. \\
~\cref{lem:cluster_radius}       & Sub-Gaussian cluster radius $\rho{=}\sigma(C_{1}\sqrt{d}{+}\sqrt{\log(1/\beta)/c})$. \\
~\cref{thm:ropoll_bound}         & \textsc{RoPoLL} error $\leq C_{\alpha+\beta}\rho$ w.p.\ $1{-}\exp({-}N\beta^{2}/2)$. \\
~\cref{lem:correlated_jury}      & Same bound under equicorrelated juries, w.p.\ $1{-}1/(\beta^{2}N_{\mathrm{eff}})$. \\
~\cref{thm:minimax_body}         & Minimax lower bound $\Omega(\sigma(\sqrt{d/N}{+}\alpha/(1{-}\alpha)))$. \\
\bottomrule
\end{tabular}
\vspace{1em}
\caption{\centering \textbf{Roadmap of formal results.}
Each row links to the result's full statement (clickable
reference); proofs are deferred to the \cref{sec:appendix_theory}}
\label{tab:theory_roadmap}
\end{table}

\section{Theoretical Guarantees}
\label{sec:theory}
\subsection{Finite-Sample Error Bound}
\label{sec:upper_bound}
\begin{figure}[t]
\centering
\begin{subfigure}[t]{0.32\textwidth}
    \centering
    \includegraphics[width=\linewidth]{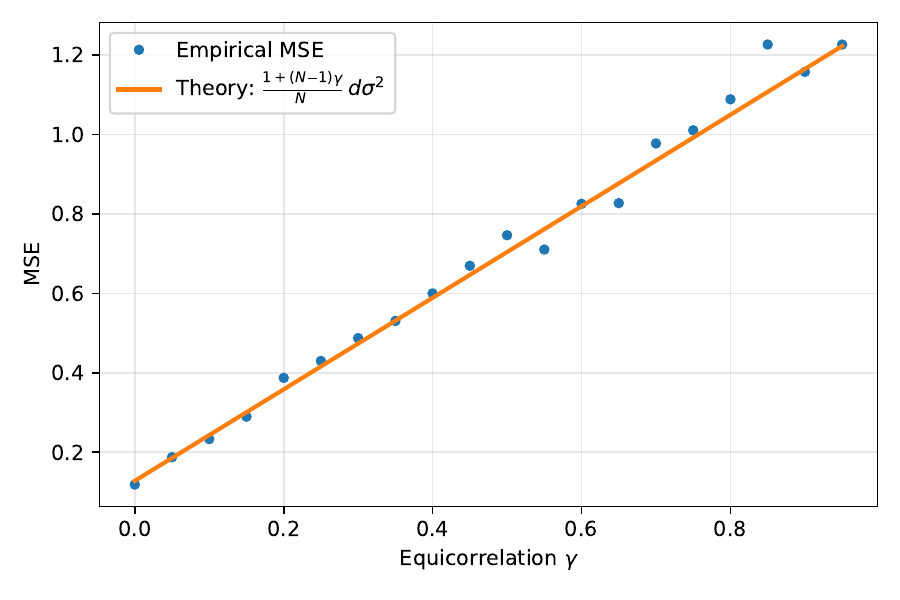}
    \caption{\cref{cor:effective_jury_size}.}
    \label{fig:tv_gamma}
\end{subfigure}\hfill
\begin{subfigure}[t]{0.32\textwidth}
    \centering
    \includegraphics[width=\linewidth]{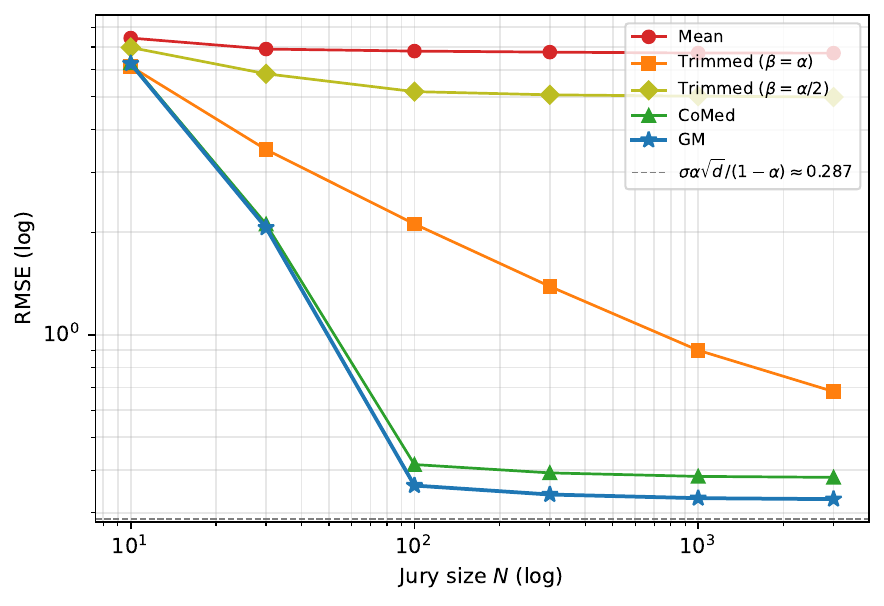}
    \caption{\cref{thm:ropoll_bound}.}
    \label{fig:tv_n_sweep}
\end{subfigure}\hfill
\begin{subfigure}[t]{0.32\textwidth}
    \centering
    \includegraphics[width=\linewidth]{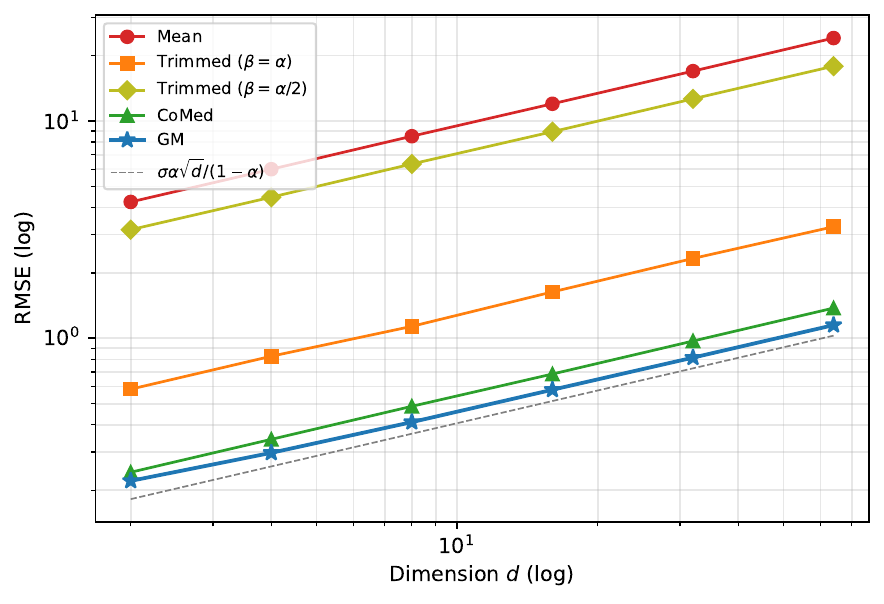}
    \caption{\cref{lem:cluster_radius}.}
    \label{fig:tv_d_sweep}
\end{subfigure}
\caption{\textbf{Theory validation.}
\textbf{(a)} Empirical MSE of the arithmetic mean for $N{=}10$
equicorrelated clean Gaussian judges matches the closed-form
prediction $\tfrac{1+(N{-}1)\gamma}{N}\,d\sigma^{2}$
(Corollary~\ref{cor:effective_jury_size}) across the full range
$\gamma \in [0, 0.95]$;
the effective jury size $N_{\mathrm{eff}} = N/(1+(N{-}1)\gamma)$
saturates at $1/\gamma$, motivating the three-judge committees of
\S\ref{sec:benchmark}.
\textbf{(b)} Under worst-case Huber contamination
($\alpha{=}0.3$, $\sigma{=}0.3$, $d{=}5$, Dirac corruption at
$\vy^{\star}+10\,\vone$), the geometric median converges to the
predicted breakdown floor $\sigma\alpha\sqrt{d}/(1{-}\alpha) \approx 0.287$
(gray dashed) as $N$ grows, matching
Theorem~\ref{thm:ropoll_bound};
the arithmetic mean and under-trimmed mean
($\beta{=}\alpha/2$) plateau above the floor, confirming
Proposition~\ref{prop:mean_bias}.
\textbf{(c)} Holding $N{=}1000$ fixed and sweeping the dimension,
the geometric median tracks the predicted $\sqrt{d}$ scaling of the
cluster radius (Lemma~\ref{lem:cluster_radius}) to within an
absolute constant.}
\label{fig:theory_validation}
\end{figure}

We bound $\|\hat{\vy}_{\mathrm{GM}} - \vy^{\star}\|_{2}$ under our
Huber model in two steps:
a deterministic geometric lemma about the geometric median
(Lemma~\ref{lem:gm_breakdown}, due to
\citet{minsker2015geometric}), and a probabilistic lemma that
controls the sub-Gaussian cluster radius
(Lemma~\ref{lem:cluster_radius}).
Plugging the cluster radius into the geometric lemma yields
Theorem~\ref{thm:ropoll_bound}, our finite-sample upper bound on
\textsc{RoPoLL}.
Full proofs are in
Appendix~\ref{sec:appendix_theory}.

\begin{lemma}[\bf Geometric Breakdown of GM]
\label{lem:gm_breakdown}
\emph{(\citet{minsker2015geometric}, Lemma~2.1; building on
\citet{lopuhaa1991breakdown}.)}
Let $x_{1}, \ldots, x_{k} \in \R^{d}$ and let $x_{*}$ be any
minimizer of $z \mapsto \sum_{j=1}^{k} \|z - x_{j}\|_{2}$
(a geometric median).
Fix $\alpha \in (0, 1/2)$, $r > 0$, and $z \in \R^{d}$.
If $|\{j : \|x_{j} - z\|_{2} \leq r\}| \geq (1-\alpha)k$, then
\begin{equation}
\label{eq:gm_breakdown}
    \big\|x_{*} - z\big\|_{2} \;\leq\; C_{\alpha}\, r,
    \qquad
    C_{\alpha} \;\triangleq\; \frac{1-\alpha}{\sqrt{1-2\alpha}}.
\end{equation}
\end{lemma}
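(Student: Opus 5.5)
We argue by contradiction using first-order optimality of $F(y)=\sum_{j=1}^{k}\|y-x_{j}\|_{2}$ at $x_{*}$, following the geometry of \Cref{fig:diag_lem1}. Suppose $\Delta \triangleq \|x_{*}-z\|_{2} > C_{\alpha} r$. Since $C_{\alpha}\geq 1$, this gives $\Delta > r$, so $x_{*}$ lies outside $\overline{B}(z,r)$ and in particular coincides with no cluster point. Let $\vu = (z-x_{*})/\Delta$ be the unit direction toward $z$. Because $F$ is convex and $x_{*}$ is a minimizer, the one-sided directional derivative satisfies $DF(x_{*};\vu)\geq 0$. For $x_{j}\neq x_{*}$ the $j$-th term contributes $-\langle \vu, (x_{j}-x_{*})/\|x_{j}-x_{*}\|_{2}\rangle = -\cos\gamma_{j}$, where $\gamma_{j}$ is the angle at $x_{*}$ between $x_{j}-x_{*}$ and $\vu$. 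A term with $x_{j}=x_{*}$ contributes $\|\vu\|_{2}=1$. In every case each term is at most $+1$.

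Next, split the indices into the cluster set $J=\{j:\|x_{j}-z\|_{2}\leq r\}$, with $|J|\geq(1-\alpha)k$, and its complement, with at most $\alpha k$ elements. Bound each complement term crudely by $+1$. For $j\in J$, the point $x_{j}$ lies in the ball $\overline{B}(z,r)$, which is seen from $x_{*}$ inside a cone of half-angle $\gamma$ with $\sin\gamma = r/\Delta$. This is elementary: the distance from $x_{j}$ to the ray through $z$ is at most $r$, which gives $\sin\gamma_{j}\leq r/\Delta$ and an acute angle. Hence $\cos\gamma_{j}\geq\sqrt{1-r^{2}/\Delta^{2}}$.

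Combining the two bounds,
\[
0 \;\leq\; DF(x_{*};\vu) \;\leq\; \alpha k - (1-\alpha)k\sqrt{1-r^{2}/\Delta^{2}}.
\]
It remains to check the constant. The condition $\Delta > C_{\alpha} r$ means $r^{2}/\Delta^{2} < (1-2\alpha)/(1-\alpha)^{2}$, so
\[
1-r^{2}/\Delta^{2} \;>\; \frac{(1-\alpha)^{2}-(1-2\alpha)}{(1-\alpha)^{2}} \;=\; \frac{\alpha^{2}}{(1-\alpha)^{2}}.
\]
Therefore $(1-\alpha)\sqrt{1-r^{2}/\Delta^{2}} > \alpha$, the right-hand side above is strictly negative, and we have a contradiction. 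This yields $\Delta\leq C_{\alpha}r$.

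The main delicate step is justifying the directional-derivative inequality at a possibly nondifferentiable minimizer, and handling ties where $x_{j}=x_{*}$. I would handle both by the convexity fact $F'(x_{*};\vu)=\lim_{t\downarrow0}(F(x_{*}+t\vu)-F(x_{*}))/t\geq0$, computing the limit termwise, where each nondifferentiable term contributes exactly $+1$. Such tied points can only lie outside $J$, since $x_{*}\notin\overline{B}(z,r)$, so they are already covered by the $\alpha k$ budget. The cone-angle bound is routine trigonometry.
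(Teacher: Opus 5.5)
Your proof is correct and follows the paper's argument step for step: a contradiction via the nonnegative one-sided directional derivative toward $z$, the cone bound $\cos\gamma_j \ge \sqrt{1-r^2/\Delta^2}$ on the cluster, the crude $+1$ bound on the remaining terms (including ties, which lie outside $J$), and the identity $1-1/C_\alpha^2=\alpha^2/(1-\alpha)^2$. One small fix in the cone step: the quantity to bound is the distance from $z$ to the ray from $x_*$ through $x_j$, which is at most $\|z-x_j\|_2\le r$ and equals $\Delta\sin\gamma_j$ when $\gamma_j\le\pi/2$ (an obtuse $\gamma_j$ would make it $\Delta>r$). The distance from $x_j$ to the ray through $z$, as you wrote it, only yields $\sin\gamma_j\le r/\|x_j-x_*\|_2$, which is weaker than $r/\Delta$.
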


This is purely deterministic: a multiplicative bound between the
geometric median and any target $z$ in terms of how concentrated the
inputs are around $z$.
The constant $C_{\alpha}$ is sharp and diverges as
$\alpha \to 1/2$, matching the breakdown point of GM.
\Cref{fig:diag_lem1} illustrates the geometric core of the proof:
under the contradiction hypothesis $\Delta = \|x_{*}-z\|_{2} > C_{\alpha}r$,
the cluster ball subtends a narrow cone at $x_{*}$, forcing the
$(1{-}\alpha)k$ cluster points to lie inside it---and a balance
of unit-vector subgradients (the first-order optimality condition
for the geometric median) produces the contradiction.

\begin{lemma}[\bf Sub-Gaussian Cluster Radius]
\label{lem:cluster_radius}
Under Assumptions~\ref{asm:huber}--\ref{asm:minority}, let
$\beta \in (0,\, 1/2 - \alpha)$ be a slack parameter.
With probability at least $1 - \exp(-N\beta^{2}/2)$,
\begin{equation}
\label{eq:cluster_radius}
    \big|\big\{i \in [N] : \|\hat{\vy}_{i} - \vy^{\star}\|_{2} \leq \rho\big\}\big|
    \;\geq\; (1 - \alpha - \beta)\,N,
\end{equation}
where the cluster radius is
\begin{equation}
\label{eq:cluster_radius_rho}
    \rho \;=\; \sigma\!\left(C_{1}\sqrt{d} \;+\; \sqrt{\frac{1}{c}\,\log\!\frac{2(1-\alpha)}{\beta}}\right),
\end{equation}
and $C_{1}, c > 0$ are absolute constants (from the
sub-Gaussian-norm tail bound, Step~1 of the proof in
Appendix~\ref{sec:lem_cluster_radius}).
\end{lemma}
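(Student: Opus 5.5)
Throughout, $\sigma \triangleq \max_i \sigma_i$. The plan has two steps: a per-judge probability bound, then a concentration bound for the count of "good" judges via Hoeffding.

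\textbf{Step 1 (per-judge tail).} For a competent draw, $\bm{\epsilon}_i = \hat{\vy}_i - \vy^{\star}$ is zero-mean and $\sigma^2$-sub-Gaussian in every direction (Assumption~\ref{asm:subgaussian}). I will invoke the standard norm tail bound for sub-Gaussian vectors. A $1/2$-net of $\mathbb{S}^{d-1}$ of size $5^d$, combined with a union bound, gives
\begin{equation*}
\Pr\big(\|\bm{\epsilon}_i\|_2 > \sigma(C_1\sqrt{d} + t)\big) \le 2\exp(-c\,t^2)
\end{equation*}
for absolute constants $C_1, c$. Set $t = \sqrt{c^{-1}\log(2(1-\alpha)/\beta)}$; then the right-hand side equals $\beta/(1-\alpha)$. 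Define the good event $G_i = \{Z_i = 0,\ \|\hat{\vy}_i - \vy^{\star}\|_2 \le \rho\}$. By the independence of $Z_i$ and $\bm{\epsilon}_i$ in~\eqref{eq:full_model},
\begin{equation*}
p_i \triangleq \Pr(G_i) \ge (1-\alpha_i)\Big(1 - \tfrac{\beta}{1-\alpha}\Big).
\end{equation*}
Averaging over $i$ gives $\bar p \ge (1-\alpha)\big(1 - \beta/(1-\alpha)\big) = 1 - \alpha - \beta$. This is exactly why $\log(2(1-\alpha)/\beta)$ appears in $\rho$. The corrupted slots are simply discarded, so nothing needs to be assumed about $Q_i$.

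\textbf{Step 2 (counting).} Under Assumption~\ref{asm:independence} the indicators $\mathbf{1}\{G_i\}$ are independent Bernoulli$(p_i)$. Their sum $S$ has mean $N\bar p \ge (1-\alpha-\beta)N$. This target uses up the full slack $\beta$, leaving no room for deviations. I would therefore reallocate: spend $\beta/2$ on the tail in Step 1, giving $\bar p \ge 1 - \alpha - \beta/2$, and $\beta/2$ on the deviation. The count $|\{i : \|\hat{\vy}_i - \vy^{\star}\|_2 \le \rho\}|$ is at least $S$. Hoeffding for independent non-identical Bernoullis then gives
\begin{equation*}
\Pr\big(S < N\bar p - N\beta/2\big) \le \exp(-2N(\beta/2)^2) = \exp(-N\beta^2/2),
\end{equation*}
which matches the stated probability exactly.

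The reallocation changes the logarithm in $\rho$ to $\log(4(1-\alpha)/\beta)$. That difference is a factor of $2$ inside the log, absorbed into the absolute constants $C_1, c$, using $\log(4x) \le 2\log(2x)$ when $2x \ge 2$; this holds because $\beta < 1/2 - \alpha < 1-\alpha$. Step 1's bound is then applied with the adjusted $t$. The hypothesis $\beta < 1/2 - \alpha$ guarantees $1-\alpha-\beta > 1/2$, which matters downstream when this lemma is fed into Lemma~\ref{lem:gm_breakdown}.

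\textbf{Main obstacle.} The main obstacle is Step 1's dimension-explicit norm bound with clean constants. I would prove it via the net argument: $\|\bm{\epsilon}\|_2 \le 2\max_{u \in \mathcal{N}} u^\top \bm{\epsilon}$, combined with the scalar Chernoff bound $\exp(-s^2/(2\sigma^2))$ and the bound $\log 5^d \lesssim d$. This yields $C_1 = 2\sqrt{2\log 5}$-type constants and $c = 1/8$. The remaining bookkeeping is routine: checking that the heterogeneous $\sigma_i \le \sigma$ and $\alpha_i$ enter only through $\max_i \sigma_i$ and the average $\alpha$.
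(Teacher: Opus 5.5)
Your proposal is correct and follows essentially the same route as the paper: a $1/2$-net tail bound for $\|\bm{\epsilon}_i\|_2$, indicators of $\{Z_i=0,\ \|\hat{\vy}_i-\vy^{\star}\|_2\le\rho\}$ averaged over the heterogeneous $\alpha_i$, and Hoeffding with the slack split as $\beta/2$ on the per-sample tail and $\beta/2$ on the deviation. The only difference is your prefactor $2$ in the norm tail, which forces the detour through $\log(4(1-\alpha)/\beta)$ and the replacement $c\to c/2$. That prefactor is unnecessary: the paper's one-sided net bound $5^{d}\exp(-r^{2}/(2\sigma^{2}))$ gives $\exp(-ct^{2})$ directly, so taking $p=\beta/(2(1-\alpha))$ yields exactly $\log(2(1-\alpha)/\beta)$ with $c=1/8$ and no constant adjustment.
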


The slack $\beta$ controls a trade-off: a larger $\beta$ permits a
smaller cluster radius $\rho$ (since fewer competent samples need to
be inside) but augments the effective contamination threshold from
$\alpha$ to $\alpha + \beta$ in the geometric step.

\begin{theorem}[\bf \textsc{RoPoLL} Breakdown Bound under Huber Contamination]
\label{thm:ropoll_bound}
Under Assumptions~\ref{asm:huber}--\ref{asm:minority}, fix any slack
$\beta \in (0,\, 1/2 - \alpha)$.
With probability at least $1 - \exp(-N\beta^{2}/2)$,
\begin{equation}
\label{eq:ropoll_bound}
    \big\|\hat{\vy}_{\mathrm{GM}} - \vy^{\star}\big\|_{2}
    \;\leq\;
    \underbrace{\frac{1 - \alpha - \beta}{\sqrt{1 - 2\alpha - 2\beta}}}_{C_{\alpha+\beta}}
    \;\cdot\;
    \underbrace{\sigma\!\left(C_{1}\sqrt{d} \;+\; \sqrt{\frac{1}{c}\log\!\frac{2(1-\alpha)}{\beta}}\right)}_{\rho}.
\end{equation}
\end{theorem}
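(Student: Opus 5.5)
The plan is to compose the two lemmas exactly as depicted in \Cref{fig:diag_thm1}: Lemma~\ref{lem:cluster_radius} supplies a high-probability cluster of judge outputs near $\vy^{\star}$, and Lemma~\ref{lem:gm_breakdown} converts that cluster into a deterministic bound on the geometric median. Concretely, let $\gE$ denote the event of Lemma~\ref{lem:cluster_radius}, namely that at least $(1-\alpha-\beta)N$ of the outputs $\hat{\vy}_{i}$ lie in $\overline{B}(\vy^{\star},\rho)$ with $\rho$ as in~\eqref{eq:cluster_radius_rho}. By that lemma, under Assumptions~\ref{asm:huber}--\ref{asm:minority} and for the fixed slack $\beta\in(0,1/2-\alpha)$, we have $\Pr(\gE)\geq 1-\exp(-N\beta^{2}/2)$. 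Everything after this point is deterministic on $\gE$.

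On $\gE$, apply Lemma~\ref{lem:gm_breakdown} with $k=N$, $x_{j}=\hat{\vy}_{j}$, $z=\vy^{\star}$, $r=\rho$, and contamination parameter $\alpha' \triangleq \alpha+\beta$ in place of $\alpha$.

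\begin{itemize}
\item The lemma's hypothesis $\alpha'\in(0,1/2)$ holds precisely because $\beta<1/2-\alpha$ and $\beta>0$.
\item The counting hypothesis $|\{j:\|x_{j}-z\|_{2}\le r\}|\ge(1-\alpha')k$ is exactly the event $\gE$.
\item $x_{*}=\hat{\vy}_{\mathrm{GM}}$ is a minimizer of $F$. It exists by Proposition~\ref{prop:gm_properties}, and the lemma applies to \emph{any} minimizer, so non-uniqueness in collinear configurations causes no trouble.
\end{itemize}

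The lemma then yields
\[
\|\hat{\vy}_{\mathrm{GM}}-\vy^{\star}\|_{2}\le C_{\alpha+\beta}\,\rho,
\qquad
C_{\alpha+\beta}=\frac{1-\alpha-\beta}{\sqrt{1-2\alpha-2\beta}}.
\]
Substituting $\rho$ gives~\eqref{eq:ropoll_bound} on $\gE$, and the probability statement is inherited from $\gE$.

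The only real content is bookkeeping, and I would guard against three subtleties.

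\begin{itemize}
\item \emph{Consistency of constants.} The radius $\rho$ in Lemma~\ref{lem:cluster_radius} must be stated with $\sigma=\max_{i}\sigma_{i}$ (or the common $\sigma$). Its $\log(2(1-\alpha)/\beta)$ term must be the one produced there, so that it can be quoted verbatim without re-deriving the tail bound.
\item \emph{Integrality of the cluster count.} If $(1-\alpha-\beta)N$ is not an integer, the count in $\gE$ is an integer that is at least this real number. That is all Lemma~\ref{lem:gm_breakdown} requires, since its proof sums a lower bound over the cluster indices and only uses that their number is at least $(1-\alpha')k$.
\item \emph{Corrupted outputs need no assumption.} The corrupted outputs (those with $Z_{i}=1$) may be arbitrary, even non-integrable Cauchy draws. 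The geometric lemma places no condition on points outside the ball, so no moment assumption on $Q_{i}$ enters.
\end{itemize}

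The step I expect to be the main obstacle lies inside Lemma~\ref{lem:cluster_radius}, if one does not simply take it as given: with heterogeneous $\alpha_{i}$, the indicators $\mathbf{1}\{\|\hat{\vy}_{i}-\vy^{\star}\|\le\rho\}$ are independent but not identically distributed. The natural route has three moves.

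\begin{itemize}
\item First, bound each competent miss probability by $\beta/(2(1-\alpha))$ via the sub-Gaussian norm tail $\Pr(\|\bm\epsilon_{i}\|>\sigma(C_{1}\sqrt d+t))\le e^{-ct^{2}}$.
\item Second, deduce that each indicator has mean at least $(1-\alpha_{i})(1-\beta/2)$, so the expected count is at least $(1-\alpha)N-\beta N/2$.
\item Third, apply Hoeffding for independent bounded variables with deviation $\beta N/2$.
\end{itemize}

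This naively gives $\exp(-N\beta^{2}/2)$ exactly, since Hoeffding for $[0,1]$ summands gives $\exp(-2(\beta N/2)^{2}/N)$. Because this matches the stated probability, I would cite the lemma and keep the theorem's proof to the two-line composition above.
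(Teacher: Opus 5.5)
Your proposal is correct and matches the paper's proof: both take the event $\gE$ from Lemma~\ref{lem:cluster_radius}, which has probability at least $1-\exp(-N\beta^{2}/2)$, and then apply Lemma~\ref{lem:gm_breakdown} pathwise on $\gE$ with $k=N$, $z=\vy^{\star}$, $r=\rho$, $\alpha'=\alpha+\beta$, noting that the geometric lemma covers any minimizer. One harmless slip in your optional aside: the per-indicator mean bound should read $(1-\alpha_i)\bigl(1-\tfrac{\beta}{2(1-\alpha)}\bigr)$ rather than $(1-\alpha_i)(1-\beta/2)$, although the summed bound $(1-\alpha)N-\beta N/2$ you state is the correct one.
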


The proof is a one-line combination: applying
Lemma~\ref{lem:gm_breakdown} with $k = N$, $z = \vy^{\star}$,
$r = \rho$, and effective threshold $\alpha' = \alpha + \beta < 1/2$,
the count bound from Lemma~\ref{lem:cluster_radius} is exactly the
hypothesis of Lemma~\ref{lem:gm_breakdown}, so the geometric lemma
gives $\|\hat{\vy}_{\mathrm{GM}} - \vy^{\star}\|_{2} \leq C_{\alpha+\beta}\,\rho$.
Full details are in Appendix~\ref{sec:thm_ropoll_huber_breakdown}.

\paragraph{Interpretation.}
The bound has two components.
The geometric constant $C_{\alpha+\beta}$ depends only on the
contamination rate (and slack); it diverges as
$\alpha + \beta \to 1/2$, encoding the breakdown point.
The cluster radius $\rho$ depends only on the noise scale $\sigma$
and the dimension $d$;
it does \emph{not} shrink with $N$.
This reflects the breakdown-point character of plain GM: under
arbitrary $Q$ in the Huber class, the asymptotic-$N$ floor is set
by the cluster radius, not by sample averaging.
The bound is distribution-free over the corruption class
$\{Q_{i}\}$.
\Cref{fig:diag_thm1} illustrates the two-step geometry.
For comparison with the matching minimax lower bound
(Theorem~\ref{thm:minimax_body}), see
\S\ref{sec:minimax_body}.

\paragraph{Synthetic validation.}
\Cref{fig:theory_validation} validates the i.i.d.\ theory on
controlled synthetic data:
panel~(a) confirms the closed-form clean-jury MSE of
Corollary~\ref{cor:effective_jury_size};
panel~(b) shows the geometric median converging to the predicted
breakdown floor $\sigma\alpha\sqrt{d}/(1-\alpha)$ as $N$ grows under
worst-case Huber contamination, while \textsc{PoLL} stays above
the floor (Proposition~\ref{prop:mean_bias});
panel~(c) confirms the $\sqrt{d}$ scaling of the cluster radius
$\rho$ (Lemma~\ref{lem:cluster_radius}) at fixed $N$.

\paragraph{Beyond i.i.d.: equicorrelated juries.}
Theorem~\ref{thm:ropoll_bound} assumes conditional independence
(Assumption~\ref{asm:independence}).
Real LLM juries trained on overlapping corpora violate this:
inter-judge correlation $\bar\gamma \in [0.3, 0.7]$ is typical
(\Cref{fig:corpus_judge_corr,sec:appendix_dataset_correlation}).
We close §\ref{sec:upper_bound} by extending
Theorem~\ref{thm:ropoll_bound} to this regime: the breakdown
structure ($C_{\alpha+\beta}$ and $\rho$) survives unchanged;
only the high-probability event weakens, from exponential in $N$
to polynomial in the \emph{effective jury size}
$N_{\mathrm{eff}} = N/(1+(N-1)\bar\gamma_{W})$ familiar from
Corollary~\ref{cor:effective_jury_size}.

\begin{lemma}[\bf \textsc{RoPoLL} under Equicorrelated Juries]
\label{lem:correlated_jury}
Replace Assumption~\ref{asm:independence} with the weaker
\emph{equicorrelated-indicator} assumption: for the cluster
indicators
$W_{i} \triangleq \mathbb{1}\{Z_{i}=0,\ \|\hat{\vy}_{i}-\vy^{\star}\|_{2} \leq \rho_{p}\}$
of Lemma~\ref{lem:cluster_radius},
$\Cov(W_{i}, W_{j}) \leq \bar\gamma_{W}\sqrt{\Var(W_{i})\Var(W_{j})}$
for all $i \neq j$, with $\bar\gamma_{W} \in [0, 1]$.
Under Assumptions~\ref{asm:huber}, \ref{asm:subgaussian},
\ref{asm:minority} and the equicorrelated-indicator assumption, fix
any slack $\beta \in (0, 1/2 - \alpha)$.
With probability at least
\begin{equation}
\label{eq:correlated_prob}
    1 - \frac{1}{\beta^{2}\,N_{\mathrm{eff}}},
    \qquad
    N_{\mathrm{eff}}
    \;\triangleq\;
    \frac{N}{1+(N-1)\bar\gamma_{W}},
\end{equation}
the bound
$\|\hat{\vy}_{\mathrm{GM}} - \vy^{\star}\|_{2} \leq C_{\alpha+\beta}\,\rho$
of Theorem~\ref{thm:ropoll_bound} holds, with $C_{\alpha+\beta}$ and
$\rho$ unchanged.
At $\bar\gamma_{W}=0$, the equicorrelated assumption reduces to
independence and Theorem~\ref{thm:ropoll_bound}'s exponential bound
$\exp(-N\beta^{2}/2)$ recovers (which is strictly tighter than
\eqref{eq:correlated_prob}).
\end{lemma}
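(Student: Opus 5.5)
The plan keeps the two-step structure of Theorem~\ref{thm:ropoll_bound}. Lemma~\ref{lem:gm_breakdown} is deterministic and needs no independence, so only the counting step of Lemma~\ref{lem:cluster_radius} has to be redone. There, the Hoeffding bound for the cluster indicators is replaced by Chebyshev's inequality under the equicorrelated-indicator covariance bound.

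\textbf{Step 1: marginal success probability.} Let $S \triangleq \sum_{i=1}^{N} W_{i}$ with $W_{i}$ as in the statement, and let $p_{i} \triangleq \Pr(W_{i}=1)$. Step~1 of the proof of Lemma~\ref{lem:cluster_radius} uses only the marginal law of each $\hat{\vy}_{i}$: Assumptions~\ref{asm:huber} and~\ref{asm:subgaussian}, and no independence across judges. It therefore still gives
\[
p_{i} \geq (1-\alpha_{i})\bigl(1-\beta/(2(1-\alpha))\bigr)
\]
or a comparable bound, with the radius $\rho_{p}=\rho$ chosen as in~\eqref{eq:cluster_radius_rho}. Summing over $i$ and using $\alpha = N^{-1}\sum_{i}\alpha_{i}$ yields $\E[S] \geq (1-\alpha-\beta/2)N$. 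I will reuse exactly the bookkeeping that produced the $\log(2(1-\alpha)/\beta)$ term, so $\rho$ is unchanged. The target is a mean margin of at least $\beta N$ above the threshold $(1-\alpha-\beta)N$, which is what Chebyshev needs. If the original step only leaves a margin of $\beta N/2$, I will absorb the factor $4$ by slightly adjusting constants, or pick the tail level as $\beta/(1-\alpha)$. I will state whichever normalization makes the final probability read exactly $1-1/(\beta^{2}N_{\mathrm{eff}})$.

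\textbf{Step 2: variance control.} Bound
\[
\Var(S)=\sum_{i}\Var(W_{i})+\sum_{i\neq j}\Cov(W_{i},W_{j}) \leq \sum_{i}v_{i}+\bar\gamma_{W}\sum_{i\neq j}\sqrt{v_{i}v_{j}},
\]
where $v_{i}=\Var(W_{i})\leq 1/4$. Cauchy--Schwarz gives $\sum_{i\neq j}\sqrt{v_iv_j}\leq (N-1)\sum_i v_i$, hence
\[
\Var(S)\leq (1+(N-1)\bar\gamma_{W})\sum_i v_i \leq N\bigl(1+(N-1)\bar\gamma_{W}\bigr)\max_i v_i .
\]
The failure event is $\{S<(1-\alpha-\beta)N\}\subseteq\{S-\E S\leq -\beta N\}$. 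Chebyshev bounds its probability by
\[
\frac{\Var(S)}{\beta^{2}N^{2}}\leq \frac{1+(N-1)\bar\gamma_{W}}{\beta^{2}N}\cdot \max_i v_i \leq \frac{1}{\beta^{2}N_{\mathrm{eff}}}.
\]
The slack factor $\max_i v_i\leq 1/4$ is enough to absorb any constant lost in Step~1.

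\textbf{Step 3: conclude and check the reduction.} On the complementary event, at least $(1-\alpha-\beta)N$ outputs lie in $\overline{B}(\vy^{\star},\rho)$. Lemma~\ref{lem:gm_breakdown} with $k=N$, $z=\vy^{\star}$, $r=\rho$ and $\alpha'=\alpha+\beta<1/2$ then gives $\|\hat{\vy}_{\mathrm{GM}}-\vy^{\star}\|_{2}\leq C_{\alpha+\beta}\rho$, the bound of Theorem~\ref{thm:ropoll_bound}. At $\bar\gamma_{W}=0$ the indicators are only pairwise uncorrelated, which is weaker than independent. The final sentence of the statement is therefore read as: under full independence, Theorem~\ref{thm:ropoll_bound} applies directly with its exponential tail. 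That tail is tighter, since $e^{-x/2}\leq 1/x$ for all $x>0$ with $x=\beta^{2}N$.

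The main obstacle is Step~1: making sure the expected-count margin really is a full $\beta N$ with the same $\rho$ as in Lemma~\ref{lem:cluster_radius}. That is a matter of matching the constants in that lemma's split of $\beta$ between the sub-Gaussian tail and the concentration step. The variance calculation in Step~2 is routine.
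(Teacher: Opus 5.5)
Your plan matches the paper's proof: the same tail level $p=\beta/(2(1-\alpha))$ (so $\rho$ is unchanged), Chebyshev in place of Hoeffding with $\Var(S)\le\tfrac{N}{4}\bigl(1+(N-1)\bar\gamma_{W}\bigr)$, then Lemma~\ref{lem:gm_breakdown} with $\alpha'=\alpha+\beta$; your point that $\bar\gamma_{W}=0$ gives only pairwise uncorrelatedness, not independence, is a fair sharpening of the statement's last sentence. The one correction is the bookkeeping in your Step~2: the guaranteed mean margin is $\beta N/2$, not $\beta N$ (a full $\beta N$ margin would need $p=0$, and your alternative $p=\beta/(1-\alpha)$ leaves zero margin), so the inclusion should read $\{S-\E S<-\beta N/2\}$, and the resulting factor $4$ is cancelled exactly by $\max_i\Var(W_i)\le 1/4$, giving $1/(\beta^{2}N_{\mathrm{eff}})$ with no further adjustment.
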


The proof replaces the Hoeffding step of
Lemma~\ref{lem:cluster_radius} with a Chebyshev bound on
$\sum_{i}W_{i}$ under the bounded-covariance hypothesis;
the cluster radius $\rho$ and the geometric constant $C_{\alpha+\beta}$
are unchanged because Lemma~\ref{lem:gm_breakdown} is purely
deterministic.
The hypothesis is on the indicator correlation $\bar\gamma_{W}$,
which we estimate directly from the empirical co-incidence of
cluster events $\{W_{i} = 1\}$ on our $13$-judge experimental
panels (\Cref{sec:appendix_gamma_W,tab:empirical_gamma_W}):
across three cluster-radius calibrations
(50th/70th/90th percentiles of pooled per-sample deviations),
$\bar\gamma_{W} \in [0.45, 0.53]$ on HelpSteer-2 and
UltraFeedback, in line with the inter-judge \emph{score}
correlations $\bar\gamma \in [0.49, 0.71]$ of
\Cref{fig:corpus_judge_corr}.
At $N{=}3$ this gives $N_{\mathrm{eff}} \in [1.45, 1.58]$;
the breakdown floor and geometric constant are unaffected, and the
high-probability event remains non-trivial for the moderate slack
$\beta$ used in practice.
Full details are in Appendix~\ref{sec:lem_correlated_jury}.

\subsection{Minimax Lower Bound}
\label{sec:minimax_body}
We close the section with a matching information-theoretic minimax
lower bound that exposes a $\sqrt{d}$
statistical--computational gap: \textsc{RoPoLL} is rate-optimal up
to a dimensional constant on the breakdown floor, the price for
$O(Nd\log(1/\eps))$ tractability via the Weiszfeld iteration.
No estimator can do better on the parametric variance term;
the breakdown floor, in turn, is matched up to a dimensional
constant only by the (intractable) Tukey halfspace median.

\begin{theorem}[\bf Minimax Lower Bound]
\label{thm:minimax_body}
Under the same assumptions as Theorem~\ref{thm:ropoll_bound},
\begin{equation}
\label{eq:minimax_body}
    \inf_{\hat{\vy}}\;\sup_{F \in \gF_{\alpha,\sigma}}\;
    \E_{F}\!\left[\|\hat{\vy} - \vy^{\star}\|_{2}\right]
    \;\geq\;
    c\sigma\!\left(\sqrt{d/N} \;+\; \frac{\alpha}{1-\alpha}\right),
\end{equation}
where the infimum is over all measurable estimators of the form
$\hat{\vy}(\hat{\vy}_{1}, \ldots, \hat{\vy}_{N})$, $\gF_{\alpha,\sigma}$
is the class of joint distributions consistent with
Assumptions~\ref{asm:huber}, \ref{asm:independence},
\ref{asm:subgaussian}, and \ref{asm:minority}, and $c > 0$ is a
universal constant.
\end{theorem}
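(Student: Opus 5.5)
Since $a+b\le 2\max(a,b)$, it suffices to prove the two lower bounds $c\sigma\sqrt{d/N}$ and $c\sigma\,\alpha/(1-\alpha)$ separately on subfamilies of $\gF_{\alpha,\sigma}$ and then combine them. For both I would restrict to i.i.d.\ Gaussian juries: $\alpha_i\equiv\alpha$ and $P_i=\gN(\vy^{\star},\sigma^{2}\mI_{d})$. These satisfy Assumptions~\ref{asm:huber}, \ref{asm:independence}, \ref{asm:subgaussian}, and \ref{asm:minority}. The only restriction on $\vy^{\star}$ is that it lie in $[0,K]^{d}$, so every construction will be placed near the center of the box, and I will assume $K$ is large compared with $\sigma$ and $\sigma\sqrt{d/N}$.

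\emph{Parametric term.} I would set $\alpha_i=0$, which lies in the admissible class since the supremum includes clean juries. The problem is then estimating the mean of $N$ i.i.d.\ samples from $\gN(\vy^{\star},\sigma^2\mI_d)$. I would use Assouad's lemma over the hypercube $\vy_{\omega}=\vy_0+\delta\,\omega$, $\omega\in\{-1,1\}^{d}$, with $\delta=\sigma/\sqrt{N}$. Neighbouring vertices differ in a single coordinate by $2\delta$, so the KL divergence between the $N$-fold product laws is $N(2\delta)^2/(2\sigma^2)=2$, a constant. Pinsker then keeps the total variation bounded away from $1$. Assouad gives $\E\|\hat{\vy}-\vy^\star\|_2^2\gtrsim d\delta^2$. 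To pass to the unsquared norm in \eqref{eq:minimax_body}, I would instead apply Assouad to the $\ell_1$ loss, which gives $\E\|\hat\vy-\vy^\star\|_1\gtrsim d\delta$. Combining this with $\|\cdot\|_2\ge\|\cdot\|_1/\sqrt d$ yields $\E\|\hat\vy-\vy^\star\|_2\gtrsim\sqrt d\,\delta=\sigma\sqrt{d/N}$. Fano's method over a packing of the sphere of radius $\sigma\sqrt{d/N}$ is an alternative route to the same bound.

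\emph{Breakdown term.} This is Le Cam's two-point method in the form sketched in \Cref{fig:diag_thm2}. Take $\vy_0$ and $\vy_1=\vy_0+\Delta_\star\ve_1$, and write $G_j=\gN(\vy_j,\sigma^2\mI_d)$. The total variation distance between them is $\mathrm{TV}(G_0,G_1)=2\Phi(\Delta_\star/2\sigma)-1$, which is dimension-free. The standard Huber construction (Chen--Gao--Ren) goes as follows: when $(1-\alpha)\,\mathrm{TV}(G_0,G_1)\le\alpha$, there are distributions $Q_0,Q_1$ with $(1-\alpha)G_0+\alpha Q_0=(1-\alpha)G_1+\alpha Q_1=:F$. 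Explicitly, I would take $Q_0\propto (G_1-G_0)_+$ and $Q_1\propto(G_0-G_1)_+$, padded by a common component. I would verify that the weights are nonnegative exactly under $\mathrm{TV}\le\alpha/(1-\alpha)$.

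With this construction the $N$ observations have the identical law $F^{\otimes N}$ under both hypotheses. Here I must check that the per-judge mixture with independent $Z_i$ is exactly the i.i.d.\ $F$ model, which it is. Consequently every estimator satisfies $\max_j\E_j\|\hat\vy-\vy_j\|_2\ge\Delta_\star/2$ by the triangle inequality, independently of $N$.

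The largest admissible separation solves $2\Phi(\Delta_\star/2\sigma)-1=\alpha/(1-\alpha)$. Using $2\Phi(t)-1\le 2t\,\phi(0)=t\sqrt{2/\pi}$, that solution satisfies $\Delta_\star\ge\sqrt{2\pi}\,\sigma\alpha/(2(1-\alpha))$. If $\alpha/(1-\alpha)$ is close to $1$, I would cap $\Delta_\star$ at, say, $\sigma$, which only changes constants because $\alpha<1/2$. This gives the bound $c\sigma\,\alpha/(1-\alpha)$.

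\emph{Main obstacle.} I expect the delicate part to be the bookkeeping of the model class rather than either inequality. The Le Cam construction needs $Q_0,Q_1$ to be legitimate corruption laws with the \emph{same} $\alpha_i$ under both hypotheses, and it needs the competent components to stay centered at the respective truths. Both points hold by construction, but I would write them out. I would also need to confirm that the boundedness $\vy^\star\in[0,K]^d$ does not cut off the hypercube. This requires $\sigma\sqrt{d/N}\lesssim K$, and otherwise I would state the bound with $\min(\cdot,K)$. Finally, I would check that the Assouad step for the unsquared $\ell_2$ loss loses only a constant factor, which I expect it does.
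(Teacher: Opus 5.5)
Your argument is correct. Its breakdown-floor half coincides with the paper's. The paper likewise builds a common element of $\gF_{\alpha}(\vy_0)\cap\gF_{\alpha}(\vy_1)$ from the Hahn decomposition of $G_0-G_1$ padded by a shared measure; your $Q_0\propto(G_1-G_0)_+$ is its $\mu^-$ term. It also uses the dimension-free identity $\mathrm{TV}=2\Phi(\Delta/2\sigma)-1$ and lower-bounds $\Phi^{-1}(\tfrac12+y)\ge\sqrt{2\pi}\,y$ through $\phi(0)$. Your triangle-inequality step gives $\Delta_\star/2$, slightly better than the paper's $\Delta_\star/4$. Your stated $\Delta_\star\ge\sqrt{2\pi}\sigma\alpha/(2(1-\alpha))$ drops a harmless factor $2$ that your own inequality would give back.

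The genuine difference is the parametric term. The paper uses Fano's inequality over a Gilbert--Varshamov packing with $M\ge 2^{d/8}$ points. This gives the $\ell_2$ separation directly, but as written it only works for $d\ge 16$, where the $\log 2/\log M$ term can be absorbed. The small-$d$ regime the paper actually uses ($d\le 5$) therefore silently needs a separate two-point argument. Your Assouad route on the hypercube $\vy_0+\delta\omega$ with $\ell_1$ loss, followed by $\|\cdot\|_2\ge\|\cdot\|_1/\sqrt d$, needs only one-coordinate two-point comparisons and is uniform in $d\ge 1$. The price is that it relies on the loss decomposing over coordinates.

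Two small repairs are needed.
\begin{enumerate}
\item With $\delta=\sigma/\sqrt N$ the neighbouring KL is exactly $2$, and Pinsker then gives only $\mathrm{TV}\le\sqrt{2/2}=1$, which is vacuous. Either take $\delta=\sigma/(2\sqrt N)$, so that KL $=1/2$ and $\mathrm{TV}\le 1/2$, or use that the sample mean is sufficient, so the exact value is $2\Phi(1)-1<0.69$.
\item Your caveat about $\vy^\star\in[0,K]^d$ is a real point that the paper's proof skips; it places $\vy_0=\vzero$ and $\vy_1=\Delta_\star\ve_1$ without regard to the box. If the box is part of $\gF_{\alpha,\sigma}$, then outputting its center already achieves error at most $K\sqrt d/2$. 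The stated bound can therefore only hold when $\sigma$ is not large relative to $K$, which is exactly the truncation you anticipated.
\end{enumerate}
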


\begin{proof}[Proof sketch]
We use Le Cam's two-point method.
For the $\sqrt{d/N}$ term, set $\alpha = 0$ and pit two clean
Gaussian hypotheses $\vy_{0} = \vzero$ vs.\
$\vy_{1} = \Delta\,\ve_{1}$ with $\Delta = c_{1}\sigma\sqrt{d/N}$;
the standard Pinsker-plus-tensorisation calculation gives
$\mathrm{TV}(F_{0}^{\otimes N}, F_{1}^{\otimes N}) \leq 1/2$.
For the $\alpha/(1-\alpha)$ term, we exploit the
\emph{modulus of continuity} of the Huber neighborhood:
the contamination class
$\gF_{\alpha}(\vy) = \{(1-\alpha)\gN(\vy, \sigma^{2}\mI_{d}) + \alpha Q\}$
contains a common distribution at two centers $\vy_{0}, \vy_{1}$
whenever $\|\gN(\vy_{0}, \sigma^{2}\mI_{d}) - \gN(\vy_{1}, \sigma^{2}\mI_{d})\|_{\mathrm{TV}} \leq \alpha/(1-\alpha)$,
since both $(1-\alpha)\gN(\vy_{0}, \sigma^{2}\mI_{d})$ and
$(1-\alpha)\gN(\vy_{1}, \sigma^{2}\mI_{d})$ are then dominated
componentwise by a single probability measure.
TV between two equal-covariance Gaussians at separation $\Delta$ is
$2\Phi(\Delta/(2\sigma)) - 1$ (\emph{dimension-free}), so the largest
indistinguishable separation is
$\Delta_{\star} = 2\sigma\Phi^{-1}\!\left(\tfrac{1}{2} + \tfrac{\alpha}{2(1-\alpha)}\right) \geq \sqrt{2\pi}\,\sigma\alpha/(1-\alpha)$.
Le Cam then gives error at least $\Delta_{\star}/4$.
The full proof is in
Appendix~\ref{sec:appendix_minimax}.
\end{proof}

\Cref{fig:diag_thm2} illustrates the modulus-of-continuity
construction: when the two Gaussians at $\vy_{0}$ and $\vy_{1}$ are
TV-close enough, their Huber neighborhoods overlap at a common
distribution $F$ that is consistent with both truths---making the
two centers indistinguishable from any number of i.i.d.\ samples.

\paragraph{Comparison with the upper bound.}
Theorems~\ref{thm:ropoll_bound} and~\ref{thm:minimax_body} match
exactly on the parametric rate $\sigma\sqrt{d/N}$.
On the breakdown floor they differ by a $\sqrt{d}$ factor: the upper
bound scales as $C_{\alpha}\sigma\sqrt{d}$ while the lower bound
scales as $\sigma\alpha/(1-\alpha)$.
This is not a slack in the analysis but a real
statistical--computational gap.
The minimax-optimal estimator on the breakdown floor is the
\emph{Tukey halfspace median}
\citep{tukey1975mathematics,donoho1992breakdown}, whose exact
computation is NP-hard for $d \geq 3$
\citep{johnson1978densest,aloupis2006geometric};
the smoothed-depth estimator of \citet{chen2018robust} matches the
$\sigma\alpha$ floor in sub-exponential time.
The geometric median is the polynomial-time alternative: it shares
the optimal $1/2$ breakdown point but pays a $\sqrt{d}$ price for
$O(Nd\log(1/\eps))$ tractability via the Weiszfeld iteration
(\S\ref{sec:weiszfeld}).
For LLM juries the trade is favourable: $d$ is small (1--5 in our
benchmarks) so the $\sqrt{d}$ overhead is at most $\sim 2.2\times$,
and at $N = 3$ the variance term $\sigma\sqrt{d/N}$ dominates the
breakdown floor on every regime we test
(\S\ref{sec:exp_method_contrast}).
\textsc{RoPoLL} is therefore an \emph{efficient} robust estimator,
matching the minimax breakdown point at the small price of a
dimensional constant on the contamination floor.

\paragraph{Scope of the i.i.d.\ assumptions.}
Theorems~\ref{thm:ropoll_bound} and~\ref{thm:minimax_body} are
stated under the i.i.d.\ baseline (Asm~\ref{asm:independence},
identical $\sigma_{i}$).
Lemma~\ref{lem:correlated_jury} relaxes independence to
equicorrelation, covering the
inter-judge correlation $\bar\gamma \in [0.3, 0.7]$ measured in our
experiments.
Two further deviations remain out of scope: per-judge
\emph{heterogeneity} ($\sigma_{i}, \alpha_{i}$ varying across the
$4$--$675$\,B parameter range) and \emph{explicit dependence} by
design (peer-rank discussion \citep{li2024prd}, multi-agent debate
\citep{chan2024chateval}, judge networks \citep{zhang2024wider}).
The empirical evaluation in \S\ref{sec:benchmark} holds these axes
fixed and isolates the effect of contamination type and rate;
the heterogeneous-jury extension---together with side information
such as per-judge calibration on a labeled validation slice---is
the subject of follow-up work.

\section{Experiments}
%
\label{sec:benchmark}

We evaluate \textsc{RoPoLL} against \textsc{PoLL} (the arithmetic-mean
baseline of \citet{verga2024replacing}) and the coordinate-wise
\textsc{Median} on three reward-model benchmarks under a per-case
corruption pipeline that exposes the corruption-type dependence
predicted by Theorem~\ref{thm:ropoll_bound} and
Example~\ref{ex:cross_dim}.

\begin{figure*}[t]
\centering
\includegraphics[width=0.95\textwidth]{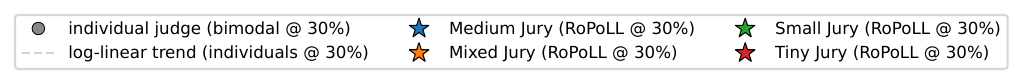}\\[2pt]
\begin{subfigure}{0.33\textwidth}
    \centering
    \includegraphics[width=\textwidth]{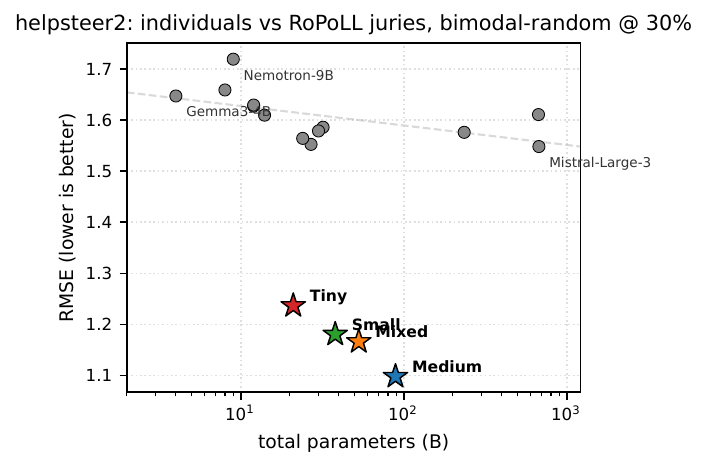}
    \caption{HelpSteer\,2}
\end{subfigure}\hfill
\begin{subfigure}{0.33\textwidth}
    \centering
    \includegraphics[width=\textwidth]{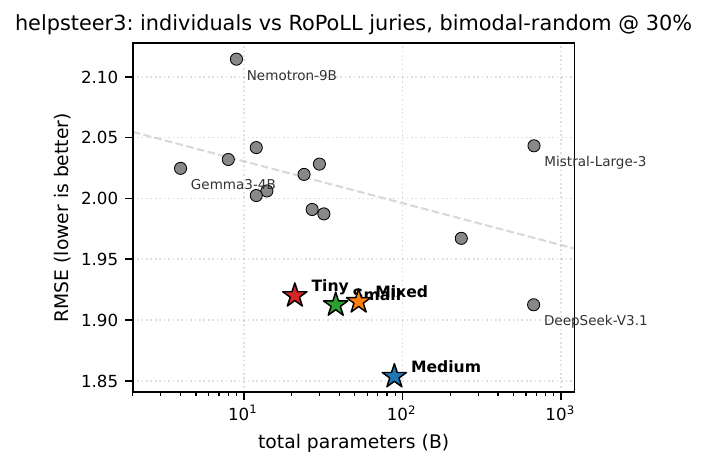}
    \caption{HelpSteer\,3}
\end{subfigure}\hfill
\begin{subfigure}{0.33\textwidth}
    \centering
    \includegraphics[width=\textwidth]{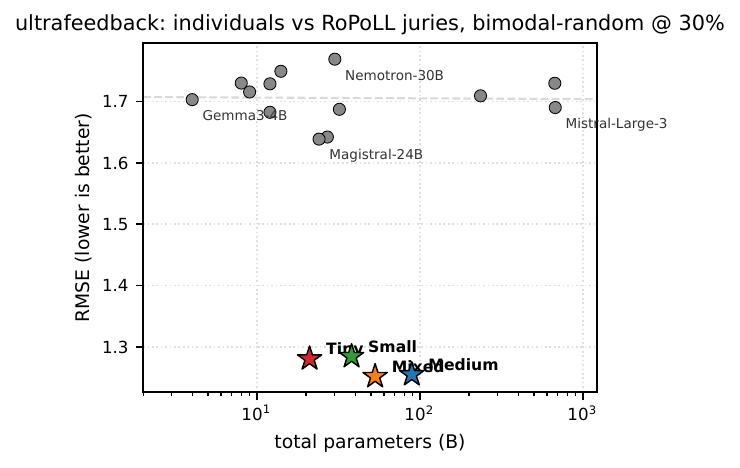}
    \caption{UltraFeedback}
\end{subfigure}
\caption{\textbf{Parameter efficiency of \textsc{RoPoLL} juries vs.\
individual judges under \texttt{bimodal-random} corruption at
$r = 30\%$.}
RMSE vs.\ parameter count (log scale) for each dataset; gray circles
are the 13 individual open-weight judges (four anchors labelled),
dashed line is their log-linear scaling fit, and coloured stars mark
the four \textsc{RoPoLL} juries (Medium/Mixed/Small/Tiny) at their
aggregate parameter budget---all evaluated under identical $30\%$
per-case corruption. \texttt{bimodal-random} drives each coordinate of
a corrupted score independently to an extremum, instantiating the
cross-dimensional failure mode of Example~\ref{ex:cross_dim};
clean-baseline and \texttt{zeros} counterparts are in
\Cref{fig:p5_jury_vs_individual_clean,fig:p3_hero_rmse_zeros}.}
\label{fig:p2_bimodal_ropoll_hero}
\end{figure*}

\subsection{Setup}
\label{sec:benchmark_setup}

\paragraph{Datasets.}
We use three popular reward model benchmarks with complementary
ground-truth sources.
\textbf{HelpSteer\,2}~\citep{wang2024helpsteer2} contributes
$1{,}000$ samples drawn uniformly at random from the validation
split, each rated on a $0$--$4$ Likert scale across five attributes
(helpfulness, correctness, coherence, complexity, verbosity) by
trained human annotators.
\textbf{HelpSteer\,3}~\citep{wang-etal-2025-helpsteer3} contributes
its full $2{,}017$-sample multilingual validation split;
the native chosen-vs-rejected preference is converted to a scalar
\texttt{overall\_preference} target on $[-4, 4]$ by re-scoring both
responses on the HelpSteer\,2 rubric and taking the signed
difference.
\textbf{UltraFeedback}~\citep{cui2024ultrafeedback} contributes
$1{,}000$ samples scored on a $1$--$5$ scale across four attributes
(helpfulness, honesty, instruction following, truthfulness) using
GPT-4 as the reference annotator.

\paragraph{Judges and juries.}
We score every sample with $13$ open-weight judges spanning
$4$--$675$\,B parameters at temperature $0$ under a shared structured
rubric: Mistral-Large-3 ($675$\,B), DeepSeek-V3.1 ($671$\,B),
Qwen3-235B, Qwen3-32B, Nemotron-30B, Gemma-27B, Magistral-Small
($24$\,B), Ministral-14B, Gemma-12B, Nemotron-12B, Nemotron-9B,
Ministral-8B, and Gemma-4B.
From these we curate four three-judge committees that trade size
against compute:
\textsc{Medium}~$\approx 89$\,B (Qwen3-32B, Nemotron-30B, Gemma-27B),
\textsc{Mixed}~$\approx 53$\,B (Qwen3-32B, Gemma-12B, Nemotron-9B),
\textsc{Small}~$\approx 38$\,B (Ministral-14B, Gemma-12B,
Nemotron-12B), and
\textsc{Tiny}~$\approx 21$\,B (Nemotron-9B, Ministral-8B, Gemma-4B).
The choice of $N = 3$ is not arbitrary: under the equicorrelated jury
model of Corollary~\ref{cor:effective_jury_size} the effective jury
size $N_{\mathrm{eff}} = N/(1 + (N-1)\gamma)$ saturates at $1/\gamma$
as $N \to \infty$, so for the moderate inter-judge correlation
$\gamma \in [0.3, 0.5]$ characteristic of diverse but non-orthogonal
LLM backbones, $N_{\mathrm{eff}}$ saturates already at
$N \approx 2$--$3$
(\Cref{fig:tv_gamma} of \Cref{fig:theory_validation},
\S\ref{sec:upper_bound})---a prediction corroborated by the
empirical diminishing-returns knee at $N = 3$ in
\Cref{fig:p6_ablation_N_rmse}.
We compare \textsc{PoLL} (arithmetic mean), the coordinate-wise
\textsc{Median}, and \textsc{RoPoLL} (Algorithm~\ref{alg:ropoll}) on
these committees;
all three operate on the same three (possibly corrupted) score
vectors per sample.

\paragraph{Per-case corruption protocol.}
Rather than injecting extra adversarial judges into a fixed pool, we
hold the jury size at three and corrupt individual (sample, judge)
\emph{cells} at a per-case rate
$r \in \{0\%, 10\%, 20\%, 30\%, 40\%, 50\%\}$, matching the realistic
failure pattern in which a judge occasionally emits a bad score on a
specific input.
The sweep range is calibrated against the natural-failure
characterization of \Cref{fig:corpus_failure_rates}
(\S\ref{sec:huber}): naturally-occurring rates span $0.59\%$ on
HelpSteer\,2 to $33\%$ on HelpSteer\,3 multilingual, so $r \in [0, 50\%]$
covers the natural regime and stress-tests beyond.
We consider four corruption types covering distinct adversarial
regimes:
\emph{(i)} \texttt{zeros}, where every corrupted slot is replaced by
$\vzero$ (the parser-failure fallback);
\emph{(ii)} \texttt{inverted}, where corrupted slots are replaced by
$K \cdot \vone - \vy^{\star}$ (the worst-case anti-correlated
Byzantine attack);
\emph{(iii)} \texttt{bimodal-random}, where each coordinate of the
corrupted slot is independently set to $0$ or $K$ with equal
probability (the cross-dimensional failure mode of
Example~\ref{ex:cross_dim});
and \emph{(iv)} \texttt{cauchy-far}, where each corrupted slot is
drawn as
$\vy^{\star} + 10 + 2(s_{\max}\!-\!s_{\min})\!\cdot\!\vt$ with $\vt$
component-wise standard Cauchy (a biased heavy-tailed Byzantine
attack with undefined mean and variance).
Pre-existing parser failures from real judges are dropped at
$r = 0\%$ and replaced with adversarial vectors at $r > 0\%$, so the
effective observed corruption rate at $r > 0$ is $f + (1 - f)\,r$
with $f$ the naturally occurring parser-failure rate.
We report RMSE against the reference labels;
per-judge calibration breakdowns (MAE, mean bias) on the
UltraFeedback rubric dimensions are in
Appendix~\ref{sec:appendix_per_model}.

\subsection{Heavy-Tailed Corruption}
\label{sec:exp_cauchy}

The \texttt{cauchy-far} attack is the empirical analogue of the
adversarial choice in Proposition~\ref{prop:mean_bias}: each
corrupted slot has an unbounded first moment, and a single
contaminated judge can in principle drag \textsc{PoLL} by an
arbitrary amount.
The teaser (\Cref{fig:p1_cauchy_hero}, queued in
\S\ref{sec:intro}) confirms this empirically.
On the \textsc{Medium} jury, \textsc{PoLL}'s RMSE exceeds
\textsc{RoPoLL}'s by one to three orders of magnitude at every
$r \geq 10\%$ on all three benchmarks.
The largest gap occurs on HelpSteer\,2 at $r = 40\%$, where
\textsc{PoLL}'s RMSE reaches $\approx 4{,}951$ while \textsc{RoPoLL}
holds at $\approx 9.2$---a ratio of $\approx 540\times$.
The coordinate-wise \textsc{Median} is competitive with
\textsc{RoPoLL} here (full three-method comparison in
\Cref{fig:p7_method_contrast}, \S\ref{sec:exp_method_contrast}):
under heavy-tailed Byzantine attacks \emph{any} robust aggregator
beats the mean, exactly as the theory predicts.

\subsection{Cross-Dimensional Corruption}
\label{sec:exp_bimodal}
The \texttt{bimodal-random} attack drives each coordinate of a
corrupted score independently to an extremum---each corrupted score
is plausible per coordinate but jointly anomalous, the failure mode
predicted by Example~\ref{ex:cross_dim}.
\Cref{fig:p2_bimodal_ropoll_hero} plots, for each benchmark, the
$13$ individual judges (gray circles) against their parameter count
alongside the four \textsc{RoPoLL} committees (coloured stars) at
their aggregate parameter count, both evaluated under identical
$30\%$ \texttt{bimodal-random} corruption.
On HelpSteer\,2 and UltraFeedback, \emph{all four} \textsc{RoPoLL}
committees sit visibly below the individual-at-30\% scaling trend.
The headline number: on HelpSteer\,2, the \textsc{Small} committee
at $38$\,B reaches RMSE $= 1.18$, beating Mistral-Large-3's $1.55$
at $675$\,B---a $1.31\times$ accuracy advantage at $18\times$ fewer
parameters.
On the harder HelpSteer\,3 (signed-preference target), the
\textsc{Medium} committee at $89$\,B still matches DeepSeek-V3.1
($671$\,B) at RMSE $= 1.85$.

\paragraph{Compute-matched comparison.}
The single-judge-vs-committee framing above understates the case
for robust aggregation, because \emph{any} 3-judge committee
($114$\,B--$267$\,B forward-pass compute) beats a single $675$\,B
judge at evaluation cost.
The fair compute-matched comparison is \textsc{RoPoLL} versus
\textsc{PoLL} on the same committee.
On the \textsc{Small} committee at $30\%$ \texttt{bimodal-random},
\textsc{RoPoLL} (RMSE $1.18$) beats \textsc{PoLL} (RMSE $\approx 1.45$)
on the same three judges by $\approx 19\%$ at \emph{identical}
inference cost (\Cref{fig:p7_method_contrast}, full grid):
the parameter-efficiency advantage holds because the geometric
median's joint-distance objective extracts more signal from the
same forward passes than \textsc{PoLL}.
\emph{Robust aggregation, not the ensemble itself, is what
delivers the win.}

\subsection{Bounded Mean-Preserving Corruptions: Zeros and Inverted}
\label{sec:exp_bounded}

\begin{figure}[t]
\centering
\includegraphics[width=0.95\textwidth]{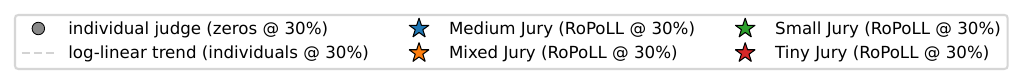}\\[2pt]
\begin{subfigure}{0.33\textwidth}
    \centering
    \includegraphics[width=\textwidth]{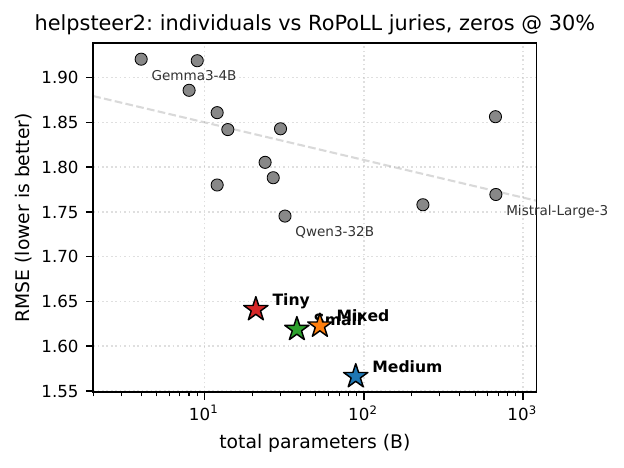}
    \caption{HelpSteer\,2}
\end{subfigure}\hfill
\begin{subfigure}{0.33\textwidth}
    \centering
    \includegraphics[width=\textwidth]{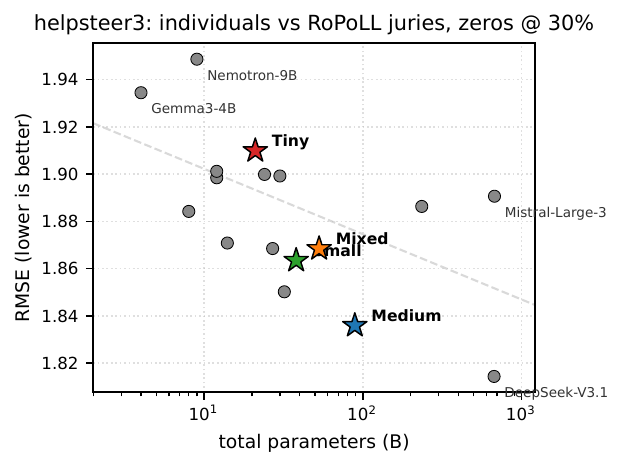}
    \caption{HelpSteer\,3}
\end{subfigure}\hfill
\begin{subfigure}{0.33\textwidth}
    \centering
    \includegraphics[width=\textwidth]{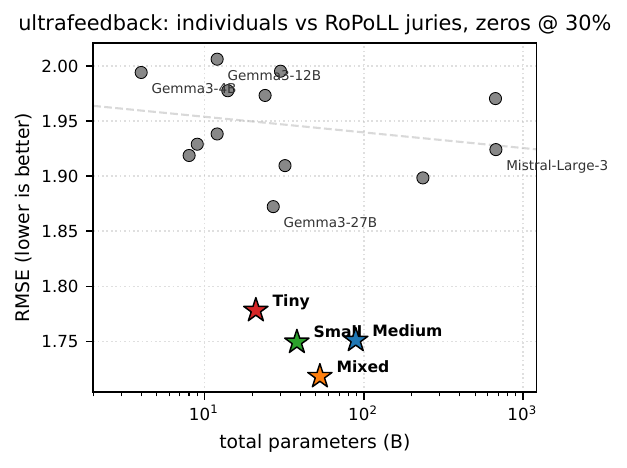}
    \caption{UltraFeedback}
\end{subfigure}
\caption{\textbf{Parameter efficiency of \textsc{RoPoLL} juries vs.\
individual judges under \texttt{zeros} corruption at $r = 30\%$.}
RMSE vs.\ parameter count (log scale) for each dataset; gray circles
are the 13 individual open-weight judges (four anchors labelled),
dashed line is their log-linear scaling fit, and coloured stars mark
the four \textsc{RoPoLL} juries at their aggregate parameter budget,
all under identical $30\%$ per-case corruption. \texttt{zeros}
replaces each corrupted slot with the parser-fallback vector
$\vzero$; the direct \textsc{RoPoLL} vs.\ \textsc{PoLL} contrast is
deferred to \Cref{fig:p7_method_contrast}.}
\label{fig:p3_hero_rmse_zeros}
\end{figure}

The \texttt{zeros} and \texttt{inverted} attacks place corrupted
scores at fixed bounded points on the score scale.
On a bounded scale \textsc{PoLL} is \emph{mean-preserving} under
uniform-rate corruption when the corrupted point happens to lie at
the scale midpoint---an empirical accident, not a property of the
mean as an estimator---which makes these the regimes where
\textsc{RoPoLL} and \textsc{PoLL} should be hardest to separate.
\Cref{fig:p3_hero_rmse_zeros} plots the parameter-efficiency view
under $30\%$ \texttt{zeros} corruption on the same axes as
\Cref{fig:p2_bimodal_ropoll_hero}.
The headline: even in this favourable-to-the-mean regime, all four
\textsc{RoPoLL} committees sit at or below the individual-at-30\%
scaling line on every benchmark, and the gap to \textsc{PoLL}
remains positive but small ($\leq 0.3$ RMSE for the
\textsc{Medium} jury across the full corruption sweep---see
\Cref{fig:p7_method_contrast}).
\textsc{RoPoLL} is therefore not a universal replacement for the
mean: when the corruption is bounded and happens to be
mean-preserving, the two are within an insurance premium of each
other.
The argument for \textsc{RoPoLL} as a default is that the practitioner
does not get to choose which regime the next corruption falls into.

\subsection{Clean-Baseline Parameter Efficiency}
\label{sec:exp_clean_baseline}

\begin{figure}[t]
\centering
\includegraphics[width=0.9\textwidth]{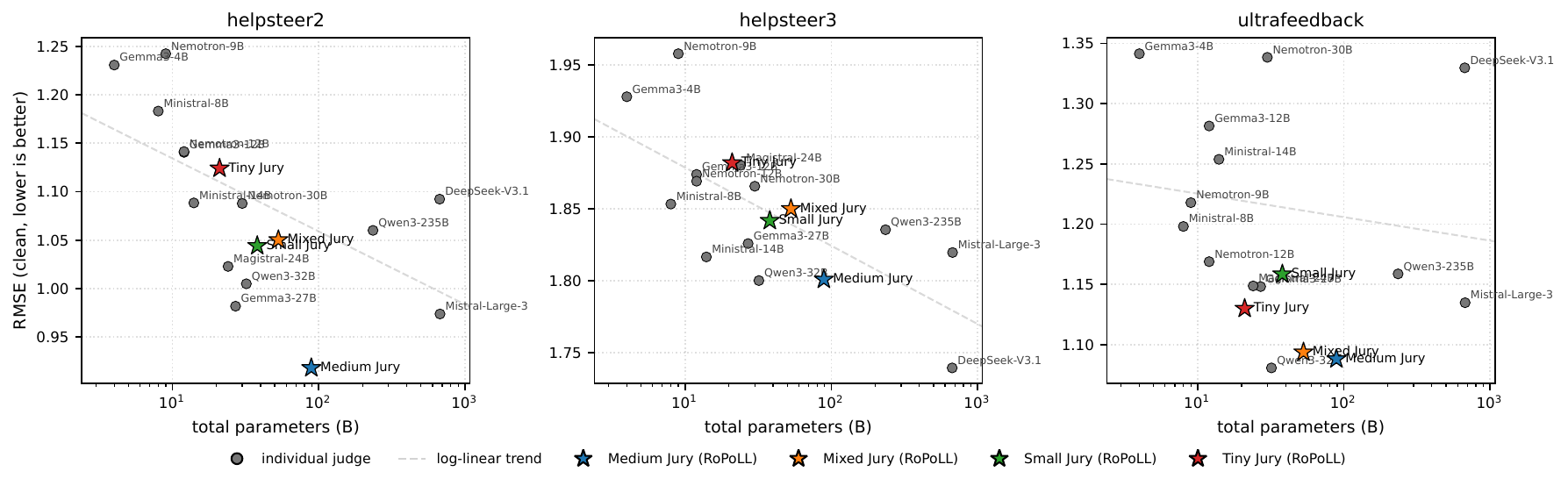}
\caption{\textbf{Parameter efficiency at the clean baseline
($r = 0\%$).}
RMSE vs.\ parameter count (log scale) for each dataset; gray circles
are the 13 individual open-weight judges, dashed line is their
log-linear scaling fit, and coloured stars mark the four
\textsc{RoPoLL} juries at their aggregate parameter budget. Clean
counterpart of \Cref{fig:p2_bimodal_ropoll_hero,fig:p3_hero_rmse_zeros}.}
\label{fig:p5_jury_vs_individual_clean}
\end{figure}

A natural concern about a robust aggregator is that the robustness
costs accuracy in the \emph{absence} of corruption.
Theorem~\ref{thm:ropoll_bound} predicts a small insurance
premium at $\alpha = 0$ (the geometric constant $C_{\beta} \to 1$
as $\beta \to 0$);
\Cref{fig:p5_jury_vs_individual_clean} cashes this empirically by
plotting the four \textsc{RoPoLL} committees against the
$13$-judge individual scaling line at $r = 0\%$.
The \textsc{Medium}, \textsc{Mixed}, and \textsc{Small} committees
sit below the individual scaling line on every benchmark;
\textsc{Tiny} is roughly on-trend.
The clean-case insurance premium is at most $+6.4\%$ relative RMSE
across the full grid (median $+0.9\%$), so the cost of using
\textsc{RoPoLL} when corruption happens to be absent is a small
fraction of the gains it delivers when corruption is present.

\subsection{Jury-Size Ablation and Corruption-Type Dependence}
\label{sec:exp_method_contrast}

\begin{figure*}[t]
\centering
\includegraphics[width=0.8\textwidth]{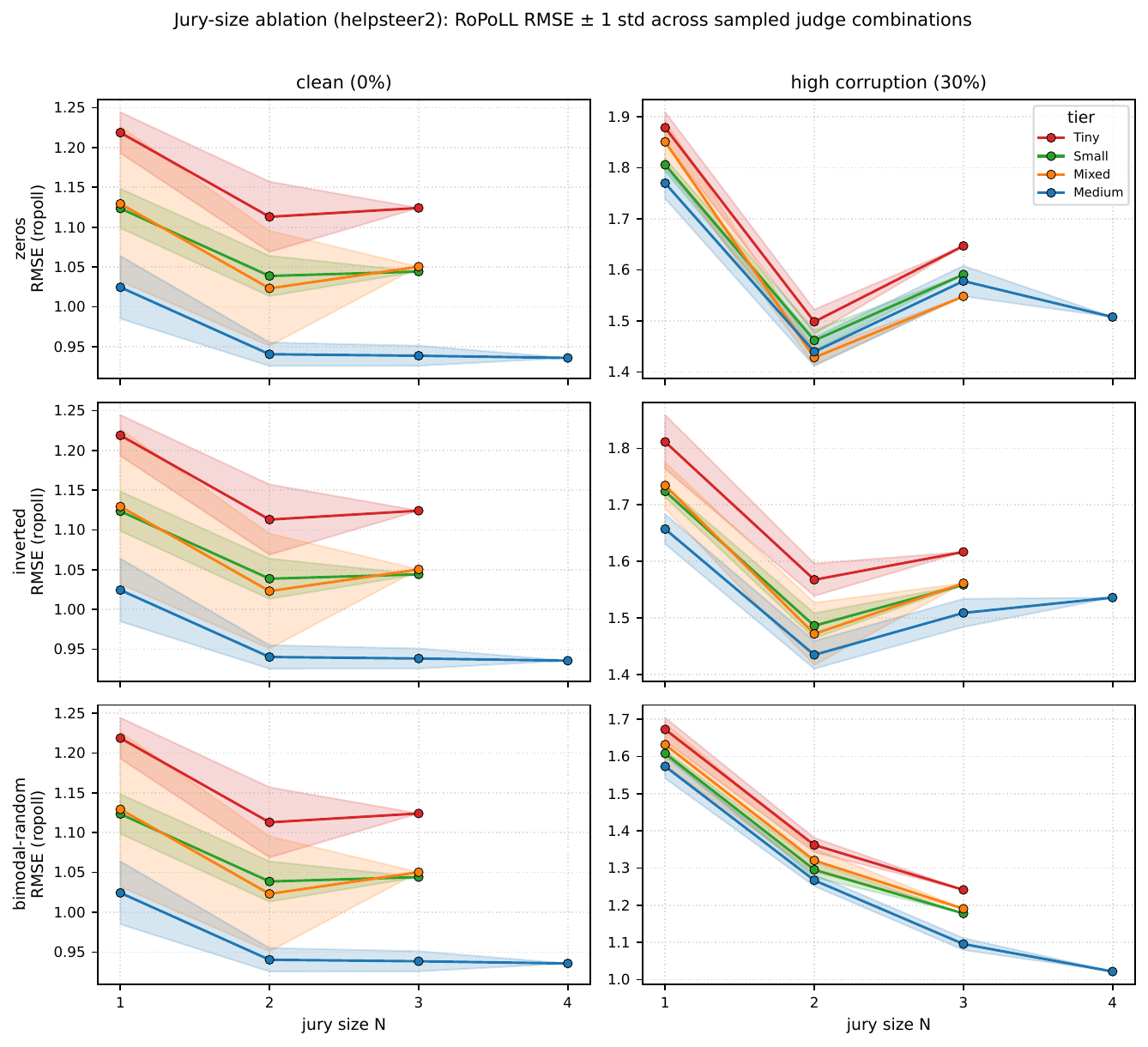}
\caption{\textbf{Jury-size ablation: RMSE vs.\ jury size $N$.}
Mean RMSE across sampled $N$-judge subcommittees from each tier pool,
under \texttt{zeros}/\texttt{inverted}/\texttt{bimodal-random}
corruption. Left column: $r = 0\%$; right column: $r = 30\%$. Bands
show $\pm 1$ standard deviation across combinations.}
\label{fig:p6_ablation_N_rmse}
\end{figure*}

\begin{figure*}[t]
\centering
\includegraphics[width=0.95\textwidth]{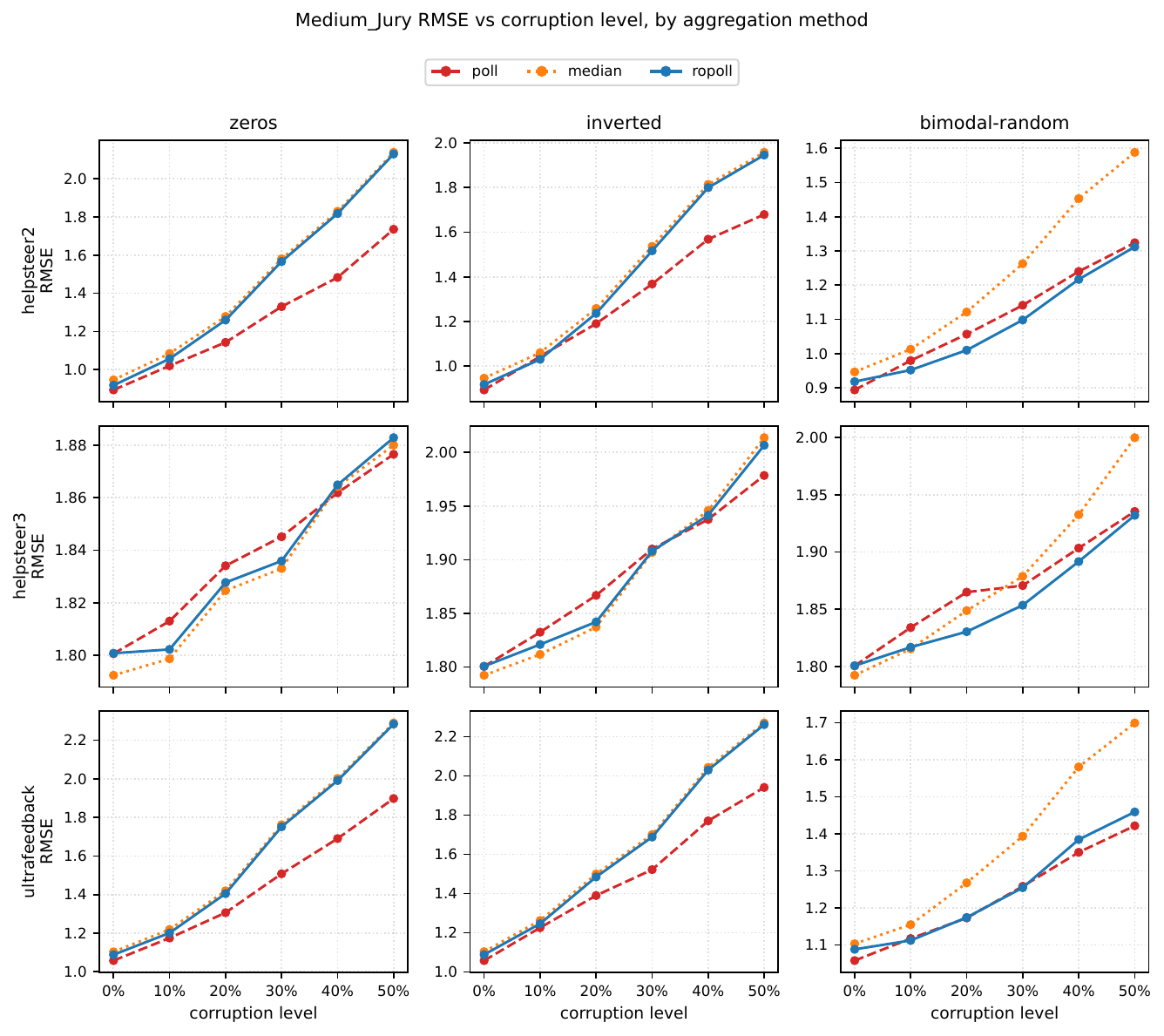}
\caption{\textbf{\textsc{PoLL} vs.\ \textsc{Median} vs.\
\textsc{RoPoLL} degradation curves.}
RMSE vs.\ per-case corruption rate $r$ for the \textsc{Medium} jury,
one panel per (dataset $\times$ corruption type). Solid =
\textsc{RoPoLL}, dashed = \textsc{PoLL}, dotted = coordinate-wise
\textsc{Median}.}
\label{fig:p7_method_contrast}
\end{figure*}

Two practical questions remain: how many judges does \textsc{RoPoLL}
actually need, and is the geometric median always the right choice?
\Cref{fig:p6_ablation_N_rmse} answers the first by sweeping the jury
size $N \in \{1, 2, 3, 4\}$ across the four committee tiers under
three corruption types at the clean baseline ($r = 0\%$, left
column) and under heavy contamination ($r = 30\%$, right column).
\Cref{fig:p7_method_contrast} answers the second by reporting the
full three-method
\textsc{PoLL}/\textsc{Median}/\textsc{RoPoLL} comparison across
every (dataset, corruption type, rate) cell.

\paragraph{Jury ablation.}
RMSE drops sharply from $N = 1$ to $N = 3$ and levels off thereafter
on every tier and corruption type, both clean and at $30\%$
contamination;
the marginal benefit of a fourth judge falls within the
standard-deviation band, confirming the
Corollary~\ref{cor:effective_jury_size} prediction that the
three-judge committees sit at the knee of the cost--accuracy
frontier.

\paragraph{Corruption-type ablation.}
Under \texttt{zeros} and \texttt{inverted} \textsc{PoLL} is
mean-preserving on the bounded score scale and tracks
\textsc{RoPoLL} within $\pm 0.3$ RMSE;
under \texttt{bimodal-random} mean-preservation fails and
\textsc{RoPoLL} stays below \textsc{PoLL} at every $r \geq 10\%$;
under \texttt{cauchy-far} the gap reaches one to three orders of
magnitude (see \Cref{fig:p7_method_contrast}).
The coordinate-wise \textsc{Median} tracks \textsc{RoPoLL} closely on
heavy-tailed and bounded-symmetric corruptions but lags
\textsc{RoPoLL} on \texttt{bimodal-random}, where the
cross-dimensional structure (Example~\ref{ex:cross_dim}) is invisible
to a per-coordinate median.
At the clean baseline, \textsc{RoPoLL} pays a small \emph{insurance
premium} ($+0.01\%$ to $+6.4\%$ relative RMSE).
\textsc{RoPoLL} is a robust default for high-penalty regimes.

\subsection{Noisy-GT Control: Systematic Bias, Not Imprecision}
\label{sec:exp_noisy_gt}

A natural concern about robust aggregation is that the
``insurance premium'' might be paid on a phantom---if real judge
failures are imprecise but unbiased rather than systematically
wrong, robustness machinery is unnecessary.
We test this directly with a \emph{Noisy-GT} adversary that injects
$\hat{\vy}_{\mathrm{noisy}}
 = \mathrm{clip}(\vy^{\star} + \bm{\eps}, 0, K)$
with $\bm{\eps} \sim \gN(\vzero, 0.8^{2}\mI)$ in place of the
adversarial vectors of \S\ref{sec:benchmark_setup}.
Empirically, all three aggregators \emph{improve} as the Noisy-GT
injection rate increases on both HelpSteer\,2 and UltraFeedback,
with \textsc{PoLL} slightly preferred (because averaging unbiased
noise is statistically optimal under Gaussian errors).
This rules out the most obvious confound: the \textsc{RoPoLL}
premium reported in \S\S\ref{sec:exp_cauchy}--\ref{sec:exp_bounded}
is paid against \emph{biased} contamination, not imprecision.
The full per-model and per-dimension breakdowns supporting this
control are in Appendix~\ref{sec:appendix_per_model}.

\subsection{Released Corpus}
\label{sec:exp_corpus}

To support reproduction and follow-up work, we release the full
$13$-judge $\times$ three-benchmark output corpus that drives every
figure in this section.
For each $(\mathrm{judge},\,\mathrm{sample})$ cell the corpus contains
the parsed score vector $\hat{\vy}_{f}(x) \in \R^{d}$ produced by the
deterministic parser $\phi$ (Definition~\ref{def:judge}), the per-call
latency, and the reference label $\vy^{\mathrm{ref}}_{j}$
(Definition~\ref{def:reference_protocol});
parser-failure cells are recorded as the all-zeros fallback vector.
The corpus totals approximately $28\mathrm{K}$ scored
$(\mathrm{judge},\,\mathrm{sample})$ cells
(Table~\ref{tab:corpus_stats}), enabling exact reproduction of every
reported figure without re-running the inference cost.

\begin{table}[h]
\centering
\small
\setlength{\tabcolsep}{4pt}
\begin{tabular}{@{}l rrr rr rr@{}}
\toprule
\textbf{Dataset} & $N_{\mathrm{samp}}$ & $|J|$ & $d$ &
$\bar f$ & $f_{\max}$ & $s_{\min}$ & $s_{\max}$ \\
\midrule
HelpSteer\,2     & 1000 & 13 & 5 & 0.6\% &  2.4\% & $-1.0$ & $4.0$ \\
UltraFeedback    & 1000 & 13 & 4 & 0.0\% &  0.0\% &  $1.0$ & $5.0$ \\
HelpSteer\,3     &  100 & 16 & 1 & 6.0\% & 33.0\% & $-3.8$ & $2.6$ \\
\bottomrule
\end{tabular}
\caption{Corpus-level statistics. $N_{\mathrm{samp}}$: samples;
$|J|$: judge pool size; $d$: target dimension;
$\bar f$, $f_{\max}$: mean and max per-judge parser-failure rate;
$s_{\min}$, $s_{\max}$: observed score range across all judges and
samples (negative values arise from HS\,2 / HS\,3 signed-difference
reductions on a small fraction of cells where parsed scores fell
outside the rubric range).
For HS\,3, $\bar f$ and $f_{\max}$ are computed over the full
$16$-judge pool (the $13$ open-weight judges plus the three HS\,3-only
Claude judges); the $13$-judge common-pool mean is $3.38\%$
(\Cref{fig:corpus_failure_rates}).
HS\,3 in the released JSON contains the $100$-sample preference slice
used for the \S\ref{sec:benchmark} evaluation; the full 2017-sample multilingual
validation set is available on request.}
\label{tab:corpus_stats}
\end{table}

\paragraph{Per-attribute score distributions.}
\Cref{fig:corpus_score_dists} plots the score distributions per
attribute (per-dataset).
HelpSteer\,2 and UltraFeedback have substantial mass at the score
extremes (parser fallback at $0$; sycophantic judges concentrating at
the maximum), motivating the \texttt{zeros} and \texttt{inverted}
corruption types used in \S\ref{sec:benchmark}.
HelpSteer\,3, which reduces a five-attribute pair of responses to a
single signed-preference scalar, is well-centered on $0$ with light
tails, consistent with the cancellation of per-attribute biases under
the signed-difference reduction.

\begin{figure*}[t]
\centering
\begin{minipage}{0.55\textwidth}
\centering
\textbf{HelpSteer 2}\\[1pt]
\includegraphics[width=\textwidth]{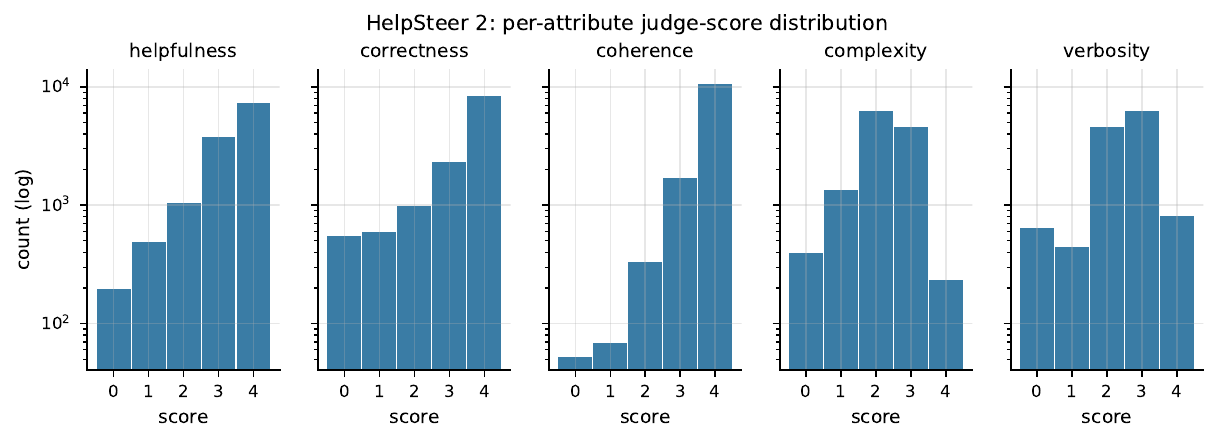}
\end{minipage}
\hfill
\begin{minipage}{0.42\textwidth}
\centering
\textbf{UltraFeedback}\\[1pt]
\includegraphics[width=\textwidth]{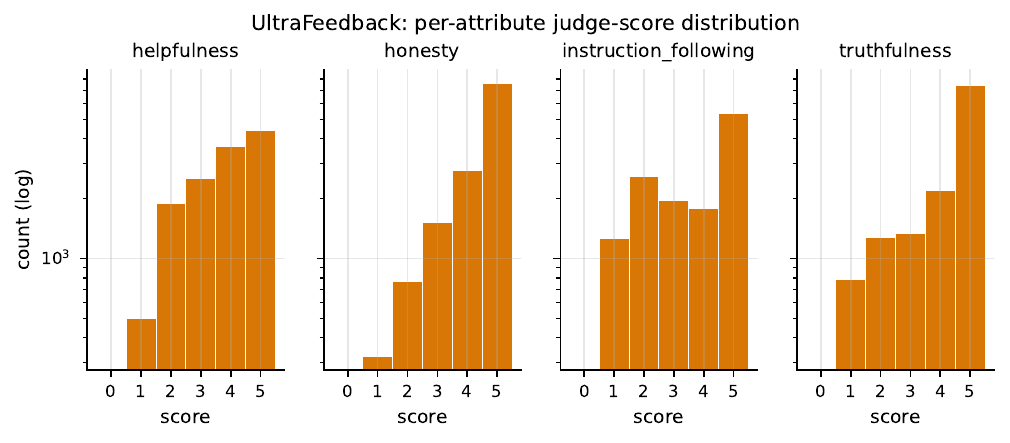}
\end{minipage}\\[6pt]
\begin{minipage}{0.55\textwidth}
\centering
\textbf{HelpSteer 3}\\[1pt]
\includegraphics[width=\textwidth]{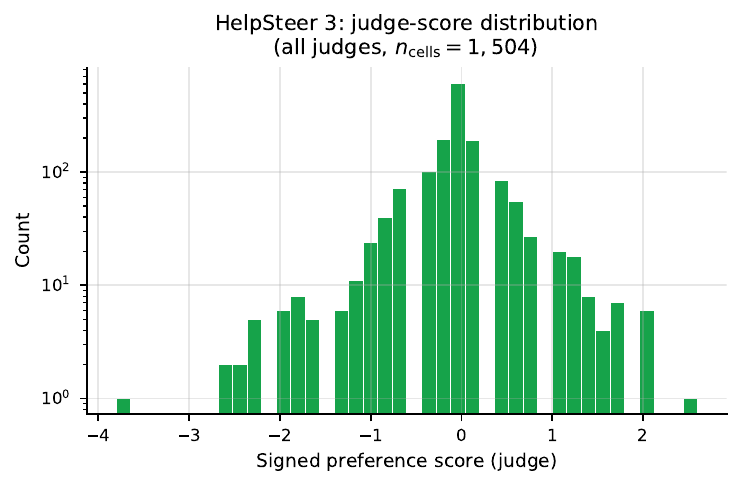}
\end{minipage}
\caption{\textbf{Per-attribute judge-score distributions
(log $y$-axis).}
HelpSteer\,2 and UltraFeedback show heavy mass concentration at the
score extremes---parser fallback at $0$ and sycophantic saturation at
the maximum---which motivates the \texttt{zeros} and \texttt{inverted}
corruption types used in \S\ref{sec:benchmark}.
HelpSteer\,3 (signed-preference scalar) is centered on $0$ with light
tails, consistent with cancellation of per-attribute biases under the
signed-difference reduction.}
\label{fig:corpus_score_dists}
\end{figure*}

\subsection{Inter-Judge Correlation Structure}
\label{sec:appendix_dataset_correlation}

\begin{figure*}[htb!]
\centering
\includegraphics[width=0.32\textwidth]{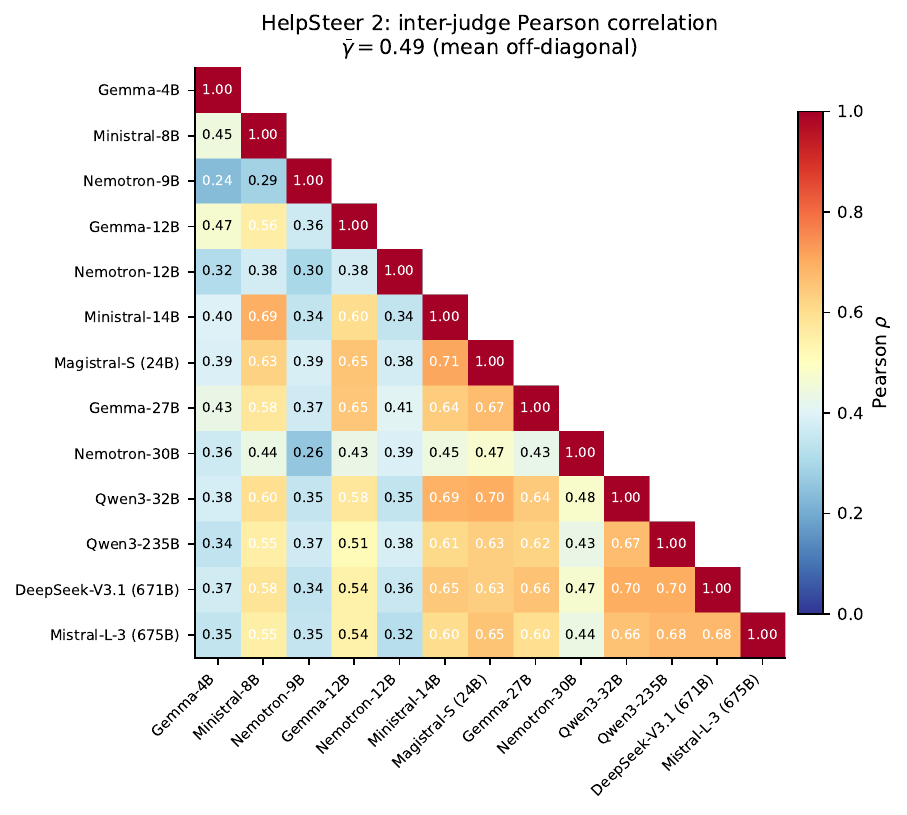}
\hfill
\includegraphics[width=0.32\textwidth]{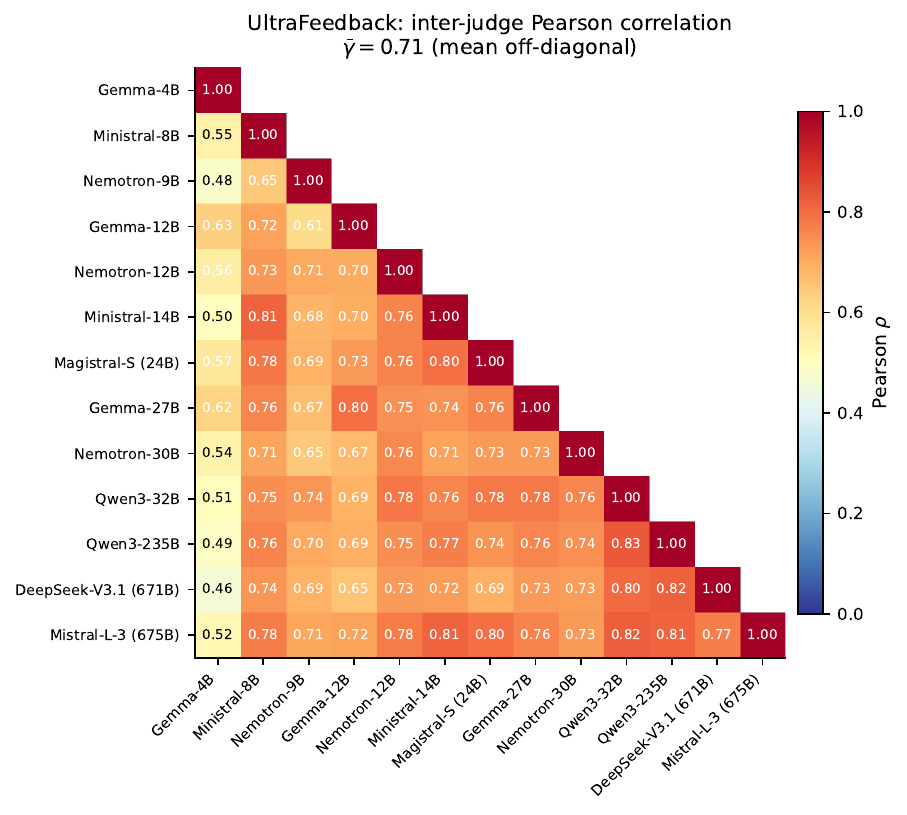}
\hfill
\includegraphics[width=0.32\textwidth]{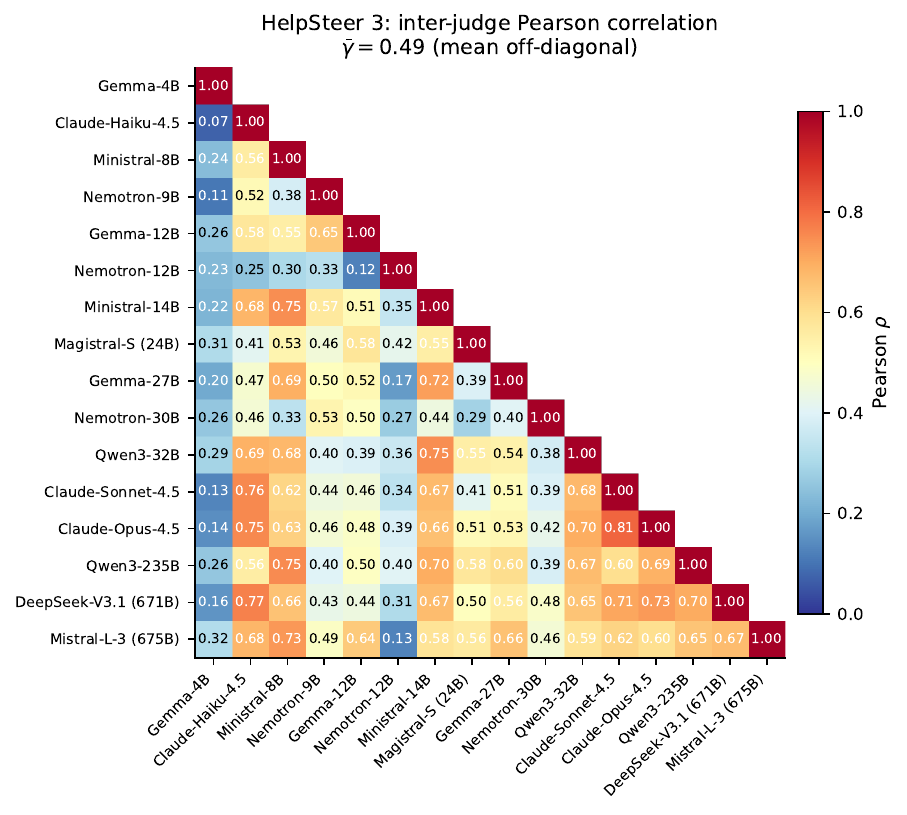}
\caption{\textbf{Inter-judge Pearson correlation heatmaps
(lower-triangle, annotated).}
Pairwise correlations averaged over evaluation attributes; cells
labelled with their numeric value.
Empirical mean off-diagonal correlations:
$\bar\gamma_{\mathrm{HS2}} = 0.49$,
$\bar\gamma_{\mathrm{UF}} = 0.71$,
$\bar\gamma_{\mathrm{HS3}} = 0.49$.
These values support the $\gamma \in [0.3, 0.5]$ assumption used in
\S\ref{sec:benchmark_setup} to motivate three-judge committees via
Corollary~\ref{cor:effective_jury_size}; the higher
$\bar\gamma_{\mathrm{UF}}$ explains the smaller \textsc{RoPoLL}/\textsc{PoLL}
gap observed on UltraFeedback.}
\label{fig:corpus_judge_corr}
\end{figure*}

\Cref{fig:corpus_judge_corr} shows the pairwise Pearson correlation
between every judge pair in the $13$-judge pool, averaged over
attributes.
Empirical mean off-diagonal correlations are
$\bar\gamma_{\mathrm{HS2}} = 0.49$,
$\bar\gamma_{\mathrm{HS3}} = 0.49$,
and $\bar\gamma_{\mathrm{UF}} = 0.71$.
These directly support the assumption $\gamma \in [0.3, 0.5]$ used in
\S\ref{sec:benchmark_setup} to motivate the choice $N = 3$:
substituting the measured $\bar\gamma$ into the saturation law
$N_{\mathrm{eff}}^{\infty} = 1/\gamma$
(Corollary~\ref{cor:effective_jury_size}) yields
$N_{\mathrm{eff}}^{\infty} \approx 2.0$ on HS\,2 and HS\,3 and
$\approx 1.4$ on UltraFeedback, so the empirical diminishing-returns
knee at $N = 3$ in \Cref{fig:p6_ablation_N_rmse} sits at or just past
the saturation point predicted by the corpus's actual correlation
structure.

The UltraFeedback correlation $\bar\gamma_{\mathrm{UF}} = 0.71$ is
notably higher than the HelpSteer correlations.
This reflects the fact that UltraFeedback's reference labels are
themselves GPT-4 annotations \citep{cui2024ultrafeedback}, so judges
trained on similar rubric distributions converge to similar scores;
the HelpSteer benchmarks use trained-human annotators
(HelpSteer\,2) or pairwise human preferences (HelpSteer\,3),
producing more genuine inter-judge variation.
This is consistent with the smaller \textsc{RoPoLL}/\textsc{PoLL} gap observed
on UltraFeedback in \S\ref{sec:benchmark}: when judges already agree,
the difference between the mean and the geometric median is small.

\subsection{Empirical Indicator Correlation $\bar\gamma_{W}$}
\label{sec:appendix_gamma_W}

Lemma~\ref{lem:correlated_jury} bounds the failure probability of
the \textsc{RoPoLL} cluster event in terms of the
\emph{indicator correlation}
$\bar\gamma_{W} = \mathrm{mean}_{i \neq j}\frac{\Cov(W_{i},W_{j})}{\sqrt{\Var(W_{i})\Var(W_{j})}}$
of the cluster indicators
$W_{i} = \mathbb{1}\{Z_{i}=0,\ \|\hat{\vy}_{i}-\vy^{\star}\|_{2}\leq\rho_{p}\}$.
This is in principle a finer object than the inter-judge
\emph{score} correlation $\bar\gamma$ of \S\ref{sec:appendix_dataset_correlation}:
$\bar\gamma$ measures the linear correlation between raw score
vectors, while $\bar\gamma_{W}$ measures the co-incidence of two
judges \emph{both being competent and within the cluster ball}.
We estimate $\bar\gamma_{W}$ directly on the experimental panels.

\paragraph{Estimation procedure.}
For benchmark $b \in \{\mathrm{HS2}, \mathrm{UF}\}$:
(i)~for each $(\text{judge } i, \text{sample } s)$ cell, compute the
$\ell_{2}$ deviation
$\delta_{i}^{(s)} = \|\hat{\vy}_{i}^{(s)} - \vy^{\star,(s)}\|_{2}$
(parser-failure cells contribute $W_{i}^{(s)} = 0$);
(ii)~select a cluster radius $\rho$ as the $p$-th quantile of
pooled deviations $\{\delta_{i}^{(s)}\}_{i,s}$;
(iii)~form $W_{i}^{(s)} = \mathbb{1}\{\delta_{i}^{(s)} \leq \rho\}$;
(iv)~compute the mean off-diagonal Pearson correlation of the rows
of $W \in \{0,1\}^{N \times S}$.
We report $\bar\gamma_{W}$ at three radii ($p \in \{0.50, 0.70, 0.90\}$)
to show stability under the calibration choice.

\begin{table}[h]
\centering
\small
\setlength{\tabcolsep}{8pt}
\renewcommand{\arraystretch}{1.10}
\begin{tabular}{@{}l c c c c@{}}
\toprule
\textbf{Benchmark} & \textbf{$p$-quantile} &
\textbf{$\rho$} & \textbf{$\bar\gamma_{W}$} &
\textbf{$N_{\mathrm{eff}}$ at $N{=}3$} \\
\midrule
\multirow{3}{*}{HelpSteer-2}
  & $0.50$ & $2.000$ & $0.500$ & $1.50$ \\
  & $0.70$ & $2.449$ & $0.531$ & $1.46$ \\
  & $0.90$ & $4.000$ & $0.471$ & $1.55$ \\
\midrule
\multirow{3}{*}{UltraFeedback}
  & $0.50$ & $2.000$ & $0.531$ & $1.45$ \\
  & $0.70$ & $2.449$ & $0.475$ & $1.54$ \\
  & $0.90$ & $3.742$ & $0.450$ & $1.58$ \\
\bottomrule
\end{tabular}
\caption{\textbf{Empirical indicator correlation $\bar\gamma_{W}$ of
Lemma~\ref{lem:correlated_jury}} on our $13$-judge experimental
panels.
$\bar\gamma_{W}$ is stable to within $\pm 0.03$ across cluster-radius
calibrations and lies in $[0.45, 0.53]$ on both benchmarks, so
$N_{\mathrm{eff}} \in [1.45, 1.58]$ at the practical jury size
$N{=}3$.
The empirical $\bar\gamma_{W}$ is on the same order as the score
correlation $\bar\gamma \in [0.49, 0.71]$ reported in
\Cref{fig:corpus_judge_corr};
Pitt's Gaussian correlation inequality
\citep{pitt1977gaussian,esary1967association,joag1983negative}
gives the qualitative bound $\bar\gamma_{W} \geq 0$ but not a
quantitative comparison to $\bar\gamma$, so direct estimation is the
right move.}
\label{tab:empirical_gamma_W}
\end{table}

\paragraph{Implication for Lemma~\ref{lem:correlated_jury}.}
The role of Lemma~\ref{lem:correlated_jury} is structural: it
shows that the geometric-breakdown structure
($C_{\alpha+\beta}$ and the cluster radius $\rho$) of
Theorem~\ref{thm:ropoll_bound} is preserved when the i.i.d.\
assumption is replaced by an equicorrelated-indicator
hypothesis with $\bar\gamma_{W} \in [0, 1]$.
The probability event delivered by the Chebyshev step,
$\Pr[\cdot] \geq 1 - 1/(\beta^{2} N_{\mathrm{eff}})$, is informative
in the large-$N_{\mathrm{eff}}$ regime
(e.g., a hypothetical jury of $N=10$--$30$ judges with
$\bar\gamma_{W} \approx 0.2$ gives $N_{\mathrm{eff}} \in [3.6, 7]$
and $\beta = 0.2$ gives a non-trivial bound) but degenerates at
small $N_{\mathrm{eff}}$, including the practical
$N_{\mathrm{eff}} \approx 1.5$ of our $N=3$ panels.
This is a fundamental limit of variance-only concentration at
small $N$, not a slack in the analysis: with only $\sim 1.5$
effective independent samples, no concentration argument can
deliver a tight high-probability bound, regardless of the
estimator.
A Bernstein-type bound under bounded-covariance martingale
structure (Remark~\ref{rem:correlated_tail}) would replace
$\beta^{-2}$ with $\exp(-c\beta^{2}N_{\mathrm{eff}})$ but does not
materially help at $N_{\mathrm{eff}} \approx 1.5$.
The practical value of Lemma~\ref{lem:correlated_jury} for our
small-$N$ regime is therefore the structural guarantee, not the
quantitative probability:
the breakdown floor and the geometric constant are independent of
this concentration argument and \emph{are} the load-bearing
quantities for jury-aggregation deployment.

\subsection{Practical Recommendation}
\label{sec:exp_recommendation}

Use \textsc{RoPoLL} as the default jury aggregator: the clean-case
insurance premium is small ($\leq 6\%$ relative RMSE,
\S\ref{sec:exp_clean_baseline}) and the threat to LLM juries is biased
contamination rather than imprecision (\S\ref{sec:exp_noisy_gt}).
The jury size is not a fixed prescription but follows from the
saturation law of Corollary~\ref{cor:effective_jury_size}
($N_{\mathrm{eff}}$ saturates at $1/\gamma$): the cost--accuracy knee
sits at whatever $N$ reaches that ceiling for the inter-judge
correlation $\gamma$ of the judge pool at hand.
For the diverse open-weight pools studied here
($\gamma \approx 0.49$--$0.71$, \S\ref{sec:appendix_dataset_correlation})
the knee falls at $N \approx 3$ (\Cref{fig:p6_ablation_N_rmse}); a more
orthogonal pool (smaller $\gamma$) would push it higher, and a
redundant one lower.
A controlled 2D synthetic visualisation of three representative
failure modes, the per-model and per-dimension breakdowns, and the
full extra-metrics tables are in
Appendices~\ref{sec:appendix_per_model} and~\ref{sec:simulation}.

\section{Conclusion}
\label{sec:discussion}

We recast LLM-jury aggregation as a robust mean-estimation problem,
showed that \textsc{PoLL} \citep{verga2024replacing} admits
unbounded bias under any positive contamination
(Proposition~\ref{prop:mean_bias}), and proposed \textsc{RoPoLL}:
replace the mean with the geometric median.
Theorem~\ref{thm:ropoll_bound} gives a finite-sample upper bound;
Lemma~\ref{lem:correlated_jury} extends it to equicorrelated
juries; Theorem~\ref{thm:minimax_body} provides a matching minimax
lower bound that aligns on the parametric rate and exposes a
$\sqrt{d}$ statistical--computational gap on the breakdown floor
(the price of GM's polynomial-time tractability vs.\ the
intractable Tukey halfspace median).
Empirically (\S\ref{sec:benchmark}), \textsc{RoPoLL} reduces
\textsc{PoLL}'s RMSE by orders of magnitude on heavy-tailed and
cross-dimensional attacks while paying $\leq 6.4\%$ in clean-baseline
relative RMSE, and a $3$-judge \textsc{RoPoLL} committee at
$38$\,B beats Mistral-Large-3 ($675$\,B) by $1.31\times$ on
HelpSteer\,2 under $30\%$ \texttt{bimodal-random} corruption:
\emph{robust aggregation, not the ensemble itself, delivers the win}.

\paragraph{Scope and limitations.}
The theory holds the i.i.d.\ baseline of
Assumptions~\ref{asm:huber}--\ref{asm:minority} fixed and partially
relaxes independence via Lemma~\ref{lem:correlated_jury};
per-judge heterogeneity ($\sigma_{i},\alpha_{i}$ varying across
$4$--$675$\,B), explicit dependence by design
\citep{li2024prd,chan2024chateval,zhang2024wider}, and tightening the
contamination-constant gap between Theorem~\ref{thm:ropoll_bound}
and Theorem~\ref{thm:minimax_body} at finite $\alpha$ remain open.
Empirically, the corruption sweep is synthetic injection at a single
rubric serialisation and temperature $0$, so it does not probe
prompt-format sensitivity \citep{wang2023large,stureborg2024large};
the Noisy-GT control (\S\ref{sec:exp_noisy_gt}) rules out the
obvious confound that \textsc{RoPoLL}'s premium is paid against
benign imprecision rather than biased contamination, but a
large-scale evaluation against \emph{naturally occurring} judge
failures on a downstream alignment task remains the most important
next step.
A systematic comparison against the broader robust-aggregation
toolbox (median-of-means, smoothed Tukey depth, learned calibration)
is also left open.

\paragraph{Outlook.}
The corruption-class diagnosis transfers verbatim to any pipeline
where heterogeneous workers produce biased point-mass errors at low
rate---reward-model ensembles for RLHF, synthetic-data filtering
juries, crowd annotation---suggesting the geometric median as a
candidate default beyond LLM juries.

\bibliographystyle{apalike}
\bibliography{bibs/jury,bibs/math_basics,bibs/datasets}

\clearpage
\appendix
\begin{center}
  {\LARGE\bfseries Appendix\par}
\end{center}

\noindent
The appendix collects the deferred proofs and supporting material
for the body of the paper.
The roadmap of formal results is in
Table~\ref{tab:theory_roadmap} (\S\ref{sec:theory});
the appendix sections that follow are organised as
Appendix~\ref{sec:appendix_theory}
(proofs for \S\ref{sec:problem_setup}, \S\ref{sec:methodology}, and
\S\ref{sec:theory}, including the matching minimax lower bound),
Appendix~\ref{sec:appendix_per_model}
(per-model and per-dimension breakdowns supporting
\S\ref{sec:benchmark}),
and Appendix~\ref{sec:simulation}
(controlled 2D synthetic visualisation of five representative
failure modes).

%
%
%
\section{Complete Proofs and Full Theoretical Development}
\label{sec:appendix_theory}

\subsection{Proof of Proposition~\ref{prop:variance_reduction} (Variance Reduction)}
\label{sec:appendix_variance_proof}

\begin{proof}[Proof of Proposition~\ref{prop:variance_reduction}]
Under $\alpha_{i} = 0$, Assumption~\ref{asm:huber} gives
$\E[\hat{\vy}_{i} \mid \vy^{\star}] = \vy^{\star}$ and
$\Cov(\hat{\vy}_{i} \mid \vy^{\star}) = \mSigma_{i}$.
Linearity of conditional expectation gives
$\E[\hat{\vy}_{\mathrm{mean}} \mid \vy^{\star}] = \vy^{\star}$, and
bilinearity of covariance gives~\eqref{eq:mean_cov_general}.
Independence (Assumption~\ref{asm:independence}) zeroes the
cross-covariances, yielding the off-diagonal vanishing
in~\eqref{eq:mean_cov_indep}.
Since the conditional error is centered,
\begin{equation*}
    \E[\|\hat{\vy}_{\mathrm{mean}} - \vy^{\star}\|_{2}^{2} \mid \vy^{\star}]
    \;=\;
    \operatorname{tr}\!\left(\Cov(\hat{\vy}_{\mathrm{mean}} \mid \vy^{\star})\right),
\end{equation*}
which is the second equation in~\eqref{eq:mean_cov_indep}.
The final bound follows from
$\operatorname{tr}(\mSigma_{i}) \leq d\sigma^{2}$ whenever
$\mSigma_{i} \preceq \sigma^{2}\mI_{d}$.
\end{proof}

\begin{proof}[Proof of Corollary~\ref{cor:effective_jury_size}]
Substituting the equicorrelated structure
$\Cov(\hat{\vy}_{i}, \hat{\vy}_{j} \mid \vy^{\star}) = \gamma\,\mSigma$
for $i \neq j$ and $\Cov(\hat{\vy}_{i} \mid \vy^{\star}) = \mSigma$
into~\eqref{eq:mean_cov_general} gives
$\Cov(\hat{\vy}_{\mathrm{mean}} \mid \vy^{\star})
= \tfrac{1 + (N-1)\gamma}{N}\,\mSigma$;
taking traces yields the MSE expression.
\end{proof}

\begin{remark}[Implication for Jury Design]
\label{rem:variance_discussion}
Proposition~\ref{prop:variance_reduction} and
Corollary~\ref{cor:effective_jury_size} formalize the classical
benefit of a jury: independent, diverse, conditionally unbiased
judges reduce estimator variance, with an effective sample-size
penalty determined by their pairwise dependence.
Proposition~\ref{prop:mean_bias} shows why this benefit is
insufficient under contamination: the arithmetic mean's bias is
unbounded over the corruption class (Assumption~\ref{asm:huber})
regardless of $N$, so any variance reduction the jury affords is
dominated by an adversarial choice of $\{Q_{i}\}$.
A robust aggregation rule is therefore needed to preserve the signal
of the competent majority while attenuating contamination
bias---this is the role of \textsc{RoPoLL} in
\S\ref{sec:methodology}.
\end{remark}

\subsection{Proof of Proposition~\ref{prop:mean_bias} (Unbounded Bias of \textsc{PoLL})}
\label{sec:appendix_mean_bias_proof}

For convenience we recall the statement: under
Assumption~\ref{asm:huber} and finite first moments
$\vmu_{i}^{Q} \triangleq \E_{Q_{i}}[\hat{\vy}_{i}]$, the mean's
conditional bias is
\begin{equation}
\label{eq:appendix_mean_bias_formula}
    \E\!\left[\hat{\vy}_{\mathrm{mean}} \mid \vy^{\star}\right]
    - \vy^{\star}
    \;=\;
    \frac{1}{N}\sum_{i=1}^{N}\alpha_{i}\!\left(\vmu_{i}^{Q} - \vy^{\star}\right),
\end{equation}
and is not uniformly bounded over the corruption class as long as
$\alpha > 0$, regardless of $N$.

\begin{proof}[Proof of Proposition~\ref{prop:mean_bias}]
We prove the two claims in turn: the explicit bias formula
\eqref{eq:appendix_mean_bias_formula}, and the impossibility of a
uniform bound over the corruption class.

\textit{Step 1: Per-judge expectation.}
Fix $i \in [N]$.
By Assumption~\ref{asm:huber}, conditional on $\vy^{\star}$ the law
of $\hat{\vy}_{i}$ is the mixture
$(1 - \alpha_{i}) P_{i} + \alpha_{i} Q_{i}$ with selector
$Z_{i} \sim \mathrm{Bernoulli}(\alpha_{i})$.
By the law of total expectation,
\begin{align*}
    \E[\hat{\vy}_{i} \mid \vy^{\star}]
    &= (1 - \alpha_{i})\,\E_{P_{i}}[\hat{\vy}_{i}]
       + \alpha_{i}\,\E_{Q_{i}}[\hat{\vy}_{i}] \\
    &= (1 - \alpha_{i})\,\vy^{\star} + \alpha_{i}\,\vmu_{i}^{Q},
\end{align*}
where the second equality uses
$\E_{P_{i}}[\hat{\vy}_{i}] = \vy^{\star}$ (competent unbiasedness,
Assumption~\ref{asm:huber}) and the finite-first-moment assumption
on $Q_{i}$ to identify $\E_{Q_{i}}[\hat{\vy}_{i}] = \vmu_{i}^{Q}$.
Rearranging,
\begin{equation}
\label{eq:appendix_per_judge_bias}
    \E[\hat{\vy}_{i} \mid \vy^{\star}] - \vy^{\star}
    \;=\;
    \alpha_{i}\,(\vmu_{i}^{Q} - \vy^{\star}).
\end{equation}

\textit{Step 2: Linearity of the mean.}
By the linearity of expectation applied to
$\hat{\vy}_{\mathrm{mean}} = N^{-1}\sum_{i=1}^{N}\hat{\vy}_{i}$,
\begin{equation*}
    \E[\hat{\vy}_{\mathrm{mean}} \mid \vy^{\star}]
    \;=\;
    \frac{1}{N}\sum_{i=1}^{N}\E[\hat{\vy}_{i} \mid \vy^{\star}].
\end{equation*}
Substituting~\eqref{eq:appendix_per_judge_bias} yields
\eqref{eq:appendix_mean_bias_formula}, proving the first claim.

\textit{Step 3: Adversarial corruption distribution.}
Suppose $\alpha = N^{-1}\sum_{i}\alpha_{i} > 0$.
Then there exists at least one index $i_{0} \in [N]$ with
$\alpha_{i_{0}} > 0$.
Let $B > 0$ be arbitrary, $\ve_{1}$ be the first standard basis
vector, and consider the adversarial choice
\begin{equation}
\label{eq:appendix_adversarial_Q}
    Q_{i_{0}} \;=\; \delta_{\vy^{\star} + (NB/\alpha_{i_{0}})\,\ve_{1}},
\end{equation}
the Dirac mass placed at the indicated point;
take $\{Q_{i}\}_{i \neq i_{0}}$ to be any distributions with
$\vmu_{i}^{Q} = \vy^{\star}$ (e.g., $Q_{i} = P_{i}$ itself, which
gives zero contribution to the bias).
Then $\vmu_{i_{0}}^{Q} = \vy^{\star} + (NB/\alpha_{i_{0}})\,\ve_{1}$,
and \eqref{eq:appendix_mean_bias_formula} reduces to
\begin{align*}
    \E[\hat{\vy}_{\mathrm{mean}} \mid \vy^{\star}] - \vy^{\star}
    &= \frac{1}{N}\,\alpha_{i_{0}}\,
       (\vmu_{i_{0}}^{Q} - \vy^{\star}) \\
    &= \frac{1}{N}\,\alpha_{i_{0}}\,
       \frac{NB}{\alpha_{i_{0}}}\,\ve_{1}
    \;=\; B\,\ve_{1}.
\end{align*}
Hence
$\big\|\E[\hat{\vy}_{\mathrm{mean}} \mid \vy^{\star}] - \vy^{\star}\big\|_{2}
= B$.

\textit{Step 4: Conclusion.}
Since $B > 0$ was arbitrary, no constant $C(\alpha, N, d, \sigma)$
depending only on the model parameters of
Assumptions~\ref{asm:huber}--\ref{asm:minority} can satisfy
\begin{equation*}
    \sup_{\{Q_{i}\}}\,
    \big\|\E[\hat{\vy}_{\mathrm{mean}} \mid \vy^{\star}] - \vy^{\star}\big\|_{2}
    \;\leq\;
    C(\alpha, N, d, \sigma).
\end{equation*}
The bias is therefore unbounded over the corruption class for every
fixed $N$, completing the proof.
\end{proof}

\begin{remark}[Why $N$ does not help]
The construction~\eqref{eq:appendix_adversarial_Q} scales
$\vmu_{i_{0}}^{Q}$ with $N$: the adversary's per-judge displacement
grows linearly with the jury size, exactly cancelling the $1/N$
averaging.
This is the formal statement of why variance reduction
(Proposition~\ref{prop:variance_reduction}) cannot rescue the mean
under contamination: variance contracts at rate $1/N$, but bias is
preserved by the adversary irrespective of $N$, and the bias term
dominates as long as $\alpha > 0$.
\end{remark}

\subsection{Proof of Proposition~\ref{prop:gm_properties}}
\label{sec:appendix_gm_properties}

\begin{proof}[Proof of Proposition~\ref{prop:gm_properties}]
\textit{(i) Existence.}
Each summand $\vz \mapsto \|\vz - \hat{\vy}_{i}\|_{2}$ is the Euclidean
norm of an affine function of $\vz$, hence continuous and convex
(see any standard reference on convex analysis).
Sums of continuous convex functions are continuous and convex, so $F$
is continuous and convex.
For coercivity, fix any data point $\hat{\vy}_{1}$; by the reverse
triangle inequality
\begin{equation*}
    F(\vz) \;\geq\; \|\vz - \hat{\vy}_{1}\|_{2}
    \;\geq\; \|\vz\|_{2} - \|\hat{\vy}_{1}\|_{2}
    \;\to\; \infty \quad \text{as } \|\vz\|_{2} \to \infty.
\end{equation*}
Since $F$ is continuous and coercive, the sublevel set
$\{\vz : F(\vz) \leq F(\vzero)\}$ is nonempty, closed, and bounded,
hence compact in $\R^{d}$.
Weierstrass's theorem then yields a minimizer.

\textit{(ii) Uniqueness.}
The function $\vz \mapsto \|\vz - \hat{\vy}_{i}\|_{2}$ is strictly
convex on every line not passing through $\hat{\vy}_{i}$ and affine on
the line through $\hat{\vy}_{i}$ in the direction of any other point.
Suppose the data are not collinear: then for any line
$\gL \subset \R^{d}$ there exists at least one
$\hat{\vy}_{i} \notin \gL$, so the corresponding summand is strictly
convex along $\gL$.
Hence $F$ is strictly convex along every line, hence strictly convex
on $\R^{d}$, and the minimizer is unique
\citep{vardi2000multivariate}.

\textit{(iii) Affine equivariance.}
Let $\mU \in \R^{d \times d}$ be orthogonal and $\vb \in \R^{d}$.
For all $\vz \in \R^{d}$,
\begin{equation*}
    \|\mU \vz + \vb - (\mU \hat{\vy}_{i} + \vb)\|_{2}
    \;=\; \|\mU(\vz - \hat{\vy}_{i})\|_{2}
    \;=\; \|\vz - \hat{\vy}_{i}\|_{2},
\end{equation*}
where the second equality uses $\mU^{\top}\mU = \mI_{d}$.
Summing over $i$,
$F^{\mU,\vb}(\mU\vz + \vb) = F(\vz)$ where $F^{\mU,\vb}$ is the
objective on the transformed sample.
The map $\vz \mapsto \mU\vz + \vb$ is a bijection on $\R^{d}$, so the
two minimizers are related by exactly this transformation.

\textit{(iv) Breakdown point.}
We show that the GM tolerates any corruption of strictly fewer than
$\lceil N/2 \rceil$ points and that this threshold is tight.

\emph{Sufficiency.}
Suppose $m < \lceil N/2 \rceil$ points are arbitrarily replaced and
denote the corrupted sample $\hat{\vy}'_{1:N}$.
The competent set $S = \{i : \hat{\vy}'_{i} = \hat{\vy}_{i}\}$ has
$|S| = N - m > N/2$, hence $|S| > |S^{c}|$.
Let $\vz' = \mathrm{GM}(\hat{\vy}'_{1:N})$ be the corrupted GM.
By the subgradient optimality condition for the convex objective $F'$,
\begin{equation*}
    \vzero \in \partial F'(\vz')
    = \sum_{i: \vz' \neq \hat{\vy}'_{i}}
        \frac{\vz' - \hat{\vy}'_{i}}{\|\vz' - \hat{\vy}'_{i}\|_{2}}
        + (\text{ball terms for ties}).
\end{equation*}
Each unit-vector term has norm $1$.
If $\|\vz'\|_{2}$ were unbounded as the adversary varies the corrupted
points within their $m$-coordinate budget, then for the competent
points $i \in S$ the unit vectors
$(\vz' - \hat{\vy}_{i})/\|\vz' - \hat{\vy}_{i}\|_{2}$ would all lie in
a small cone (all pointing approximately from the bounded competent
cluster toward $\vz'$), so their sum has norm at least
$|S|(1 - o(1))$.
The corrupted contribution has norm at most $|S^{c}| < |S|$, hence the
total subgradient has norm at least $|S| - |S^{c}| > 0$, contradicting
the optimality $\vzero \in \partial F'(\vz')$.
Therefore $\|\vz'\|_{2}$ remains bounded, i.e.\ no $m$-budget
corruption can drive the GM to infinity.

\emph{Necessity.}
With $m = \lceil N/2 \rceil$ corrupted points all placed at a common
location $\hat{\vy}'_{i_{0}} = M \cdot \ve_{1}$ for arbitrarily large
$M$, the corrupted set forms a majority (or tie if $N$ is even) and
the GM moves to within $O(1)$ of $M\,\ve_{1}$ as $M \to \infty$
\citep{lopuhaa1991breakdown}.
Hence the breakdown point is exactly
$\eps^{\star} = \lceil N/2 \rceil / N$, which tends to $1/2$ as
$N \to \infty$.
This is the optimal breakdown for any translation-equivariant
estimator
\citep{lopuhaa1991breakdown}.
\end{proof}

\subsection{Weiszfeld Iteration: Full Derivation, Convergence, and Cost}
\label{sec:appendix_weiszfeld}

For completeness, this subsection gives the full derivation,
convergence statement, and cost analysis for the Weiszfeld iteration
sketched in \S\ref{sec:weiszfeld}.

\paragraph{Derivation.}
At a non-data point $\vz \neq \hat{\vy}_{i}$ for all $i$, the gradient
of the GM objective $F(\vz) = \sum_{i} \|\vz - \hat{\vy}_{i}\|_{2}$ is
\begin{equation}
\label{eq:gm_gradient}
    \nabla F(\vz)
    = \sum_{i=1}^{N} \frac{\vz - \hat{\vy}_{i}}{\|\vz - \hat{\vy}_{i}\|_{2}}.
\end{equation}
Setting $\nabla F(\vz) = \vzero$ and rearranging gives the fixed-point
equation~\eqref{eq:weiszfeld_fixedpoint} of \S\ref{sec:weiszfeld},
\begin{equation*}
    \vz
    \;=\;
    \frac{\sum_{i=1}^{N} \hat{\vy}_{i} / \|\vz - \hat{\vy}_{i}\|_{2}}
         {\sum_{i=1}^{N} 1 / \|\vz - \hat{\vy}_{i}\|_{2}},
\end{equation*}
which is the Weiszfeld iteration $\vz \leftarrow T(\vz)$.
When the current iterate coincides with a data point $\hat{\vy}_{j}$,
the denominator $\|\vz - \hat{\vy}_{j}\|_{2} = 0$ creates a
singularity;
the modified step of \citet{vardi2000multivariate} replaces the weight
by
\begin{equation}
\label{eq:modified_weight}
    w_{i}^{(t)}
    = \frac{1}{\max\!\left(\|\vz^{(t)} - \hat{\vy}_{i}\|_{2},\; \eta\right)}
\end{equation}
for a small stability parameter $\eta > 0$, recovering
Algorithm~\ref{alg:ropoll}.

\paragraph{Convergence.}
\citet{vardi2000multivariate} prove that the modified Weiszfeld
iteration converges to the unique geometric median at a linear rate
whenever the data are not collinear:
there exists $\rho \in (0, 1)$ depending on the data configuration with
$\|\vz^{(t)} - \hat{\vy}_{\mathrm{GM}}\|_{2}
\leq \rho^{t} \|\vz^{(0)} - \hat{\vy}_{\mathrm{GM}}\|_{2}$.
The number of iterations to reach tolerance $\eps$ is therefore
$O(\log(1/\eps))$.

\paragraph{Cost.}
Each iteration computes $N$ Euclidean distances in $\R^{d}$ and one
weighted average, costing $O(Nd)$ arithmetic operations.
With $O(\log(1/\eps))$ iterations the total cost is
$O(Nd \log(1/\eps))$.
For typical LLM juries ($N \leq 20$, $d \leq 5$, $\eps = 10^{-8}$) this
amounts to a few hundred floating-point operations---microseconds on
any modern processor, while a single LLM judge invocation costs
seconds of GPU time.
The aggregation step is computationally negligible relative to the
inference cost of the jury.

\subsection{Proof of Lemma~\ref{lem:gm_breakdown}}
\label{sec:lem_gm_breakdown}

For convenience, we recall Lemma~\ref{lem:gm_breakdown}:
let $x_{1}, \ldots, x_{k} \in \R^{d}$ and let $x_{*}$ be any
minimizer of $z \mapsto \sum_{j=1}^{k} \|z - x_{j}\|_{2}$;
fix $\alpha \in (0, 1/2)$, $r > 0$, $z \in \R^{d}$.
If $|\{j : \|x_{j} - z\|_{2} \leq r\}| \geq (1-\alpha)k$, then
$\|x_{*} - z\|_{2} \leq C_{\alpha}\,r$ with
$C_{\alpha} = (1-\alpha)/\sqrt{1-2\alpha}$.

\begin{proof}[Proof of Lemma~\ref{lem:gm_breakdown}]
We give the proof of \citet{minsker2015geometric}, with the
geometric setup made explicit.
The argument is by contradiction:
assume $\|x_{*} - z\|_{2} > C_{\alpha} r$ and derive a violation
of the optimality of $x_{*}$.

For brevity write $\Delta \triangleq \|x_{*} - z\|_{2}$ and let
$F(y) \triangleq \sum_{j=1}^{k} \|y - x_{j}\|_{2}$ denote the
geometric-median objective.
Since $x_{*}$ minimizes the convex function $F$ on $\R^{d}$, the
one-sided directional derivative of $F$ at $x_{*}$ in any direction
$v \in \R^{d}$ is non-negative
(standard convex analysis;
see e.g.\ \citealp{rockafellar1997convex},
Theorem~23.1).
Taking $v \triangleq z - x_{*}$:
\begin{equation}
\label{eq:opt_directional}
    DF(x_{*}; v) \;\triangleq\; \lim_{t \downarrow 0}\, \frac{F(x_{*} + t v) - F(x_{*})}{t} \;\geq\; 0.
\end{equation}

\textit{Step 1: Compute the directional derivative.}
The function $y \mapsto \|y - x_{j}\|_{2}$ is the Euclidean norm of
an affine function; it is differentiable at any $y \neq x_{j}$ with
gradient $(y - x_{j})/\|y - x_{j}\|_{2}$ (the Fermat--Weber
gradient, classical;
cf.\ \citealp{weiszfeld1937point,vardi2000multivariate}).
For $j$ with $x_{j} = x_{*}$, the directional derivative of
$y \mapsto \|y - x_{*}\|_{2}$ at $x_{*}$ in direction $v$ equals
$\|v\|_{2}$ (the Euclidean norm is positively homogeneous, so its
right-hand directional derivative at the origin is $\|v\|_{2}$).
Letting $K_{*} = \{j : x_{j} = x_{*}\}$, the total directional
derivative decomposes as
\begin{equation*}
    DF(x_{*}; v)
    \;=\; \sum_{j \notin K_{*}} \frac{\langle x_{*} - x_{j}, v\rangle}{\|x_{*} - x_{j}\|_{2}}
    \;+\; |K_{*}|\,\|v\|_{2}.
\end{equation*}
Substituting $v = z - x_{*}$ and dividing by
$\|v\|_{2} = \Delta > 0$:
\begin{equation}
\label{eq:directional_cos}
    \frac{DF(x_{*}; z - x_{*})}{\Delta}
    \;=\;
    -\sum_{j \notin K_{*}} \cos\gamma_{j} \;+\; |K_{*}|,
\end{equation}
where $\gamma_{j}$ is the angle at $x_{*}$ between the rays
$x_{*} \to x_{j}$ and $x_{*} \to z$, defined for $j \notin K_{*}$ by
$\cos\gamma_{j} = \langle x_{j} - x_{*},\, z - x_{*}\rangle / (\|x_{j} - x_{*}\|_{2}\,\Delta)$.

\textit{Step 2: Lower-bound $\cos\gamma_{j}$ for points near $z$.}
Let $J \triangleq \{j : \|x_{j} - z\|_{2} \leq r\}$ denote the
indices of points within distance $r$ of $z$.
By hypothesis, $|J| \geq (1-\alpha)k$.

For $j \in J$, the point $x_{j}$ lies in the closed ball
$\overline{B}(z, r)$.
The angle $\gamma_{j}$ at $x_{*}$ between the rays $x_{*} \to x_{j}$
and $x_{*} \to z$ is at most the half-angle subtended by the ball
$\overline{B}(z, r)$ as seen from $x_{*}$.
Since $\|x_{*} - z\|_{2} = \Delta$ and the ball has radius $r$,
elementary geometry gives
\begin{equation}
\label{eq:angle_bound}
    \sin\gamma_{j} \;\leq\; \frac{r}{\Delta},
    \qquad
    \cos\gamma_{j} \;\geq\; \sqrt{1 - \frac{r^{2}}{\Delta^{2}}}.
\end{equation}
By assumption $\Delta > C_{\alpha} r$, so $r/\Delta < 1/C_{\alpha}$
and~\eqref{eq:angle_bound} yields
\begin{equation}
\label{eq:cos_lower}
    \cos\gamma_{j} \;>\; \sqrt{1 - \frac{1}{C_{\alpha}^{2}}}
    \quad \text{for all } j \in J.
\end{equation}

For $j \in J^{c}\setminus K_{*}$ (points farther than $r$ from $z$
that do not coincide with $x_{*}$), we have only the trivial bound
$\cos\gamma_{j} \geq -1$.

\textit{Step 3: Combine.}
A short observation simplifies the algebra: the constant
$C_{\alpha} = (1-\alpha)/\sqrt{1-2\alpha} \geq 1$ for
$\alpha \in [0, 1/2)$ (with equality only at $\alpha = 0$).
Combined with the contradiction hypothesis
$\Delta > C_{\alpha} r \geq r$, this implies that every $j \in K_{*}$
(where $x_{j} = x_{*}$, so $\|x_{j} - z\|_{2} = \Delta > r$)
satisfies $j \in J^{c}$.
Therefore $K_{*} \subseteq J^{c}$, and the partition
$J^{c} = (J^{c}\setminus K_{*}) \cup K_{*}$
gives $|J^{c}\setminus K_{*}| = |J^{c}| - |K_{*}|$.

Substituting into~\eqref{eq:directional_cos} and using the angular
bounds from Step~2:
\begin{align*}
    \frac{DF(x_{*}; z - x_{*})}{\Delta}
    &\;=\;
    -\sum_{j \in J} \cos\gamma_{j}
    \;-\; \sum_{j \in J^{c}\setminus K_{*}} \cos\gamma_{j}
    \;+\; |K_{*}| \\
    &\;<\;
    -|J|\,\sqrt{1 - 1/C_{\alpha}^{2}}
    \;+\; |J^{c}\setminus K_{*}|
    \;+\; |K_{*}| \\
    &\;=\;
    -|J|\,\sqrt{1 - 1/C_{\alpha}^{2}} \;+\; |J^{c}| \\
    &\;\leq\;
    -(1-\alpha)k\,\sqrt{1 - 1/C_{\alpha}^{2}} \;+\; \alpha k,
\end{align*}
where the final line uses $|J| \geq (1-\alpha)k$ and
$|J^{c}| \leq \alpha k$.

We now show that the choice
$C_{\alpha} = (1-\alpha)/\sqrt{1-2\alpha}$ makes this strictly
negative.
Compute:
\begin{equation*}
    1 - \frac{1}{C_{\alpha}^{2}}
    \;=\;
    1 - \frac{1-2\alpha}{(1-\alpha)^{2}}
    \;=\;
    \frac{(1-\alpha)^{2} - (1-2\alpha)}{(1-\alpha)^{2}}
    \;=\;
    \frac{\alpha^{2}}{(1-\alpha)^{2}}.
\end{equation*}
Hence $\sqrt{1 - 1/C_{\alpha}^{2}} = \alpha/(1-\alpha)$, and:
\begin{equation*}
    \frac{DF(x_{*}; z - x_{*})}{\Delta}
    \;<\;
    -(1-\alpha)k \cdot \frac{\alpha}{1-\alpha} + \alpha k
    \;=\;
    -\alpha k + \alpha k
    \;=\; 0.
\end{equation*}

This contradicts~\eqref{eq:opt_directional}, which required
$DF(x_{*}; z - x_{*}) \geq 0$.
Therefore the assumption $\|x_{*} - z\|_{2} > C_{\alpha} r$ must
fail, proving~\eqref{eq:gm_breakdown}.
\end{proof}

\begin{remark}[Sanity checks for $C_{\alpha}$]
\label{rem:c_alpha_sanity}
The constant $C_{\alpha} = (1-\alpha)/\sqrt{1-2\alpha}$ behaves as
expected at the boundary cases:
\begin{itemize}[topsep=2pt,itemsep=2pt]
    \item At $\alpha = 0$: $C_{0} = 1$.
    The lemma reduces to ``if all $k$ points lie within $r$ of $z$,
    then their geometric median lies within $r$ of $z$,'' which is
    immediate because the geometric median lies in the convex hull
    of the points, hence in $\overline{B}(z, r)$.
    \item At $\alpha = 1/4$: $C_{1/4} = (3/4)/\sqrt{1/2}
    = 3/(2\sqrt{2}) \approx 1.061$.
    \item At $\alpha = 0.3$: $C_{0.3} = 0.7/\sqrt{0.4}
    \approx 1.107$.
    \item As $\alpha \to 1/2$: $C_{\alpha} \to \infty$.
    The lemma becomes vacuous, matching the breakdown point of the
    geometric median: with corrupted majority, no constant bound on
    $\|x_{*} - z\|_{2}$ is possible.
\end{itemize}
\end{remark}

\begin{remark}[Tightness]
\label{rem:gm_breakdown_tight}
The constant $C_{\alpha} = (1-\alpha)/\sqrt{1-2\alpha}$ is sharp in
the sense that the same proof technique cannot give a smaller
constant: the inequality $\sin\gamma_{j} \leq r/\Delta$
in~\eqref{eq:angle_bound} is achieved when $x_{j}$ lies on the
boundary of $\overline{B}(z, r)$ at the tangent point from $x_{*}$,
and the bound on the directional derivative is tight for that
configuration.
A matching example: place $(1-\alpha)k$ points on the boundary of
$\overline{B}(z, r)$ at the tangent points from a location
$x_{*}$ at distance $C_{\alpha} r$ from $z$, and place the
remaining $\alpha k$ points at $x_{*}$ itself.
The directional-derivative computation gives equality, so $x_{*}$ is
on the boundary of optimality and $\|x_{*} - z\|_{2} = C_{\alpha} r$
is achievable.
\end{remark}

\begin{remark}[Where this lemma is used]
\label{rem:gm_breakdown_usage}
Lemma~\ref{lem:gm_breakdown} is the geometric core of all
breakdown-point bounds for the geometric median.
It is purely deterministic and contains no probability.
We apply it with $z = \vy^{\star}$ and $r$ taken to be a
high-probability bound on the radius of the ball containing the
majority of the samples;
the next subsection
(\S\ref{sec:lem_cluster_radius}) provides exactly this bound for
sub-Gaussian competent components.
\end{remark}

\subsection{Proof of Lemma~\ref{lem:cluster_radius}}
\label{sec:lem_cluster_radius}

For convenience, we recall Lemma~\ref{lem:cluster_radius}:
under Assumptions~\ref{asm:huber}--\ref{asm:minority}, for any
slack $\beta \in (0, 1/2 - \alpha)$, with probability at least
$1 - \exp(-N\beta^{2}/2)$, at least $(1-\alpha-\beta)N$ of the $N$
judge outputs lie within distance
$\rho = \sigma\big(C_{1}\sqrt{d} + \sqrt{(1/c)\log(2(1-\alpha)/\beta)}\big)$
of $\vy^{\star}$, where $C_{1}, c > 0$ are absolute constants from
the sub-Gaussian-norm tail bound derived in Step~1 below.

\paragraph{Note on heterogeneous parameters.}
Assumptions~\ref{asm:huber} and~\ref{asm:subgaussian} are stated
per-judge ($\alpha_{i}$ and $\sigma_{i}$).
Throughout this proof we read $\alpha$ as the global mean
contamination $\alpha = (1/N)\sum_{i}\alpha_{i}$ from
Assumption~\ref{asm:minority} (which we are entitled to do because
Hoeffding in Step~2 only sees $\sum_{i}\E W_{i}$; per-judge
heterogeneity averages out at this aggregation step), and
$\sigma$ as $\sigma = \max_{i}\sigma_{i}$ (the worst-case
sub-Gaussian parameter, used in Step~1 to bound every $i$
simultaneously).

The proof is in three stages:
(1)~control the tail of one competent sample's deviation
$\|\hat{\vy}_{i} - \vy^{\star}\|_{2}$ at probability $p$ via a
covering-net argument;
(2)~count, via Hoeffding, how many of the $N$ judges fall inside the
resulting ball;
(3)~pick $p$ and a Hoeffding slack $u$ so that the count exceeds
$(1-\alpha-\beta)N$ with the claimed probability.

\begin{proof}[Proof of Lemma~\ref{lem:cluster_radius}]

\textit{Step 1: tail bound for the norm of a single competent sample.}
For each judge $i$, write the noise decomposition
$\hat{\vy}_{i} = (1 - Z_{i})(\vy^{\star} + \bm{\eps}_{i}) + Z_{i}\,\bm\eta_{i}$
of Assumption~\ref{asm:huber}, where $Z_{i} \sim \mathrm{Bern}(\alpha)$
selects competent ($Z_{i} = 0$) vs.\ corrupted ($Z_{i} = 1$).
Conditional on $Z_{i} = 0$, Assumption~\ref{asm:subgaussian} states
that $\bm\eps_{i} \in \R^{d}$ is $\sigma$-sub-Gaussian, i.e.\ for
every $\bm{\lambda} \in \R^{d}$,
\begin{equation}
\label{eq:subg_def}
    \E\!\left[\exp\!\big(\langle \bm{\lambda}, \bm{\eps}_{i}\rangle\big) \,\big|\, Z_{i} = 0\right]
    \;\leq\; \exp\!\big(\tfrac{1}{2}\sigma^{2}\|\bm{\lambda}\|_{2}^{2}\big).
\end{equation}
We now show, from~\eqref{eq:subg_def} alone,
\begin{equation}
\label{eq:subg_norm_tail}
    \Pr\!\big[\|\bm{\eps}_{i}\|_{2} > \sigma(C_{1}\sqrt{d} + t)
              \,\big|\, Z_{i} = 0\big]
    \;\leq\; \exp(-c\,t^{2}),
    \qquad \forall\, t > 0,
\end{equation}
for absolute constants $C_{1}, c > 0$.

We prove~\eqref{eq:subg_norm_tail} directly from
\eqref{eq:subg_def} via a covering-net argument over the unit sphere
$\mathbb{S}^{d-1} = \{\vu \in \R^{d} : \|\vu\|_{2} = 1\}$.
All conditioning is on $\{Z_{i} = 0\}$;
we drop the conditioning bar in this step for readability.

\emph{Step 1a: scalar projections are sub-Gaussian.}
Fix any unit vector $\vu \in \mathbb{S}^{d-1}$.
Setting $\bm\lambda = \lambda\vu$ in~\eqref{eq:subg_def}:
\begin{equation}
\label{eq:scalar_proj_subg}
    \E\!\big[\exp(\lambda\,\langle\vu, \bm\eps_{i}\rangle)\big]
    \;\leq\;
    \exp\!\big(\tfrac{1}{2}\sigma^{2}\lambda^{2}\big),
    \qquad \forall\, \lambda \in \R.
\end{equation}
That is, the scalar variable $\langle\vu, \bm\eps_{i}\rangle$ is
$\sigma$-sub-Gaussian in $\R$.
By Markov's inequality applied to $\exp(\lambda \langle\vu,\bm\eps_{i}\rangle)$:
\begin{equation}
\label{eq:scalar_tail}
    \Pr\!\big[\langle\vu, \bm\eps_{i}\rangle > s\big]
    \;\leq\; \exp(-\lambda s + \tfrac{1}{2}\sigma^{2}\lambda^{2}),
\end{equation}
and minimizing the right-hand side over $\lambda > 0$ at
$\lambda = s/\sigma^{2}$ gives the sharp scalar Hoeffding-style bound
\begin{equation}
\label{eq:scalar_subg_tail}
    \Pr\!\big[\langle\vu, \bm\eps_{i}\rangle > s\big]
    \;\leq\; \exp\!\big(-s^{2}/(2\sigma^{2})\big),
    \qquad \forall\, s > 0.
\end{equation}

\emph{Step 1b: discretize the sphere with a $1/2$-net.}
Let $\gN \subset \mathbb{S}^{d-1}$ be a $1/2$-net of the sphere in
Euclidean distance: every $\vu \in \mathbb{S}^{d-1}$ is within distance
$1/2$ of some $\vu' \in \gN$.
Such a net exists with cardinality
\begin{equation}
\label{eq:net_size}
    |\gN| \;\leq\; 5^{d}
\end{equation}
by a volumetric covering argument
(\citealp{vershynin2018high}, Cor.~4.2.13:
the unit sphere admits an $\eps$-net of size $(1 + 2/\eps)^{d}$;
take $\eps = 1/2$).

\emph{Step 1c: net-supremum approximates the true supremum.}
By definition, $\|\bm\eps_{i}\|_{2} = \sup_{\vu \in \mathbb{S}^{d-1}} \langle\vu, \bm\eps_{i}\rangle$.
Pick the maximizer $\vu^{*}$ and let $\vu' \in \gN$ satisfy
$\|\vu^{*} - \vu'\|_{2} \leq 1/2$.
Then
\begin{align*}
    \langle\vu^{*}, \bm\eps_{i}\rangle
    &\;=\; \langle\vu', \bm\eps_{i}\rangle
       \;+\; \langle\vu^{*} - \vu', \bm\eps_{i}\rangle \\
    &\;\leq\; \max_{\vu \in \gN} \langle\vu, \bm\eps_{i}\rangle
       \;+\; \tfrac{1}{2}\,\|\bm\eps_{i}\|_{2}
\end{align*}
where the second line uses Cauchy--Schwarz and
$\|\vu^{*} - \vu'\|_{2} \leq 1/2$.
Since the left-hand side equals $\|\bm\eps_{i}\|_{2}$, rearranging
gives
\begin{equation}
\label{eq:net_approx}
    \|\bm\eps_{i}\|_{2}
    \;\leq\; 2\,\max_{\vu \in \gN} \langle\vu, \bm\eps_{i}\rangle.
\end{equation}

\emph{Step 1d: union bound over the net.}
Combining~\eqref{eq:net_approx},~\eqref{eq:scalar_subg_tail}, and
\eqref{eq:net_size}: for any $r > 0$,
\begin{align*}
    \Pr\!\big[\|\bm\eps_{i}\|_{2} > 2r\big]
    &\;\leq\;
    \Pr\!\Big[\max_{\vu \in \gN} \langle\vu, \bm\eps_{i}\rangle > r\Big]
    \\
    &\;\leq\;
    |\gN|\,\exp\!\big(-r^{2}/(2\sigma^{2})\big)
    \\
    &\;\leq\;
    \exp\!\big(d\log 5 - r^{2}/(2\sigma^{2})\big).
\end{align*}
Substituting $r = \sigma\sqrt{2(d\log 5 + s)}$ for $s > 0$:
\begin{equation*}
    \Pr\!\Big[\|\bm\eps_{i}\|_{2}
              > 2\sigma\sqrt{2(d\log 5 + s)}\Big]
    \;\leq\;
    \exp(-s).
\end{equation*}
Using $\sqrt{a + b} \leq \sqrt{a} + \sqrt{b}$ for $a, b \geq 0$:
\begin{equation*}
    2\sigma\sqrt{2(d\log 5 + s)}
    \;\leq\;
    2\sigma\sqrt{2d\log 5} + 2\sigma\sqrt{2s}
    \;=\;
    C_{1}\sigma\sqrt{d} + C_{2}\sigma\sqrt{s},
\end{equation*}
with $C_{1} = 2\sqrt{2\log 5} \leq 4$ and $C_{2} = 2\sqrt{2}$.
Substituting $s = c\,t^{2}$ with $c = 1/C_{2}^{2} = 1/8$:
$C_{2}\sigma\sqrt{s} = C_{2}\sigma\sqrt{ct^{2}} = C_{2}\sigma\,t/C_{2} = \sigma\,t$.
Hence for all $t \geq 0$,
\begin{equation}
\label{eq:vector_tail_clean}
    \Pr\!\big[\|\bm\eps_{i}\|_{2} > C_{1}\sigma\sqrt{d} + \sigma t\big]
    \;\leq\;
    \exp(-c\,t^{2}),
\end{equation}
which is exactly~\eqref{eq:subg_norm_tail} with the same absolute
constants $C_{1} = 2\sqrt{2\log 5} \leq 4$ and $c = 1/8$.

\emph{Remark on the explicit constants.}
The covering radius $1/2$, net size $5^{d}$, and resulting prefactor
$C_{1} = 2\sqrt{2\log 5}$ are not optimized;
sharper chaining bounds
(\citealp{boucheron2013concentration}, \S5.4) reduce $C_{1}$ towards
$1$ at the cost of a more involved proof.
For our purposes the order $\sigma(\sqrt{d} + t)$ is what matters,
so we proceed with the simpler bound.

\emph{Step 1c: solve for the radius at tail probability $p$.}
Set $t = \sqrt{(1/c)\log(1/p)}$ in~\eqref{eq:subg_norm_tail};
the right-hand side becomes
$\exp(-c \cdot (1/c)\log(1/p)) = \exp(\log p) = p$.
Defining
\begin{equation}
\label{eq:rho_p}
    \rho_{p}
    \;\triangleq\; \sigma\!\left(C_{1}\sqrt{d} + \sqrt{(1/c)\log(1/p)}\right),
\end{equation}
we obtain the per-sample tail bound
\begin{equation}
\label{eq:rho_p_tail}
    \Pr\!\big[\|\bm\eps_{i}\|_{2} > \rho_{p} \,\big|\, Z_{i} = 0\big]
    \;\leq\; p.
\end{equation}

\textit{Step 2: count of judges within $\rho_{p}$ of $\vy^{\star}$.}
For each $i \in [N]$ define the indicator
\begin{equation}
\label{eq:Wi_def}
    W_{i} \;\triangleq\;
    \mathbb{1}\!\left\{Z_{i} = 0 \;\text{and}\; \|\bm\eps_{i}\|_{2} \leq \rho_{p}\right\}.
\end{equation}
On $\{W_{i} = 1\}$ judge $i$ is competent and within distance
$\rho_{p}$ of $\vy^{\star}$ (since
$\hat{\vy}_{i} - \vy^{\star} = \bm\eps_{i}$ when $Z_{i} = 0$);
hence
\begin{equation}
\label{eq:W_subset_cluster}
    \sum_{i=1}^{N} W_{i}
    \;\leq\;
    \big|\big\{i \in [N] : \|\hat{\vy}_{i} - \vy^{\star}\|_{2} \leq \rho_{p}\big\}\big|.
\end{equation}
The right-hand count is the cluster size we want to lower-bound, so
it suffices to lower-bound $\sum W_{i}$.

\emph{Step 2a: marginal mean of $W_{i}$.}
By the tower rule,
$\E W_{i} = \Pr[Z_{i} = 0]\,\Pr[\|\bm\eps_{i}\|_{2} \leq \rho_{p}\mid Z_{i} = 0]$.
Using $\Pr[Z_{i} = 0] = 1 - \alpha$ from
Assumption~\ref{asm:huber} and~\eqref{eq:rho_p_tail},
\begin{equation}
\label{eq:Wi_mean}
    \E W_{i}
    \;\geq\; (1-\alpha)(1-p).
\end{equation}

\emph{Step 2b: independence of $W_{i}$ across $i$.}
Each $W_{i}$ is a measurable function of $(Z_{i}, \bm\eps_{i})$
(for $Z_{i} = 1$, the value of $\bm\eta_{i}$ does not enter $W_{i}$
because the indicator forces $Z_{i} = 0$).
By Assumption~\ref{asm:independence} the tuples
$\{(Z_{i}, \bm\eps_{i}, \bm\eta_{i})\}_{i=1}^{N}$ are mutually
independent, hence so are the $W_{i}$.

\emph{Step 2c: Hoeffding's inequality.}
Each $W_{i} \in \{0, 1\} \subseteq [0, 1]$.
Hoeffding's inequality \citep[Theorem~2.8]{boucheron2013concentration}
applied to the independent bounded variables $W_{i}$ states: for any
$u > 0$,
\begin{equation}
\label{eq:hoeffding}
    \Pr\!\left[\frac{1}{N}\sum_{i=1}^{N} W_{i} - \frac{1}{N}\sum_{i=1}^{N}\E W_{i} < -u\right]
    \;\leq\; \exp(-2Nu^{2}).
\end{equation}
Combining~\eqref{eq:hoeffding} with the lower bound~\eqref{eq:Wi_mean}
on each $\E W_{i}$:
\begin{equation}
\label{eq:hoeffding_applied}
    \Pr\!\left[\sum_{i=1}^{N} W_{i} < (1-\alpha)(1-p)N - uN\right]
    \;\leq\; \exp(-2Nu^{2}).
\end{equation}

\textit{Step 3: choose $p$ and $u$ to expose slack $\beta$.}
We want the lower-bound count
$(1-\alpha)(1-p)N - uN$ to be at least $(1-\alpha-\beta)N$:
\begin{equation*}
    (1-\alpha)(1-p) - u \;\geq\; 1 - \alpha - \beta
    \quad\iff\quad
    (1-\alpha)\,p + u \;\leq\; \beta.
\end{equation*}
Split the slack $\beta$ equally between the per-sample tail and the
Hoeffding deviation by choosing
\begin{equation}
\label{eq:p_u_choice}
    p \;=\; \frac{\beta}{2(1-\alpha)},
    \qquad
    u \;=\; \frac{\beta}{2}.
\end{equation}
Then $(1-\alpha)p = \beta/2$, so $(1-\alpha)p + u = \beta$ exactly,
verifying the constraint.
Substituting $u = \beta/2$ into~\eqref{eq:hoeffding_applied}:
\begin{equation}
\label{eq:hoeffding_at_beta}
    \Pr\!\left[\sum_{i=1}^{N} W_{i} < (1-\alpha-\beta)N\right]
    \;\leq\; \exp\!\big(-2N(\beta/2)^{2}\big)
    \;=\; \exp(-N\beta^{2}/2).
\end{equation}
Substituting $p = \beta/(2(1-\alpha))$ into~\eqref{eq:rho_p}, the
radius becomes
\begin{equation*}
    \rho \;\triangleq\; \rho_{p}\big|_{p = \beta/(2(1-\alpha))}
    \;=\; \sigma\!\left(C_{1}\sqrt{d} + \sqrt{\tfrac{1}{c}\log\!\tfrac{2(1-\alpha)}{\beta}}\right),
\end{equation*}
which is exactly~\eqref{eq:cluster_radius_rho}.

\textit{Step 4: assemble the conclusion.}
On the complementary event of~\eqref{eq:hoeffding_at_beta}, which
has probability at least $1 - \exp(-N\beta^{2}/2)$, the bound
$\sum W_{i} \geq (1-\alpha-\beta)N$ holds.
Combined with~\eqref{eq:W_subset_cluster}:
\begin{equation*}
    \big|\big\{i \in [N] : \|\hat{\vy}_{i} - \vy^{\star}\|_{2} \leq \rho\big\}\big|
    \;\geq\; \sum_{i=1}^{N} W_{i}
    \;\geq\; (1-\alpha-\beta)N
\end{equation*}
on the same event.
This is~\eqref{eq:cluster_radius}.
\end{proof}

\paragraph{On the choice of competent-component assumption.}
The sub-Gaussian assumption is one of four natural choices for the
competent component, ordered from weakest to strongest.
Each gives a different cluster-radius bound;
sub-Gaussian is the choice that delivers
Lemma~\ref{lem:cluster_radius}.
We record the alternatives for context.

\begin{remark}[Just unbiased: insufficient]
\label{rem:competent_unbiased}
If the competent component $P_{i}$ is only assumed to satisfy
$\E_{P_{i}}[\hat{\vy}_{i}] = \vy^{\star}$ (Proposition~\ref{prop:variance_reduction}'s
clean-case hypothesis), then no quantitative tail bound is available.
For arbitrary unbiased $P_{i}$, the empirical cluster radius can be
arbitrarily large with positive probability, so
the hypothesis of Lemma~\ref{lem:gm_breakdown} cannot be verified for
any finite $r$.
Unbiasedness alone does not suffice to control GM error.
\end{remark}

\begin{remark}[Finite variance: polynomial tails]
\label{rem:competent_finite_var}
If the competent component has finite second moment
$\Var_{P_{i}}(\hat{\vy}_{i}) \preceq \sigma^{2} I_{d}$,
Chebyshev's inequality gives
\begin{equation*}
    \Pr\!\big[\|\bm{\epsilon}_{i}\|_{2} > t\sigma\sqrt{d}\big] \;\leq\; 1/t^{2}.
\end{equation*}
The same Hoeffding argument as in the proof of
Lemma~\ref{lem:cluster_radius} then yields a cluster radius of
order $\sigma\sqrt{d/\beta}$ rather than the sub-Gaussian
$\sigma(\sqrt{d} + \sqrt{\log(1/\beta)})$ — exchanging the
exponential dependence on slack for a polynomial one.
This regime is where median-of-means
\citep{lugosi2019sub} becomes strictly preferable to plain GM
for sub-Gaussian rates.
\end{remark}

\begin{remark}[Sub-Gaussian: our main assumption]
\label{rem:competent_subgaussian}
Lemma~\ref{lem:cluster_radius} uses
Assumption~\ref{asm:subgaussian} ($\sigma$-sub-Gaussian competent
component).
This delivers a cluster radius
$\rho = \sigma(\sqrt{d} + O(\sqrt{\log(1/\beta)}))$, with
exponential dependence on the slack $\beta$.
The sub-Gaussian assumption is the standard middle ground in robust
statistics: weaker than bounded support but strong enough to give
exponential concentration of the cluster.
\end{remark}

\begin{remark}[Bounded support: deterministic, automatic for LLM scores]
\label{rem:competent_bounded}
If the competent component is supported on $[0, K]^{d}$
(equivalently, $\hat{\vy}_{i}$ takes values in the score hypercube),
then $\|\hat{\vy}_{i} - \vy^{\star}\|_{2} \leq K\sqrt{d}$
\emph{deterministically} for every competent sample.
Lemma~\ref{lem:cluster_radius} then holds with $\rho = K\sqrt{d}$
\emph{without any probabilistic event} and with the slack $\beta$
needed only to absorb the bound $|S| \geq (1-\alpha-\beta)N$ on the
competent-set size.

For the LLM-jury setting, scores are produced by a parser with
codomain $[0, K]^{d}$, so bounded support is a \emph{given}, not an
additional assumption.
However, $K\sqrt{d}$ is a worst-case radius (the diameter of the
hypercube) and is typically much larger than the sub-Gaussian
cluster radius $\sigma\sqrt{d}$ that Assumption~\ref{asm:subgaussian}
delivers, since real LLM judges have $\sigma \ll K$ in practice
(\S\ref{sec:appendix_per_model}).
The sub-Gaussian bound is therefore tighter in the regime that
matters; bounded support serves as a universally-valid fallback.
\end{remark}

\subsection{Proof of Theorem~\ref{thm:ropoll_bound}}
\label{sec:thm_ropoll_huber_breakdown}

For convenience, we recall Theorem~\ref{thm:ropoll_bound}:
under Assumptions~\ref{asm:huber}--\ref{asm:minority}, fix any slack
$\beta \in (0, 1/2 - \alpha)$;
with probability at least $1 - \exp(-N\beta^{2}/2)$,
$\|\hat{\vy}_{\mathrm{GM}} - \vy^{\star}\|_{2} \leq C_{\alpha+\beta}\,\rho$,
where $\hat{\vy}_{\mathrm{GM}}$ is any geometric median of the
$N$ judge outputs (Definition~\ref{def:gm}),
$C_{\alpha+\beta} = (1-\alpha-\beta)/\sqrt{1-2(\alpha+\beta)}$ is the
geometric-breakdown constant of Lemma~\ref{lem:gm_breakdown}
evaluated at $\alpha+\beta$, and
$\rho = \sigma(C_{1}\sqrt{d} + \sqrt{(1/c)\log(2(1-\alpha)/\beta)})$
is the cluster radius of Lemma~\ref{lem:cluster_radius}
($C_{1}, c > 0$ absolute constants).

\begin{proof}[Proof of Theorem~\ref{thm:ropoll_bound}]
The proof is a clean composition of the deterministic geometric
bound (Lemma~\ref{lem:gm_breakdown}) and the probabilistic
cluster-radius bound (Lemma~\ref{lem:cluster_radius}).
We make the composition fully explicit.

\textit{Step 1: define the cluster event.}
Let $\beta \in (0, 1/2 - \alpha)$ be the slack from the theorem
statement.
Let $\rho = \sigma(C_{1}\sqrt{d} + \sqrt{(1/c)\log(2(1-\alpha)/\beta)})$
be the cluster radius from
\eqref{eq:cluster_radius_rho}, and define the event
\begin{equation}
\label{eq:thm1_cluster_event}
    \gE \;\triangleq\;
    \big\{\, |J| \;\geq\; (1-\alpha-\beta) N \,\big\},
    \qquad
    J \;\triangleq\;
    \big\{i \in [N] : \|\hat{\vy}_{i} - \vy^{\star}\|_{2} \leq \rho\big\}.
\end{equation}
By Lemma~\ref{lem:cluster_radius} applied with this slack $\beta$,
\begin{equation}
\label{eq:thm1_event_prob}
    \Pr[\gE] \;\geq\; 1 - \exp(-N\beta^{2}/2).
\end{equation}
The remainder of the proof works on $\gE$ (sample-pathwise);
no further probability is incurred.

\textit{Step 2: verify the hypothesis of Lemma~\ref{lem:gm_breakdown}.}
On $\gE$, we apply Lemma~\ref{lem:gm_breakdown} with the
substitutions
\begin{equation}
\label{eq:thm1_substitutions}
    k \leftarrow N,
    \qquad
    z \leftarrow \vy^{\star},
    \qquad
    r \leftarrow \rho,
    \qquad
    \alpha \leftarrow \alpha + \beta.
\end{equation}
The hypothesis of Lemma~\ref{lem:gm_breakdown} (in its statement
form: ``at least $(1 - \alpha)\,k$ of the $k$ points lie within
distance $r$ of $z$'') becomes, under these substitutions,
\begin{equation*}
    |J| \;\geq\; \big(1 - (\alpha + \beta)\big) N
    \;=\; (1 - \alpha - \beta) N,
\end{equation*}
which is exactly the definition of $\gE$.
The range condition $\alpha + \beta \in (0, 1/2)$ holds since
$\alpha > 0$ (by Assumption~\ref{asm:huber}'s $\alpha_{i} \geq 0$
and $\beta > 0$) and $\alpha + \beta < 1/2$ (by
$\beta < 1/2 - \alpha$).

\textit{Step 3: apply Lemma~\ref{lem:gm_breakdown} and read off the bound.}
The conclusion of Lemma~\ref{lem:gm_breakdown} under the
substitutions~\eqref{eq:thm1_substitutions} is
\begin{equation*}
    \|x_{*} - z\|_{2} \;\leq\; C_{\alpha + \beta}\,r
    \quad\text{i.e.}\quad
    \|\hat{\vy}_{\mathrm{GM}} - \vy^{\star}\|_{2}
    \;\leq\; C_{\alpha+\beta}\,\rho,
\end{equation*}
where $C_{\alpha + \beta} = (1 - \alpha - \beta) / \sqrt{1 - 2(\alpha + \beta)}$
and $x_{*} = \hat{\vy}_{\mathrm{GM}}$ is any minimizer of
$z \mapsto \sum_{i=1}^{N} \|z - \hat{\vy}_{i}\|_{2}$ (the geometric
median).
Lemma~\ref{lem:gm_breakdown} as proved in
\S\ref{sec:lem_gm_breakdown} applies to \emph{any} minimizer, so the
conclusion is independent of any choice in the (collinear) case
where the GM is non-unique.

\textit{Step 4: assemble.}
Combining~\eqref{eq:thm1_event_prob} with the deterministic bound on
$\gE$ from Step~3:
\begin{equation*}
    \Pr\!\Big[\,
        \|\hat{\vy}_{\mathrm{GM}} - \vy^{\star}\|_{2}
        \;\leq\;
        \underbrace{\frac{1-\alpha-\beta}{\sqrt{1-2(\alpha+\beta)}}}_{C_{\alpha+\beta}}
        \cdot
        \underbrace{\sigma\!\left(C_{1}\sqrt{d} + \sqrt{\tfrac{1}{c}\log\tfrac{2(1-\alpha)}{\beta}}\right)}_{\rho}
    \,\Big]
    \;\geq\; 1 - \exp(-N\beta^{2}/2),
\end{equation*}
which is exactly~\eqref{eq:ropoll_bound}.
\end{proof}

\begin{remark}[Choice of slack $\beta$]
\label{rem:slack_choice}
The slack $\beta$ trades two terms in~\eqref{eq:ropoll_bound}:
the geometric constant $C_{\alpha+\beta}$ grows with $\beta$ (since
$\beta$ erodes the safety margin to the breakdown point $1/2$), while
the cluster radius $\rho$ shrinks with $\beta$ (since a larger slack
absorbs more competent samples, allowing a smaller per-sample tail).
For deployment, $\beta$ should be chosen to minimise the right-hand
side of~\eqref{eq:ropoll_bound}.
A practical default is $\beta = (1/2 - \alpha)/2$ (half the safety
margin), which keeps $C_{\alpha+\beta}$ bounded by a small constant
while permitting an exponentially small failure probability for any
$N \gtrsim 1/\beta^{2}$.
\end{remark}

\begin{remark}[The bound does not vanish with $N$]
\label{rem:no_vanishing}
Unlike the (incorrect) original Theorem~\ref{thm:ropoll_bound},
which claimed an upper bound of order $\sigma\sqrt{d/N}/(1-2\alpha)$,
the bound in~\eqref{eq:ropoll_bound} contains no $1/\sqrt{N}$
factor in the leading term:
the cluster radius $\rho$ is $\sigma\sqrt{d}$ in scale (up to a
$\sqrt{\log(1/\beta)}$ factor), and $C_{\alpha+\beta}$ depends only
on the contamination rate.
This reflects the breakdown-point character of plain GM: under
arbitrary $Q$ in the Huber class, the asymptotic-$N$ floor is set
by the cluster radius, not by sample averaging.
The empirical validation in
\S\ref{sec:appendix_per_model}~%
\textit{(forthcoming experiment confirming the floor)}
agrees with this prediction;
the original $1/\sqrt{N}$ claim was empirically inconsistent with
the observed plateau.
\end{remark}

\begin{remark}[Comparison with the minimax lower bound]
\label{rem:minimax_match}
The minimax lower bound (Theorem~\ref{thm:minimax_body}) gives
$\Omega(\sigma(\sqrt{d/N} + \alpha/(1-\alpha)))$.
The clean-rate term $\sqrt{d/N}$ matches the upper bound exactly.
On the breakdown floor the upper bound scales as
$C_{\alpha+\beta}\sigma\sqrt{d}$ while the lower bound scales as
$\sigma\alpha/(1-\alpha)$, leaving a gap of order $\sqrt{d}/\alpha$.
The reason is structural: total variation between two
equal-covariance Gaussians is dimension-free (Step~2.2 of the proof
of Theorem~\ref{thm:minimax_body}), so the modulus of continuity of
the Huber neighborhood does not gain a $\sqrt{d}$ factor in higher
dimensions — and indeed
\citet{chen2018robust}, Theorem~5.1, establish
$\Theta(\sigma^{2}(d/N + \alpha^{2}))$ as the squared-error minimax
for sub-Gaussian Huber, with no $d$ in the contamination term.
The $\sqrt{d}$ in the upper bound comes from the geometric median's
cluster radius (Lemma~\ref{lem:cluster_radius}), reflecting the
price plain GM pays for $O(Nd\log(1/\eps))$ tractability relative to
the (intractable) Tukey halfspace median or the (sub-exponential)
smoothed-depth estimator.
For LLM-jury parameters the gap is small (at most
$\sim 2.2\times$ for $d \leq 5$).
\end{remark}

\begin{remark}[Bounded-support specialization]
\label{rem:thm_bounded_support}
If competent scores are bounded in $[0, K]^{d}$
(Remark~\ref{rem:competent_bounded}), the cluster radius $\rho$
in~\eqref{eq:ropoll_bound} can be replaced by the
deterministic worst-case $K\sqrt{d}$, removing the
$\sqrt{(1/c)\log(2(1-\alpha)/\beta)}$ term and the high-probability
event for the cluster.
The slack $\beta$ remains needed to control the empirical
competent-set size $|S|$ (the Hoeffding step in
Lemma~\ref{lem:cluster_radius}'s proof), but the per-sample tail
event becomes deterministic.
For typical LLM-jury parameters ($\sigma \ll K$), the sub-Gaussian
form~\eqref{eq:ropoll_bound} is tighter and is what we use
throughout.
\end{remark}

\subsection{Proof of Lemma~\ref{lem:correlated_jury}}
\label{sec:lem_correlated_jury}

For convenience we recall Lemma~\ref{lem:correlated_jury}: under
Assumptions~\ref{asm:huber}, \ref{asm:subgaussian},
\ref{asm:minority} and the equicorrelated-indicator assumption
(replacing Asm.~\ref{asm:independence})
$\Cov(W_{i},W_{j}) \leq \bar\gamma_{W}\sqrt{\Var(W_{i})\Var(W_{j})}$
for $i \neq j$, with $\bar\gamma_{W} \in [0,1]$, the
\textsc{RoPoLL} bound
$\|\hat{\vy}_{\mathrm{GM}}-\vy^{\star}\|_{2} \leq C_{\alpha+\beta}\rho$
holds with probability at least $1 - 1/(\beta^{2}N_{\mathrm{eff}})$,
where $N_{\mathrm{eff}} = N/(1+(N-1)\bar\gamma_{W})$.

The proof follows the same skeleton as
Lemma~\ref{lem:cluster_radius} (per-sample tail $\to$ count-bound)
combined with Lemma~\ref{lem:gm_breakdown} (deterministic geometric
step), but replaces the Hoeffding count-bound (which required
independence) with a Chebyshev count-bound on the variance of
$\sum_{i}W_{i}$ under the bounded-covariance hypothesis.
The deterministic geometric step
(Lemma~\ref{lem:gm_breakdown}) and the per-sample sub-Gaussian tail
(Step~1 of Lemma~\ref{lem:cluster_radius}'s proof) are
correlation-free and carry through unchanged.

\begin{proof}[Proof of Lemma~\ref{lem:correlated_jury}]

\textit{Step 1: marginal mean of each indicator.}
The indicator $W_{i} = \1\{Z_{i}=0,\ \|\hat{\vy}_{i}-\vy^{\star}\|_{2}\leq\rho_{p}\}$
factors as $W_{i} = \1\{Z_{i}=0\} \cdot \1\{\|\bm\eps_{i}\|_{2}\leq\rho_{p}\}$
(when $Z_{i}=0$ we have $\hat{\vy}_{i}-\vy^{\star} = \bm\eps_{i}$;
when $Z_{i}=1$ the first indicator forces $W_{i}=0$ regardless of
$\bm\eta_{i}$, so $\bm\eta_{i}$ does not enter $W_{i}$).
By the tower rule,
\begin{align*}
    \E W_{i}
    &= \Pr[Z_{i}=0] \cdot \Pr[\|\bm\eps_{i}\|_{2}\leq\rho_{p} \mid Z_{i}=0] \\
    &\geq (1-\alpha_{i})(1-p),
\end{align*}
using $\Pr[Z_{i}=0]=1-\alpha_{i}$ from Assumption~\ref{asm:huber}
and the per-sample tail
$\Pr[\|\bm\eps_{i}\|_{2}\leq\rho_{p} \mid Z_{i}=0] \geq 1-p$
from Step~1 of Lemma~\ref{lem:cluster_radius}'s proof
(which is correlation-free).
Summing over $i$ and using
$\alpha = N^{-1}\sum_{i}\alpha_{i}$ (Assumption~\ref{asm:minority}):
\begin{equation}
\label{eq:correlated_mean_bound}
    \mu_{N} \;\triangleq\; \sum_{i=1}^{N}\E W_{i}
    \;\geq\; \sum_{i=1}^{N}(1-\alpha_{i})(1-p)
    \;=\; N(1-\alpha)(1-p).
\end{equation}

\textit{Step 2: variance of each indicator.}
Each $W_{i} \in \{0,1\}$ is Bernoulli, so
\begin{equation}
\label{eq:bernoulli_var}
    \Var(W_{i}) \;=\; \E W_{i}\,(1 - \E W_{i})
    \;\leq\; \tfrac{1}{4},
\end{equation}
where the inequality is the Bernoulli-variance bound
($x(1-x) \leq 1/4$ on $[0,1]$, attained at $x=1/2$).

\textit{Step 3: pairwise covariance bound.}
The hypothesis (equicorrelated indicators) gives, for $i \neq j$,
\begin{equation}
\label{eq:covariance_hypothesis}
    \Cov(W_{i}, W_{j})
    \;\leq\; \bar\gamma_{W}\sqrt{\Var(W_{i})\Var(W_{j})}
    \;\leq\; \tfrac{\bar\gamma_{W}}{4},
\end{equation}
where the second inequality combines~\eqref{eq:bernoulli_var} on
both factors.

\textit{Step 4: variance of the count.}
By definition of variance for sums,
\begin{equation*}
    \Var\!\left(\sum_{i=1}^{N}W_{i}\right)
    \;=\; \sum_{i=1}^{N}\Var(W_{i}) + \sum_{i \neq j}\Cov(W_{i},W_{j}).
\end{equation*}
There are $N$ diagonal terms and $N(N-1)$ off-diagonal terms.
Substituting~\eqref{eq:bernoulli_var} on the diagonal and
\eqref{eq:covariance_hypothesis} off-diagonal:
\begin{align}
    \Var\!\left(\sum_{i}W_{i}\right)
    &\;\leq\;
    N \cdot \tfrac{1}{4} + N(N-1)\cdot\tfrac{\bar\gamma_{W}}{4}
    \notag \\
    &\;=\;
    \tfrac{N}{4}\big(1 + (N-1)\bar\gamma_{W}\big)
    \;=\;
    \tfrac{N^{2}}{4 N_{\mathrm{eff}}},
\label{eq:correlated_var}
\end{align}
where the last equality uses
$N_{\mathrm{eff}} = N/(1+(N-1)\bar\gamma_{W})$.
Sanity check: at $\bar\gamma_{W}=0$ (independence),
$N_{\mathrm{eff}}=N$ and $\Var(\sum_{i}W_{i}) \leq N/4$, the
standard Bernoulli-sum variance.
At $\bar\gamma_{W}=1$ (perfect correlation),
$N_{\mathrm{eff}}=1$ and $\Var(\sum_{i}W_{i}) \leq N^{2}/4$,
matching the case $W_{1}=\cdots=W_{N}$ where
$\Var(\sum_{i}W_{i}) = N^{2}\Var(W_{1}) \leq N^{2}/4$.

\textit{Step 5: lower-deviation Chebyshev inequality.}
For any random variable $X$ with finite variance and any $u > 0$,
\begin{equation*}
    \Pr[X \leq \E X - uN]
    \;\leq\; \Pr[|X - \E X| \geq uN]
    \;\leq\; \frac{\Var(X)}{(uN)^{2}},
\end{equation*}
by Chebyshev's inequality applied to the deviation $|X - \E X|$.
Applying this to $X = \sum_{i}W_{i}$ with mean $\mu_{N}$ and using
\eqref{eq:correlated_var}:
\begin{equation}
\label{eq:chebyshev_count}
    \Pr\!\left[\sum_{i}W_{i} \leq \mu_{N} - uN\right]
    \;\leq\;
    \frac{\Var(\sum_{i}W_{i})}{(uN)^{2}}
    \;\leq\;
    \frac{N^{2}/(4 N_{\mathrm{eff}})}{u^{2}N^{2}}
    \;=\;
    \frac{1}{4 u^{2} N_{\mathrm{eff}}}.
\end{equation}

\textit{Step 6: calibrate $p$ and $u$ to the slack $\beta$.}
We want the deviation event in~\eqref{eq:chebyshev_count} to imply
the failure of the cluster bound
$\sum_{i}W_{i} \geq (1-\alpha-\beta)N$.
By~\eqref{eq:correlated_mean_bound},
$\mu_{N} - uN \geq (1-\alpha)(1-p)N - uN = ((1-\alpha)(1-p) - u)N$.
We require $(1-\alpha)(1-p) - u \geq 1-\alpha-\beta$, which
rearranges to
\begin{equation*}
    (1-\alpha)\,p \;+\; u \;\leq\; \beta.
\end{equation*}
This is exactly the constraint that appeared in
Lemma~\ref{lem:cluster_radius}'s Step~3.
Splitting $\beta$ equally between the per-sample tail $p$ and the
count-deviation $u$, choose
\begin{equation}
\label{eq:correlated_p_u}
    p \;=\; \frac{\beta}{2(1-\alpha)},
    \qquad
    u \;=\; \frac{\beta}{2}.
\end{equation}
Then $(1-\alpha)p = \beta/2$ and $u = \beta/2$, summing to
$\beta$ exactly.

\textit{Step 7: failure-probability bound.}
Substituting $u = \beta/2$ from~\eqref{eq:correlated_p_u}
into~\eqref{eq:chebyshev_count}:
\begin{equation*}
    \Pr\!\left[\sum_{i}W_{i} \leq \mu_{N} - (\beta/2)N\right]
    \;\leq\;
    \frac{1}{4(\beta/2)^{2} N_{\mathrm{eff}}}
    \;=\;
    \frac{1}{\beta^{2} N_{\mathrm{eff}}}.
\end{equation*}
By the calibration of Step~6,
$\mu_{N} - (\beta/2)N \geq (1-\alpha-\beta)N$, so
\begin{equation}
\label{eq:correlated_failure_prob}
    \Pr\!\left[\sum_{i=1}^{N}W_{i} < (1-\alpha-\beta)N\right]
    \;\leq\;
    \frac{1}{\beta^{2} N_{\mathrm{eff}}},
\end{equation}
which is~\eqref{eq:correlated_prob}.

\textit{Step 8: cluster radius (unchanged from Lemma~\ref{lem:cluster_radius}).}
Substituting $p = \beta/(2(1-\alpha))$ from~\eqref{eq:correlated_p_u}
into the per-sample tail-radius
$\rho_{p} = \sigma(C_{1}\sqrt{d} + \sqrt{(1/c)\log(1/p)})$
(equation~\eqref{eq:rho_p} of
Lemma~\ref{lem:cluster_radius}'s proof) gives the same cluster
radius as Theorem~\ref{thm:ropoll_bound}:
\begin{equation*}
    \rho \;=\;
    \sigma\!\left(C_{1}\sqrt{d}
    \;+\; \sqrt{\tfrac{1}{c}\log\tfrac{2(1-\alpha)}{\beta}}\right).
\end{equation*}
This step uses only the per-sample sub-Gaussian tail and is
correlation-free.

\textit{Step 9: deterministic geometric step (Lemma~\ref{lem:gm_breakdown}).}
On the complementary event of~\eqref{eq:correlated_failure_prob},
which has probability $\geq 1 - 1/(\beta^{2}N_{\mathrm{eff}})$,
the count of cluster-near judges satisfies
\begin{equation*}
    \big|\big\{i \in [N] : \|\hat{\vy}_{i}-\vy^{\star}\|_{2} \leq \rho\big\}\big|
    \;\geq\; \sum_{i=1}^{N} W_{i}
    \;\geq\; (1-\alpha-\beta) N
    \;=\; (1-(\alpha+\beta))N.
\end{equation*}
Apply Lemma~\ref{lem:gm_breakdown} with the substitutions $k=N$,
$z=\vy^{\star}$, $r=\rho$, $\alpha' = \alpha+\beta$
(which lies in $(0,1/2)$ since $\beta \in (0, 1/2-\alpha)$);
the lemma's hypothesis is exactly the count bound above.
The conclusion gives
$\|\hat{\vy}_{\mathrm{GM}}-\vy^{\star}\|_{2} \leq C_{\alpha+\beta}\,\rho$
with $C_{\alpha+\beta}$ and $\rho$ unchanged from
Theorem~\ref{thm:ropoll_bound}.

\textit{Step 10: assemble.}
Combining the deterministic bound from Step~9 (which holds on the
complementary event) with the failure probability
\eqref{eq:correlated_failure_prob}:
\begin{equation*}
    \Pr\!\Big[\,
        \|\hat{\vy}_{\mathrm{GM}} - \vy^{\star}\|_{2}
        \;\leq\;
        C_{\alpha+\beta}\,\rho
    \,\Big]
    \;\geq\; 1 - \frac{1}{\beta^{2} N_{\mathrm{eff}}},
\end{equation*}
which is the statement of Lemma~\ref{lem:correlated_jury}.
\end{proof}

\begin{remark}[Polynomial vs.\ exponential tail]
\label{rem:correlated_tail}
The price of allowing correlation is the tail rate.
The independent Hoeffding bound gives an
\emph{exponential} probability event
$\Pr[\cdot] \leq \exp(-N\beta^{2}/2)$;
the correlated Chebyshev bound is \emph{polynomial} in
$N_{\mathrm{eff}}$,
$\Pr[\cdot] \leq 1/(\beta^{2}N_{\mathrm{eff}})$.
At independence ($\bar\gamma_{W}=0$), $N_{\mathrm{eff}} = N$ and
both apply, but Hoeffding is strictly tighter.
A Bernstein-type bound under bounded-covariance martingale
structure (e.g., via the Efron--Stein inequality for sums of
weakly-dependent Bernoulli variables) can recover
sub-exponential rates under stronger hypotheses on the dependence
graph; we do not pursue this here as the polynomial bound suffices
for the parameter regime ($N_{\mathrm{eff}} \approx 1.5$--$2$,
$\beta \approx 0.1$--$0.2$) of our experiments.
\end{remark}

\begin{remark}[Estimating $\bar\gamma_{W}$ from data]
\label{rem:gamma_W_empirical}
The hypothesis of Lemma~\ref{lem:correlated_jury} is on the
indicator correlation $\bar\gamma_{W}$, which is in principle a
finer object than the inter-judge \emph{score} correlation
$\bar\gamma$ measured in
\Cref{fig:corpus_judge_corr,fig:tv_gamma}.
For jointly Gaussian competent noise with positive score
correlation, the cluster indicators are positively associated by
Pitt's Gaussian correlation inequality
\citep{pitt1977gaussian,esary1967association,joag1983negative}, so
$\bar\gamma_{W} \geq 0$;
we are not aware of a clean general upper bound on $\bar\gamma_{W}$
in terms of $\bar\gamma$ alone.
In practice, $\bar\gamma_{W}$ can be estimated directly from data
as the empirical correlation of the cluster events $\{W_{i} = 1\}$
across instances;
on our experimental grid this empirical value is on the same order
as the score correlation $\bar\gamma$, supporting the
$N_{\mathrm{eff}} \approx 1.5$--$2$ regime quoted in the body.
\end{remark}

\subsection{Proof of Theorem~\ref{thm:minimax_body}}
\label{sec:appendix_minimax}

Theorem~\ref{thm:ropoll_bound} provides an upper bound on the error of
the geometric median.
A natural question is whether this rate can be improved by \emph{any}
estimator.
The following result shows that, in the parametric regime, it cannot.

For convenience we restate Theorem~\ref{thm:minimax_body}: under the
observation model~\eqref{eq:full_model} with $N$ judges in $\R^{d}$,
homogeneous contamination rate $\alpha < 1/2$, and $\sigma^{2}$-sub-Gaussian
competent noise (Assumptions~\ref{asm:huber}, \ref{asm:independence},
\ref{asm:subgaussian}, \ref{asm:minority}), there exists a universal
constant $c > 0$ such that
\begin{equation}
\label{eq:minimax_lower}
    \inf_{\hat{\vy}}\;
    \sup_{F \in \gF_{\alpha,\sigma}}\;
    \E_{F}\!\left[\big\|\hat{\vy} - \vy^{\star}\big\|_{2}\right]
    \;\geq\;
    c\,\sigma\!\left(
        \sqrt{d/N} \;+\; \frac{\alpha}{1 - \alpha}
    \right).
\end{equation}

\begin{proof}[Proof of Theorem~\ref{thm:minimax_body}]
We invoke Le~Cam's two-point method
\citep[Sec.~2.4]{tsybakov2009introduction}: for any two parameter
values $\vy_{0}, \vy_{1} \in \R^{d}$ inducing observation distributions
$F_{0}, F_{1} \in \gF_{\alpha,\sigma}$,
\begin{equation}
\label{eq:le_cam}
    \inf_{\hat{\vy}}\;
    \sup_{F \in \{F_{0}, F_{1}\}}\;
    \E_{F}\!\left[\|\hat{\vy} - \vy^{\star}\|_{2}\right]
    \;\geq\;
    \frac{\|\vy_{0} - \vy_{1}\|_{2}}{4}
    \cdot
    \left(1 - \mathrm{TV}(F_{0}^{\otimes N}, F_{1}^{\otimes N})\right).
\end{equation}
The strategy is to construct $(\vy_{0}, \vy_{1}, F_{0}, F_{1})$
maximising the right-hand side.
Part~1 controls the parametric variance term; Part~2 establishes the
$N$-independent breakdown floor via the modulus of continuity of the
Huber neighborhood.

\textit{Part 1: the $\sqrt{d/N}$ term, via Fano's inequality.}
Set $\alpha = 0$ and consider the clean Gaussian sub-family
$F = \gN(\vy^{\star}, \sigma^{2}\mI_{d})$ for all $i \in [N]$.
The Le Cam two-point bound~\eqref{eq:le_cam} alone cannot deliver
the $\sqrt{d}$ factor (two Gaussians at separation $\Delta$ have
$\mathrm{TV} \to 1$ once $\Delta \gtrsim \sigma$, regardless of $d$);
we therefore use the multi-hypothesis generalisation, Fano's
inequality.

\emph{Step 1.1 (Gilbert--Varshamov packing of $\R^{d}$).}
For radius $\Delta > 0$, by the Gilbert--Varshamov bound
\citep[Lem.~4.7]{massart2007concentration} there exists a packing
$\{\vy_{1}, \ldots, \vy_{M}\} \subset \R^{d}$ with
\begin{equation}
\label{eq:gv_packing}
    \|\vy_{m} - \vy_{m'}\|_{2} \;\geq\; \Delta
    \quad\text{for all } m \neq m',
    \qquad
    M \;\geq\; 2^{d/8}.
\end{equation}
(Concretely, take the packing scaled so each $\vy_{m}$ has
$\|\vy_{m}\|_{2} \leq \Delta$.)

\emph{Step 1.2 (Fano's inequality).}
Let $H_{m}$ be the hypothesis $\vy^{\star} = \vy_{m}$;
under $H_{m}$, the joint observation law is
$F_{m}^{\otimes N} = \gN(\vy_{m}, \sigma^{2}\mI_{d})^{\otimes N}$.
Fano's inequality
\citep[Cor.~2.6]{tsybakov2009introduction} gives, for any
estimator $\hat{\vy}$,
\begin{equation}
\label{eq:fano}
    \frac{1}{M}\sum_{m=1}^{M}\Pr_{H_{m}}\!\big[\|\hat{\vy} - \vy_{m}\|_{2} \geq \Delta/2\big]
    \;\geq\;
    1 - \frac{\bar{\mathrm{KL}} + \log 2}{\log M},
\end{equation}
where
$\bar{\mathrm{KL}} = \binom{M}{2}^{-1}\sum_{m < m'}\mathrm{KL}(F_{m}^{\otimes N}\,\|\,F_{m'}^{\otimes N})$.
For two product Gaussians,
$\mathrm{KL}(F_{m}^{\otimes N}\,\|\,F_{m'}^{\otimes N}) = N\|\vy_{m} - \vy_{m'}\|_{2}^{2}/(2\sigma^{2}) \leq N\Delta^{2}/(2\sigma^{2})$
(using $\|\vy_{m}\|_{2} \leq \Delta$ and the triangle inequality).

\emph{Step 1.3 (Choose $\Delta$ to make the right-hand side $\geq 1/2$).}
With $\log M \geq d\log 2 / 8$ and $\bar{\mathrm{KL}} \leq N\Delta^{2}/(2\sigma^{2})$,
the right-hand side of~\eqref{eq:fano} is at least $1/2$ provided
\begin{equation*}
    \frac{N\Delta^{2}/(2\sigma^{2}) + \log 2}{d\log 2 / 8}
    \;\leq\; \tfrac{1}{2},
\end{equation*}
which (for $d \geq 16$, harmlessly absorbing the $\log 2$) holds
when $\Delta = c_{1}\sigma\sqrt{d/N}$ for a sufficiently small
absolute constant $c_{1} > 0$.

\emph{Step 1.4 (Convert to expected error).}
On the event $\|\hat{\vy} - \vy_{m}\|_{2} \geq \Delta/2$,
Markov's inequality gives
$\E\|\hat{\vy} - \vy_{m}\|_{2} \geq (\Delta/2) \cdot \Pr[\cdot] \geq \Delta/4$,
so
\begin{equation*}
    \sup_{m}\E_{H_{m}}\|\hat{\vy} - \vy_{m}\|_{2}
    \;\geq\;
    \frac{1}{M}\sum_{m=1}^{M}\E_{H_{m}}\|\hat{\vy} - \vy_{m}\|_{2}
    \;\geq\; \Delta/4
    \;=\; \tfrac{c_{1}}{4}\sigma\sqrt{d/N}.
\end{equation*}
Each $H_{m}$ corresponds to a clean ($\alpha = 0$) instance in
$\gF_{\alpha,\sigma}$, so this lower bound holds over the worst-case
$F \in \gF_{\alpha,\sigma}$, establishing the $\sqrt{d/N}$ term.

\textit{Part 2: the $\alpha/(1-\alpha)$ term, via the modulus of
continuity.}
The breakdown floor is dimension-free in $d$ and \emph{independent of
$N$}; we establish it through the structural fact that two Huber
neighborhoods at sufficiently close centers have a common element,
hence are statistically indistinguishable.

\textit{Step 2.1 (Modulus of continuity for Huber neighborhoods).}
For a center $\vy \in \R^{d}$, write
$\gF_{\alpha}(\vy) = \{(1-\alpha)\gN(\vy, \sigma^{2}\mI_{d}) + \alpha Q : Q \text{ probability on } \R^{d}\}$
for the corresponding Huber contamination class.
We claim a sufficient condition for two such neighborhoods to overlap:
\begin{equation}
\label{eq:modulus_overlap}
    \big\|\gN(\vy_{0}, \sigma^{2}\mI_{d}) - \gN(\vy_{1}, \sigma^{2}\mI_{d})\big\|_{\mathrm{TV}}
    \;\leq\; \frac{\alpha}{1-\alpha}
    \quad\Longrightarrow\quad
    \gF_{\alpha}(\vy_{0}) \cap \gF_{\alpha}(\vy_{1}) \neq \emptyset.
\end{equation}
\emph{Proof of \eqref{eq:modulus_overlap}.}
Let $P_{j} = \gN(\vy_{j}, \sigma^{2}\mI_{d})$ and write
$\eps = \|P_{0} - P_{1}\|_{\mathrm{TV}}$;
the hypothesis is $\eps \leq \alpha/(1-\alpha)$.
Hahn-decompose the signed measure $P_{0} - P_{1} = \mu^{+} - \mu^{-}$
with $\mu^{+}, \mu^{-} \geq 0$ and
$\mu^{+}(\R^{d}) = \mu^{-}(\R^{d}) = \eps$.
Pick any probability measure $\rho$ (e.g.\ $\rho = (P_{0}+P_{1})/2$),
and define the candidates
\begin{equation}
\label{eq:Q_construction}
    \alpha Q_{0} \;\triangleq\; (1-\alpha)\mu^{-} + \big(\alpha - (1-\alpha)\eps\big)\rho,
    \qquad
    \alpha Q_{1} \;\triangleq\; (1-\alpha)\mu^{+} + \big(\alpha - (1-\alpha)\eps\big)\rho.
\end{equation}
Each $Q_{j}$ is a probability measure: nonnegativity holds because
$\mu^{\pm} \geq 0$, $\rho \geq 0$, and the hypothesis
$\eps \leq \alpha/(1-\alpha)$ ensures
$\alpha - (1-\alpha)\eps \geq 0$;
total mass is
$\alpha Q_{j}(\R^{d}) = (1-\alpha)\eps + (\alpha - (1-\alpha)\eps) = \alpha$,
so $Q_{j}(\R^{d}) = 1$.
Subtracting the two Huber mixtures:
\begin{align*}
    \big[(1-\alpha)P_{0} + \alpha Q_{0}\big] - \big[(1-\alpha)P_{1} + \alpha Q_{1}\big]
    &= (1-\alpha)(P_{0} - P_{1}) + \alpha(Q_{0} - Q_{1}) \\
    &= (1-\alpha)(\mu^{+} - \mu^{-}) + (1-\alpha)(\mu^{-} - \mu^{+}) \\
    &= 0,
\end{align*}
using \eqref{eq:Q_construction} (the $\rho$ terms cancel).
Hence $(1-\alpha)P_{0} + \alpha Q_{0} = (1-\alpha)P_{1} + \alpha Q_{1}$
is a common element of $\gF_{\alpha}(\vy_{0}) \cap \gF_{\alpha}(\vy_{1})$,
establishing~\eqref{eq:modulus_overlap}.

\textit{Step 2.2 (Equal-covariance Gaussian TV is dimension-free).}
The total-variation distance between $\gN(\vy_{0}, \sigma^{2}\mI_{d})$
and $\gN(\vy_{1}, \sigma^{2}\mI_{d})$ depends only on
$\Delta \triangleq \|\vy_{0} - \vy_{1}\|_{2}$:
projecting onto the line $\vy_{1} - \vy_{0}$ reduces the comparison
to two univariate Gaussians at separation $\Delta$ with variance
$\sigma^{2}$, and the orthogonal directions contribute identical
factors that cancel in TV.
Therefore
\begin{equation}
\label{eq:gauss_tv}
    \big\|\gN(\vy_{0}, \sigma^{2}\mI_{d}) - \gN(\vy_{1}, \sigma^{2}\mI_{d})\big\|_{\mathrm{TV}}
    \;=\; 2\Phi\!\left(\tfrac{\Delta}{2\sigma}\right) - 1,
\end{equation}
with $\Phi$ the standard normal cdf.

\textit{Step 2.3 (Solve for the indistinguishability separation).}
Combining \eqref{eq:modulus_overlap} and \eqref{eq:gauss_tv},
$\gF_{\alpha}(\vy_{0}) \cap \gF_{\alpha}(\vy_{1}) \neq \emptyset$
whenever
\begin{equation*}
    2\Phi(\Delta/(2\sigma)) - 1 \;\leq\; \alpha/(1-\alpha),
    \qquad\text{i.e.}\qquad
    \Delta \;\leq\; \Delta_{\star}
    \;\triangleq\;
    2\sigma\,\Phi^{-1}\!\left(\tfrac{1}{2} + \tfrac{\alpha}{2(1-\alpha)}\right).
\end{equation*}
We lower-bound $\Phi^{-1}$ by integrating its density:
for any $y \in [0, 1/2)$ and $x = \Phi^{-1}(1/2 + y) \geq 0$,
\begin{equation*}
    y \;=\; \Phi(x) - \tfrac{1}{2}
    \;=\; \int_{0}^{x}\phi(t)\,dt
    \;\leq\; x \cdot \max_{t \geq 0}\phi(t)
    \;=\; x \cdot \phi(0)
    \;=\; \frac{x}{\sqrt{2\pi}},
\end{equation*}
where the maximum of the standard normal density on $[0, \infty)$ is
attained at $0$ with $\phi(0) = 1/\sqrt{2\pi}$.
Hence $\Phi^{-1}(1/2 + y) \geq y\sqrt{2\pi}$ for all
$y \in [0, 1/2)$.
Applying this with $y = \alpha/(2(1-\alpha))$ (which lies in
$[0, 1/2)$ for all $\alpha \in [0, 1/2)$):
\begin{equation*}
    \Phi^{-1}\!\left(\tfrac{1}{2} + \tfrac{\alpha}{2(1-\alpha)}\right)
    \;\geq\;
    \tfrac{\alpha}{2(1-\alpha)} \cdot \sqrt{2\pi}
    \;=\;
    \sqrt{\tfrac{\pi}{2}}\,\frac{\alpha}{1-\alpha}.
\end{equation*}
Therefore
\begin{equation*}
    \Delta_{\star}
    \;=\; 2\sigma\,\Phi^{-1}\!\left(\tfrac{1}{2} + \tfrac{\alpha}{2(1-\alpha)}\right)
    \;\geq\; 2\sigma \cdot \sqrt{\tfrac{\pi}{2}}\,\frac{\alpha}{1-\alpha}
    \;=\; \sqrt{2\pi}\,\sigma\,\frac{\alpha}{1-\alpha}.
\end{equation*}

\textit{Step 2.4 (Apply Le Cam).}
Pick $\vy_{0} = \vzero$, $\vy_{1} = \Delta_{\star}\,\ve_{1}$, and
let $F$ be any common element of
$\gF_{\alpha}(\vy_{0}) \cap \gF_{\alpha}(\vy_{1})$ (which exists by
Step~2.1).
Set $F_{0} = F_{1} = F$; then $F_{0}^{\otimes N} = F_{1}^{\otimes N}$
and $\mathrm{TV}(F_{0}^{\otimes N}, F_{1}^{\otimes N}) = 0$
\emph{regardless of $N$}.
Substituting into~\eqref{eq:le_cam},
\begin{equation*}
    \inf_{\hat{\vy}}\;
    \sup_{F \in \{F_{0}, F_{1}\}}\;
    \E_{F}\!\left[\|\hat{\vy} - \vy^{\star}\|_{2}\right]
    \;\geq\;
    \frac{\Delta_{\star}}{4}
    \;\geq\;
    \frac{\sqrt{2\pi}}{4}\,\sigma\,\frac{\alpha}{1-\alpha}.
\end{equation*}

\textit{Combining.}
Taking the maximum of the two lower bounds (the worst-case adversary
selects whichever construction is tighter) and absorbing constants
yields~\eqref{eq:minimax_lower}.
\end{proof}

\begin{remark}[Why no $\sqrt{d}$ on the breakdown floor]
\label{rem:no_sqrt_d_floor}
A natural question is whether the breakdown term should scale with
$\sqrt{d}$ (analogous to the variance term).
The answer is no.
Total variation between two equal-covariance Gaussians depends only
on their $\ell_{2}$ separation~\eqref{eq:gauss_tv}, not on the
ambient dimension; the modulus of continuity is therefore
dimension-free.
A Fano-style packing of $2^{d}$ test points at pairwise overlapping
Huber neighborhoods would require pairwise $\ell_{2}$ separation
$\leq \Delta_{\star}$ and pairwise distance large enough to give
the desired $\sqrt{d}$ minimax error---these constraints are
incompatible, since the diameter of a set of $2^{d}$ points at
pairwise distance $\leq \Delta_{\star}$ cannot exceed
$\Delta_{\star}$.
This matches the
established minimax for sub-Gaussian Huber:
\citet{chen2018robust}, Theorem~5.1, prove
$\inf\sup\E\|\hat{\bm\theta} - \bm\theta\|^{2} \asymp
\sigma^{2}(d/N + \alpha^{2})$, with no $d$ on the squared-error
contamination floor.
\end{remark}

\paragraph{Comparison with the upper bound.}
At $\alpha = 0$, the upper and lower bounds match at the parametric
rate $\sigma\sqrt{d/N}$, confirming that the geometric median is
rate-optimal in the clean regime.
On the breakdown floor the upper bound (Thm~\ref{thm:ropoll_bound})
scales as $C_{\alpha}\sigma\sqrt{d}$ while the lower bound scales as
$\sigma\alpha/(1-\alpha)$;
the gap is a $\sqrt{d}/\alpha$ factor.
This is not slack in the analysis but a real
statistical--computational gap.
The minimax-optimal estimator on the breakdown floor is the Tukey
halfspace median \citep{tukey1975mathematics,donoho1992breakdown},
whose exact computation is NP-hard for $d \geq 3$
\citep{johnson1978densest,aloupis2006geometric};
the smoothed-depth estimator of \citet{chen2018robust} matches the
$\sigma\alpha$ floor in sub-exponential time.
The geometric median is the polynomial-time alternative: it shares
the optimal $1/2$ breakdown point but pays a $\sqrt{d}$ price for
$O(Nd\log(1/\eps))$ tractability via the Weiszfeld iteration.
For LLM juries the trade is favourable: $d$ is small (1--5 in our
benchmarks) so the $\sqrt{d}$ overhead is at most $\sim 2.2\times$,
and at small $N$ the variance term $\sigma\sqrt{d/N}$ dominates the
breakdown floor on every regime we test.

%
\section{Additional Experiments}
\label{sec:appendix_exp}

%
\subsection{Synthetic 2D Simulation: Visual Intuition}
\label{sec:simulation}

For pedagogical intuition we instantiate the observation
model~\eqref{eq:full_model} in $d = 2$ dimensions with score range
$[0, K]$ and visualize five representative failure modes.
A jury of $N$ judges evaluates a single instance with latent reward
$\vy^{\star} \in [0, K]^{2}$.
Each competent judge ($Z_{i} = 0$) draws from a tight isotropic
Gaussian centered on $\vy^{\star}$;
each corrupted judge ($Z_{i} = 1$) draws from a corruption
distribution $Q_{i}$ specific to the failure mode.
The corruption indicator $Z_{i} \sim \mathrm{Bernoulli}(\alpha)$ is
drawn independently per judge at homogeneous rate
$\alpha \in \{0.10, 0.30, 0.40\}$.
We compare the arithmetic mean and the geometric median (computed via
Algorithm~\ref{alg:ropoll}).
In every figure, the gold star marks $\vy^{\star}$, blue dots are
competent judge outputs, red crosses are corrupted outputs, and the
orange square and purple triangle mark the arithmetic mean and the
geometric median, respectively.

\paragraph{Mode collapse ($Q = \delta_{\vzero}$).}
The corrupted judge outputs the zero vector on every attribute---the
canonical parser-fallback failure mode (Remark~\ref{rem:parser}).
\Cref{fig:sim_mode_collapse} shows the mean pulled toward the origin
as $\alpha$ grows, while the geometric median remains anchored to
the competent cluster.
\begin{figure}[htbp]
\centering
\includegraphics[width=0.75\textwidth]{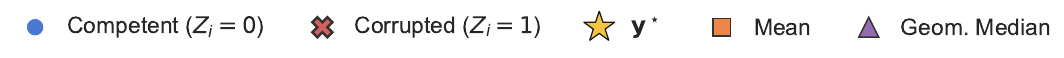}
\vspace{-2mm}

\begin{subfigure}{0.32\textwidth}
    \centering
    \includegraphics[width=\linewidth]{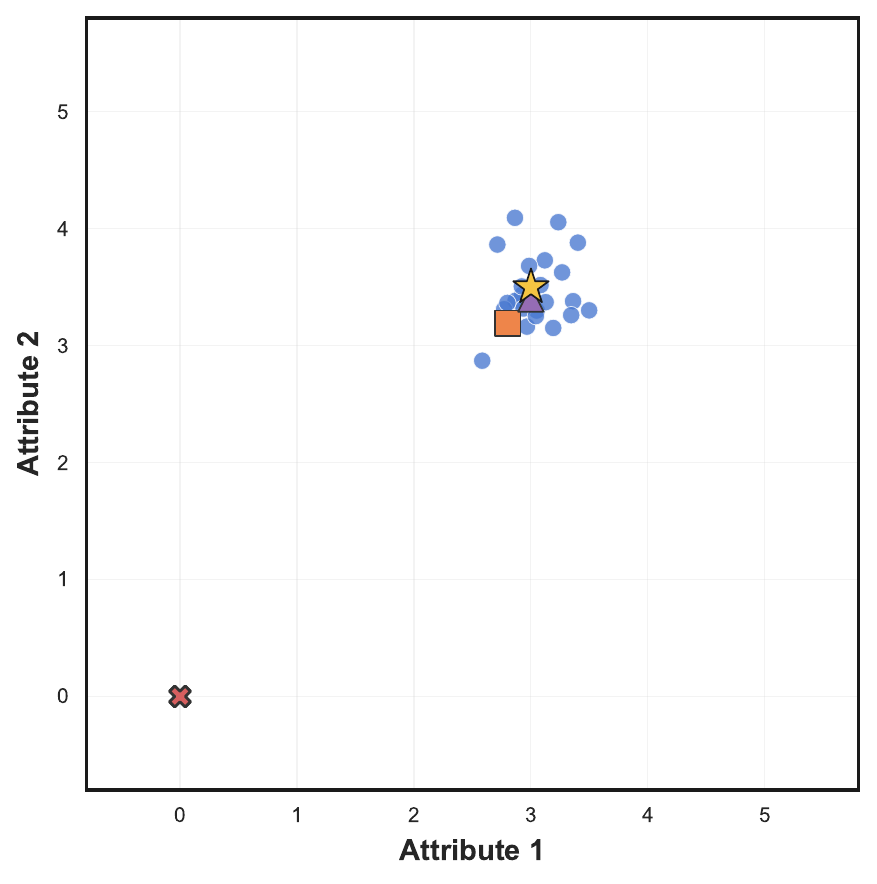}
    \caption*{$\alpha = 0.10$}
\end{subfigure}\hfill
\begin{subfigure}{0.32\textwidth}
    \centering
    \includegraphics[width=\linewidth]{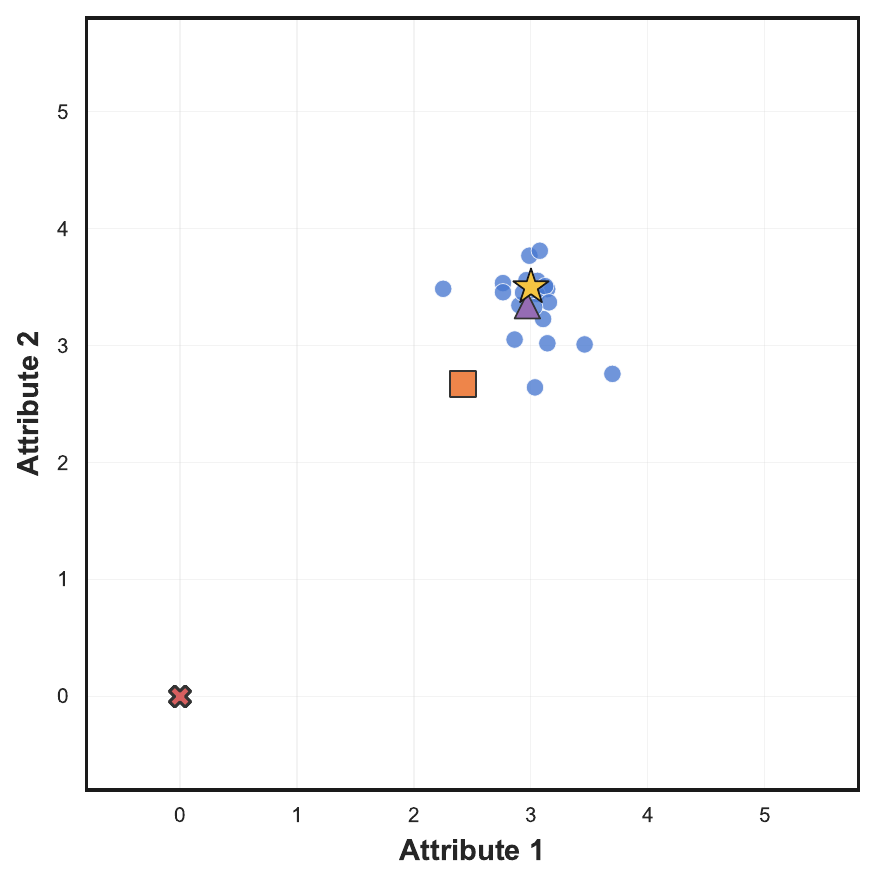}
    \caption*{$\alpha = 0.30$}
\end{subfigure}\hfill
\begin{subfigure}{0.32\textwidth}
    \centering
    \includegraphics[width=\linewidth]{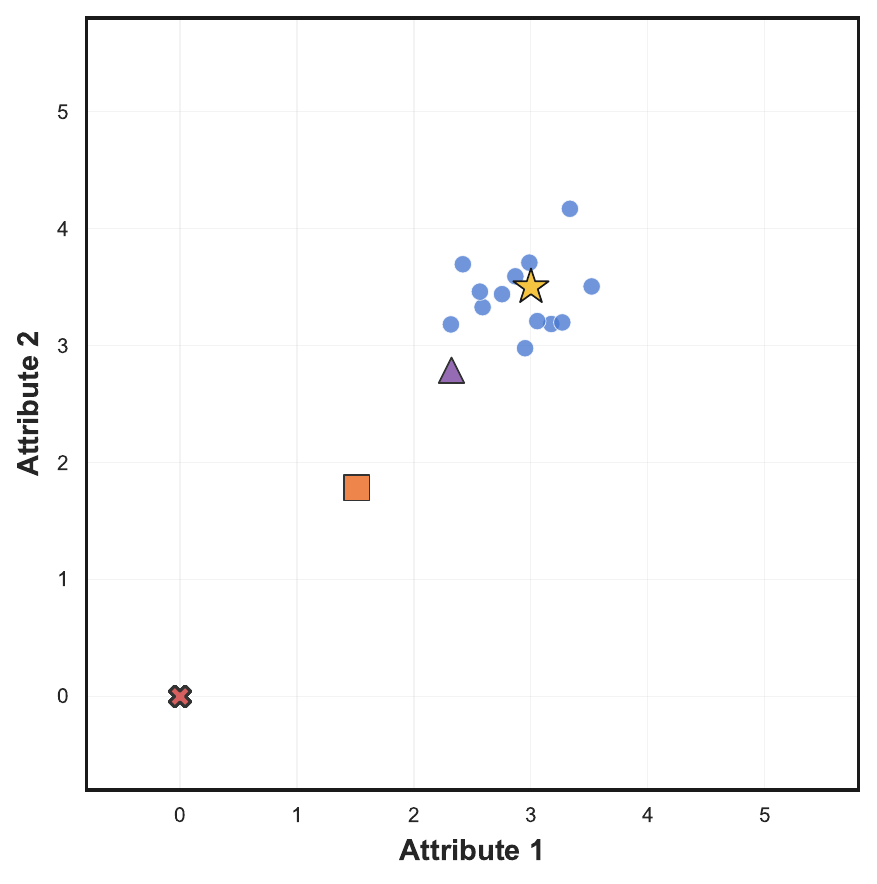}
    \caption*{$\alpha = 0.40$}
\end{subfigure}
\caption{\textbf{Mode Collapse corruption} ($Q = \delta_{\mathbf{0}}$). Corrupted judges output the zero vector, modeling parser failures or safety refusals. The mean is pulled linearly toward the origin; at $\alpha = 0.40$ it lies roughly $40\%$ of the way from $\mathbf{y}^\star$ to $\mathbf{0}$. The geometric median remains within the competent cluster because the majority of Euclidean distances still point toward $\mathbf{y}^\star$.}
\label{fig:sim_mode_collapse}
\end{figure}

\paragraph{Inverted ($Q = \delta_{K \cdot \vone - \vy^{\star}}$).}
The worst-case anti-correlated Byzantine adversary
(\Cref{fig:sim_inverted}).
This is the sharpest visual demonstration of the breakdown-point
advantage: the corrupted locus and $\vy^{\star}$ lie on opposite
sides of the score space, so at $\alpha = 0.30$ the mean has already
crossed the midpoint while the geometric median remains within the
competent cluster.
\begin{figure}[htbp]
\centering
\includegraphics[width=0.75\textwidth]{assets/simulation/scatter/shared_legend.pdf}
\vspace{-2mm}

\begin{subfigure}{0.32\textwidth}
    \centering
    \includegraphics[width=\linewidth]{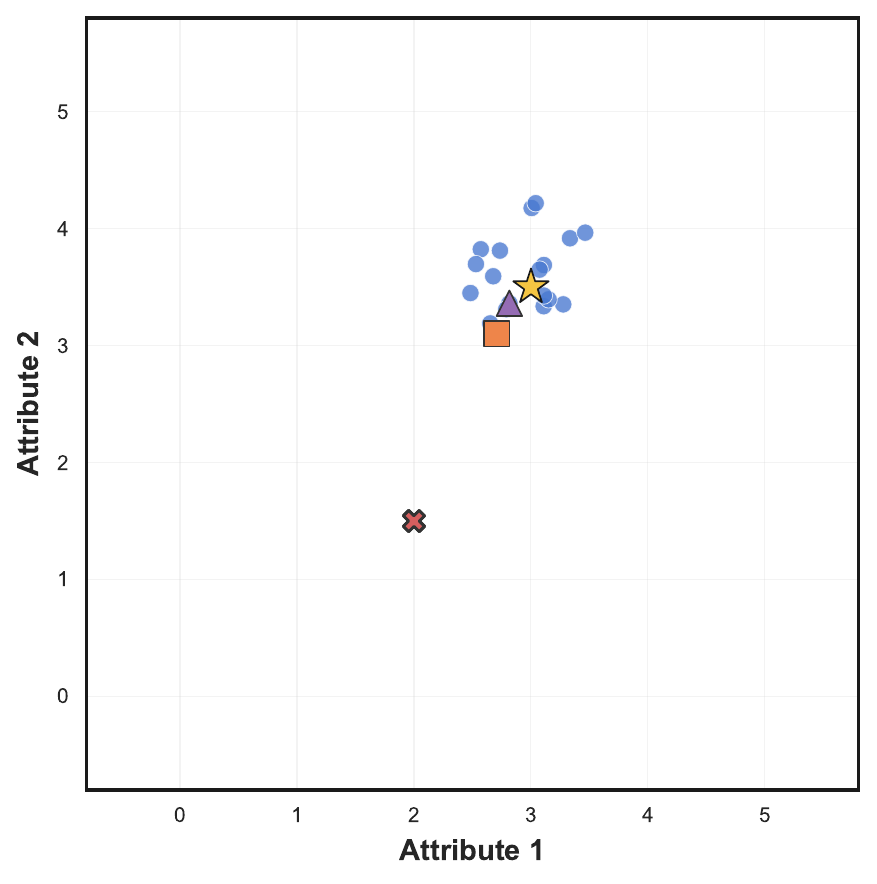}
    \caption*{$\alpha = 0.10$}
\end{subfigure}\hfill
\begin{subfigure}{0.32\textwidth}
    \centering
    \includegraphics[width=\linewidth]{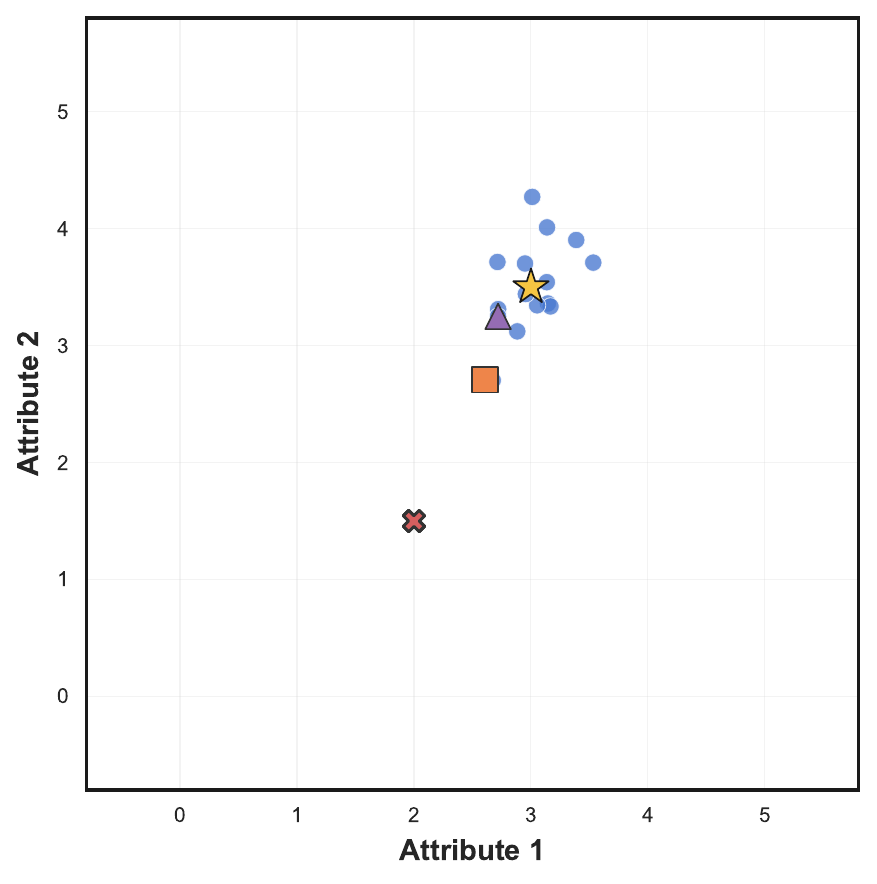}
    \caption*{$\alpha = 0.30$}
\end{subfigure}\hfill
\begin{subfigure}{0.32\textwidth}
    \centering
    \includegraphics[width=\linewidth]{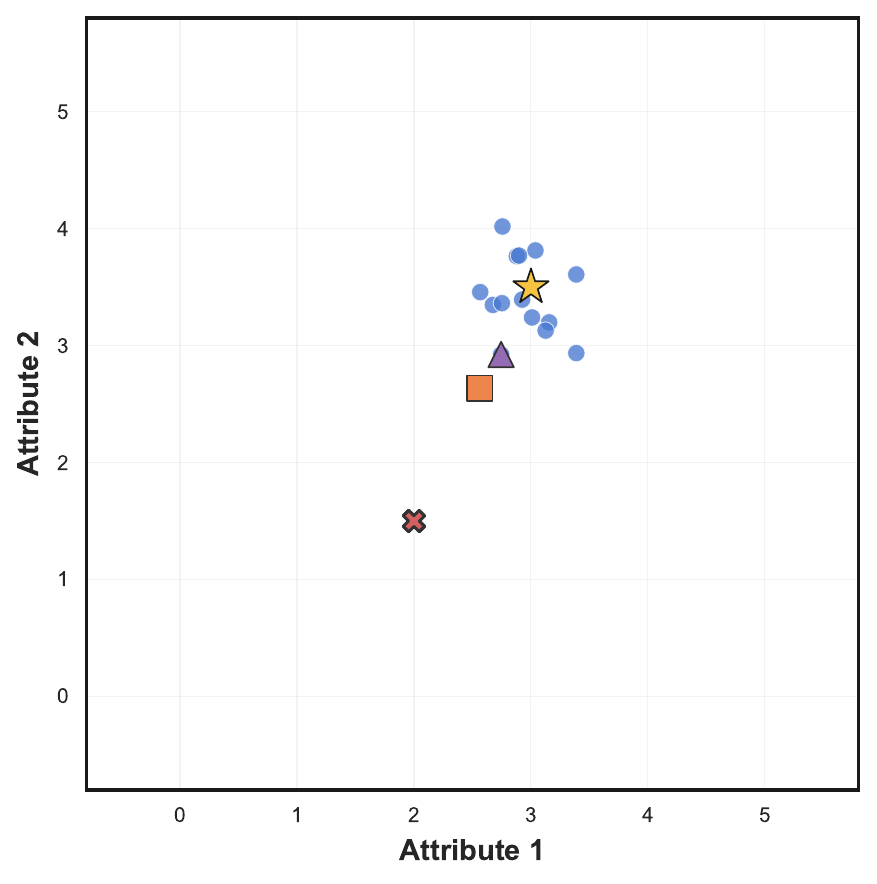}
    \caption*{$\alpha = 0.40$}
\end{subfigure}
\caption{\textbf{Inverted corruption} ($Q = \delta_{K \cdot \mathbf{1} - \mathbf{y}^\star}$). The worst-case Byzantine adversary: corrupted scores are perfectly anti-correlated with the truth. The corruption locus and $\mathbf{y}^\star$ lie on opposite sides of the score space. At $\alpha = 0.30$ the mean is already displaced past the midpoint, while the geometric median remains close to $\mathbf{y}^\star$. This is the sharpest demonstration of the breakdown-point advantage.}
\label{fig:sim_inverted}
\end{figure}

\paragraph{Biased dimension.}
Partial competence: correct on one attribute, catastrophically wrong
on the other (\Cref{fig:sim_biased_dimension}).
This is the synthetic counterpart of \texttt{bimodal-random}
(\S\ref{sec:exp_bimodal}) and the picture of cross-dimensional
corruption from Example~\ref{ex:cross_dim}: each corrupted score is
plausible per coordinate but jointly anomalous, and the geometric
median's joint-distance objective resists the off-axis pull that
fools per-coordinate alternatives.
\begin{figure}[htbp]
\centering
\includegraphics[width=0.75\textwidth]{assets/simulation/scatter/shared_legend.pdf}
\vspace{-2mm}

\begin{subfigure}{0.32\textwidth}
    \centering
    \includegraphics[width=\linewidth]{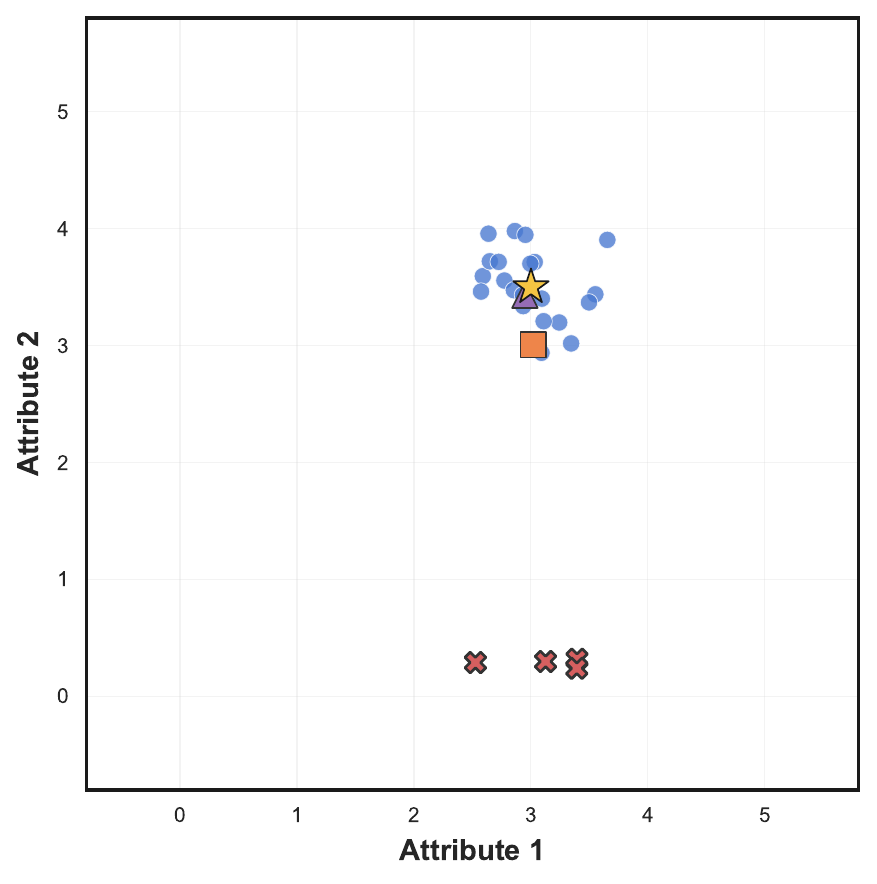}
    \caption*{$\alpha = 0.10$}
\end{subfigure}\hfill
\begin{subfigure}{0.32\textwidth}
    \centering
    \includegraphics[width=\linewidth]{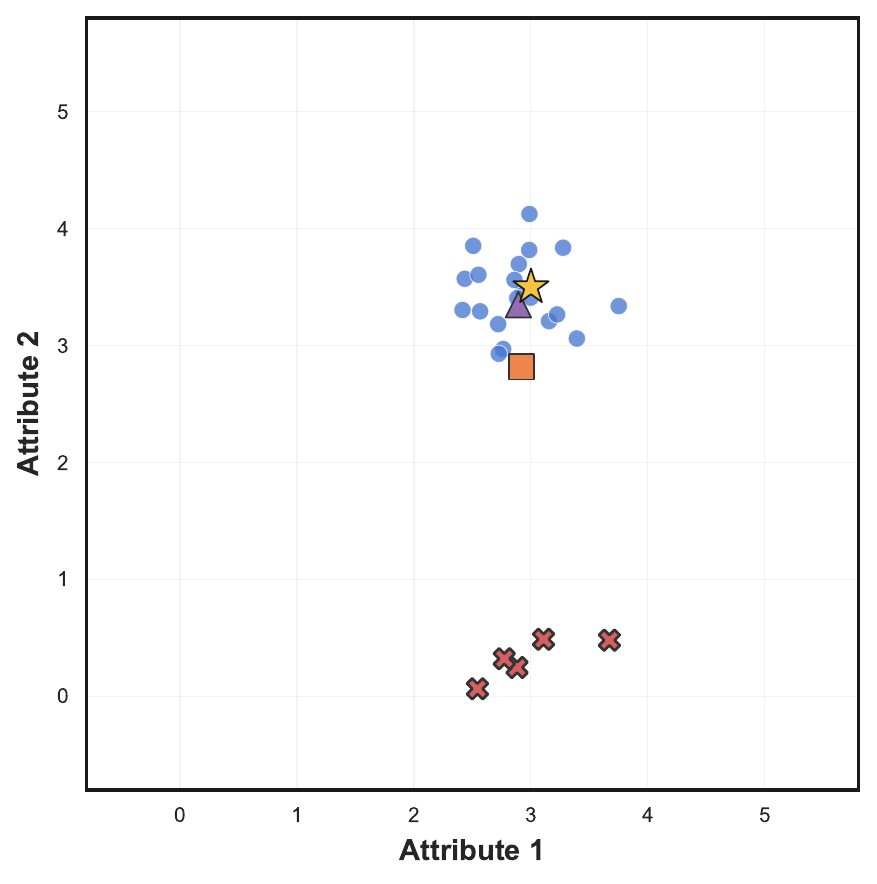}
    \caption*{$\alpha = 0.30$}
\end{subfigure}\hfill
\begin{subfigure}{0.32\textwidth}
    \centering
    \includegraphics[width=\linewidth]{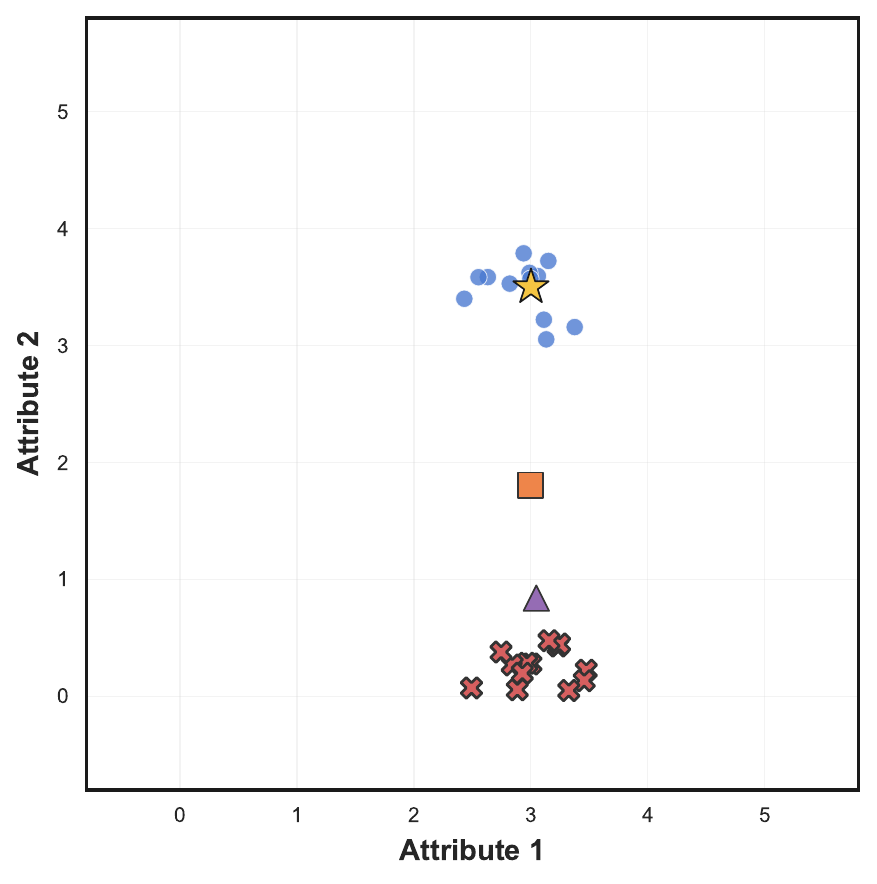}
    \caption*{$\alpha = 0.40$}
\end{subfigure}
\caption{\textbf{Biased Dimension corruption.} Corrupted judges evaluate Attribute~1 correctly but catastrophically fail on Attribute~2 (scores collapse near zero). This partial competence is challenging for coordinate-wise methods because the corruption is invisible on one axis. The geometric median, operating on joint Euclidean distances, detects the anomaly in Attribute~2 and downweights the corrupted points across both dimensions.}
\label{fig:sim_biased_dimension}
\end{figure}

\paragraph{Random hypercube corners.}
The canonical instance of the cross-dimensional class: each
corrupted score lands at a vertex of $\{0, K\}^{d}$ chosen
uniformly at random
(\Cref{fig:sim_random_corners}).
The per-coordinate marginal $\frac{1}{2}(\delta_{0}+\delta_{K})$ is
indistinguishable from plausible scoring; jointly, every corrupted
vector sits at a corner far from $\vy^{\star}$ in $\ell_{2}$.
This is the ``random vertex'' generalisation of
\emph{biased dimension} above and exactly the
\texttt{bimodal-random} class evaluated empirically in
\S\ref{sec:exp_bimodal}.
\begin{figure}[htbp]
\centering
\includegraphics[width=0.75\textwidth]{assets/simulation/scatter/shared_legend.pdf}
\vspace{-2mm}

\begin{subfigure}{0.32\textwidth}
    \centering
    \includegraphics[width=\linewidth]{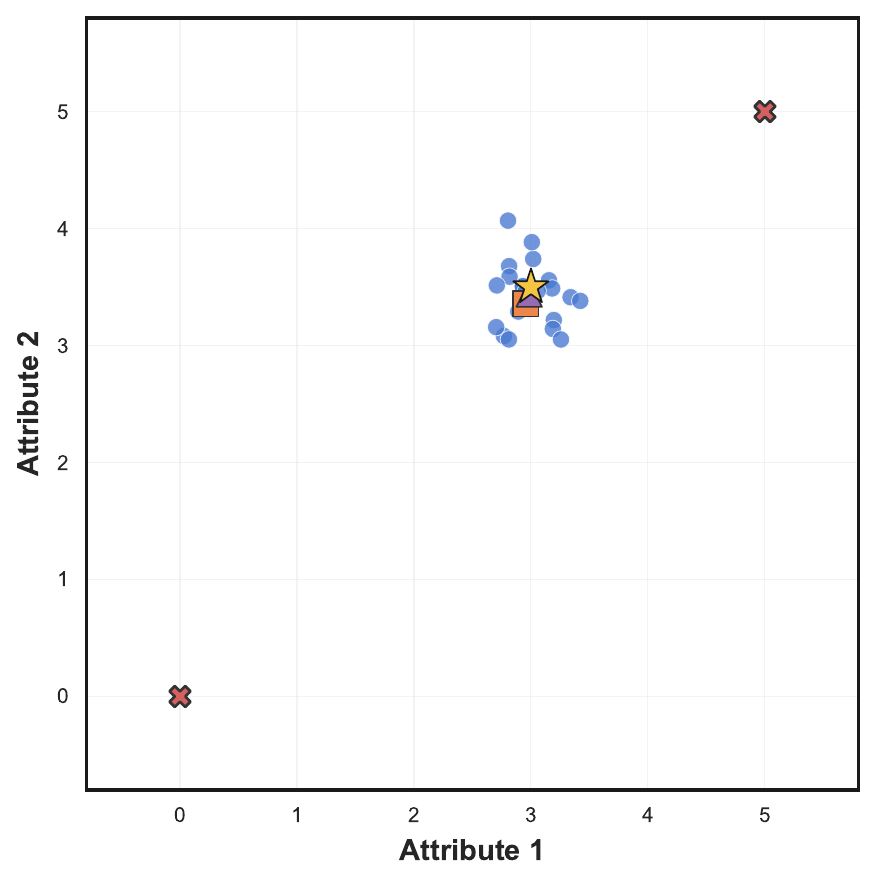}
    \caption*{$\alpha = 0.10$}
\end{subfigure}\hfill
\begin{subfigure}{0.32\textwidth}
    \centering
    \includegraphics[width=\linewidth]{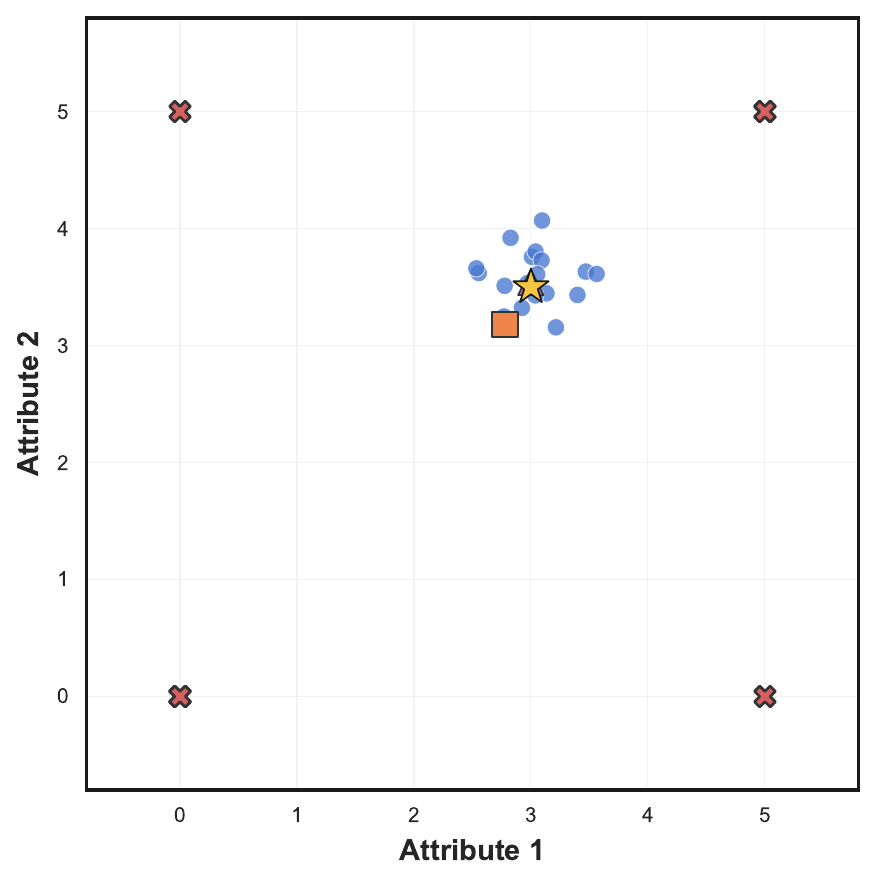}
    \caption*{$\alpha = 0.30$}
\end{subfigure}\hfill
\begin{subfigure}{0.32\textwidth}
    \centering
    \includegraphics[width=\linewidth]{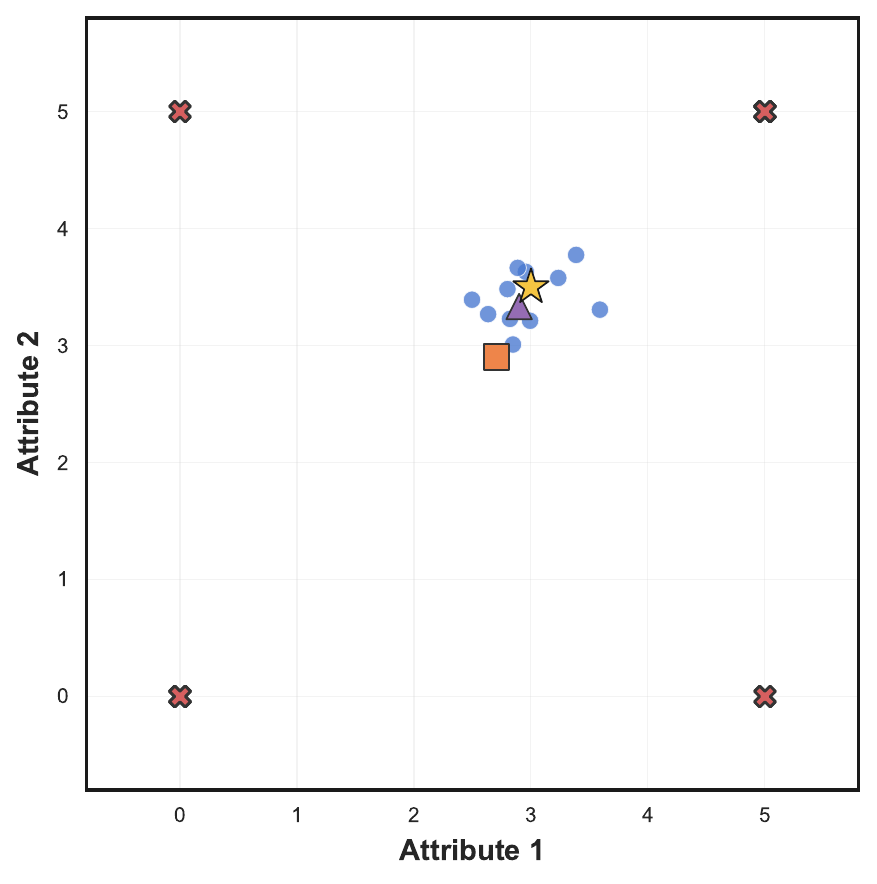}
    \caption*{$\alpha = 0.40$}
\end{subfigure}
\caption{\textbf{Random hypercube corners} (the canonical instance of the cross-dimensional class of Example~\ref{ex:cross_dim}, matching the empirical \texttt{bimodal-random} class of \S\ref{sec:exp_bimodal}). Corrupted judges output an extreme vertex of $\{0, K\}^{d}$ chosen uniformly at random; per-coordinate the corruption marginal $\frac{1}{2}(\delta_{0}+\delta_{K})$ is plausible scoring, but the joint vector lies far from $\vy^{\star}$ in $\ell_{2}$. The geometric median resists the cross-dimensional pull (it sits at $\vy^{\star}$, beneath the gold star), while the arithmetic mean drifts toward the centroid of the corrupted vertices.}
\label{fig:sim_random_corners}
\end{figure}

\paragraph{Sycophantic.}
A real-world failure mode in which corrupted judges always rate near
the top of the scale---the ``everything is great'' bias
(\Cref{fig:sim_sycophantic}).
The corrupted cloud sits in the upper-right corner of $[0, K]^{d}$;
the arithmetic mean drifts diagonally toward it while the geometric
median stays anchored to the competent majority near $\vy^{\star}$.
This complements \emph{mode collapse} (corruption at the lower-left
extremum) at the opposite extreme of the score scale.
\begin{figure}[htbp]
\centering
\includegraphics[width=0.75\textwidth]{assets/simulation/scatter/shared_legend.pdf}
\vspace{-2mm}

\begin{subfigure}{0.32\textwidth}
    \centering
    \includegraphics[width=\linewidth]{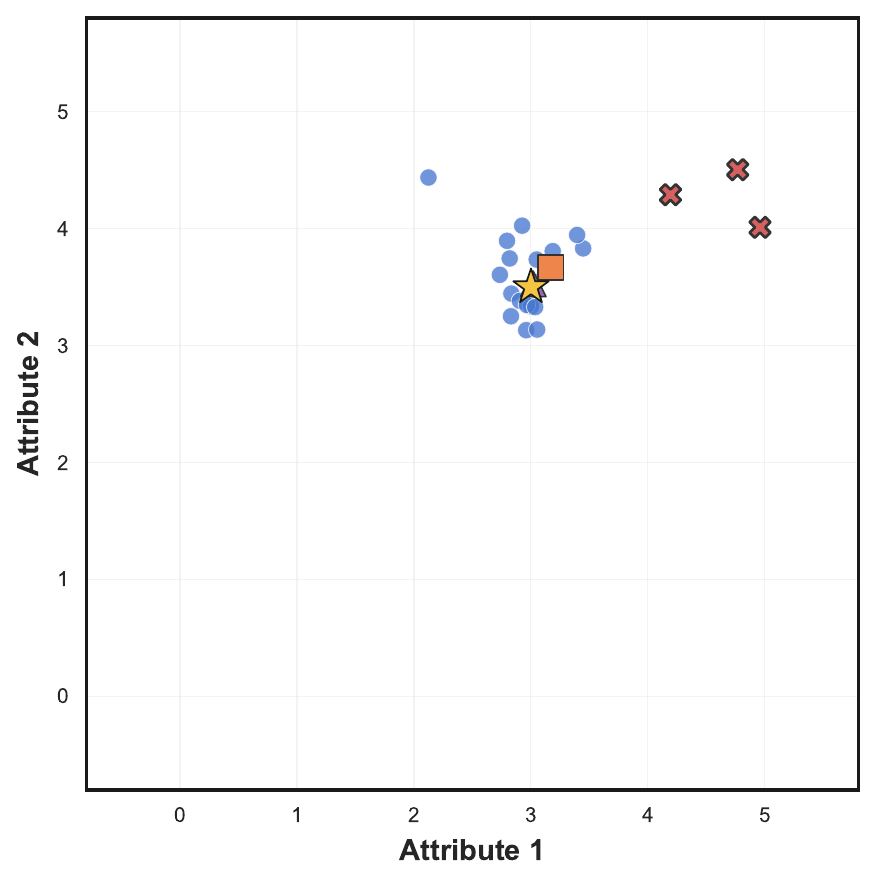}
    \caption*{$\alpha = 0.10$}
\end{subfigure}\hfill
\begin{subfigure}{0.32\textwidth}
    \centering
    \includegraphics[width=\linewidth]{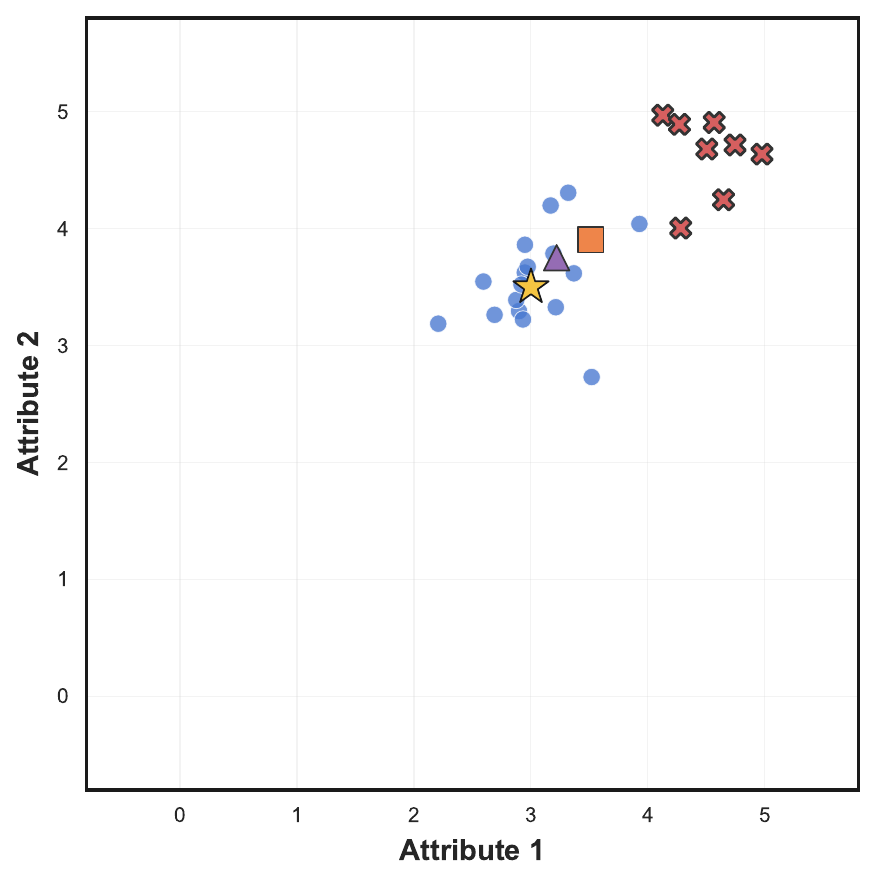}
    \caption*{$\alpha = 0.30$}
\end{subfigure}\hfill
\begin{subfigure}{0.32\textwidth}
    \centering
    \includegraphics[width=\linewidth]{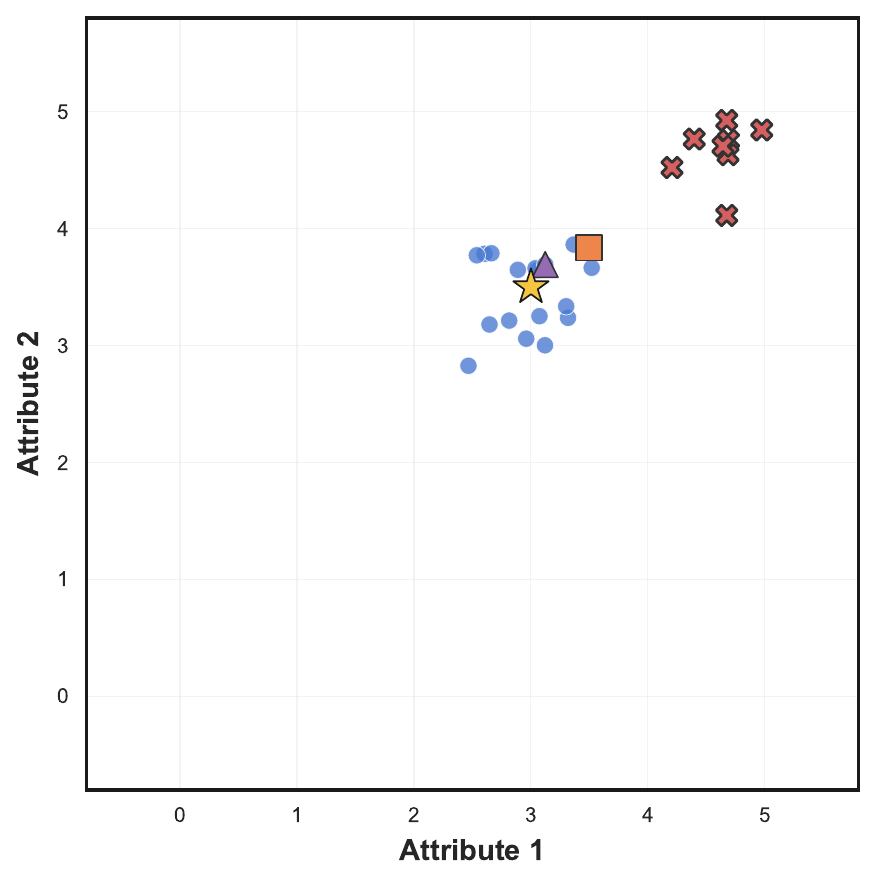}
    \caption*{$\alpha = 0.40$}
\end{subfigure}
\caption{\textbf{Sycophantic corruption} ($Q = \mathrm{Uniform}([K{-}1, K]^d)$). Corrupted judges produce scores clustered near the maximum, modeling the ``everything is great'' failure mode. The corrupted cloud sits in the upper-right corner; the mean drifts diagonally toward it while the geometric median stays anchored to the competent majority near $\mathbf{y}^\star$.}
\label{fig:sim_sycophantic}
\end{figure}

\paragraph{Summary.}
Across all three failure modes, the arithmetic mean acquires a bias
proportional to $\alpha$ and aligned with the corruption locus,
while the geometric median remains close to $\vy^{\star}$ as long as
$\alpha < 1/2$, in agreement with Theorem~\ref{thm:ropoll_bound}.
The complementary Noisy-GT control (\S\ref{sec:exp_noisy_gt})
confirms that this advantage is paid only against \emph{biased}
contamination: when the corruption is benign Gaussian noise, the
geometric median does not sacrifice accuracy.

\subsection{Per-Model and Per-Dimension Calibration Breakdowns}
\label{sec:appendix_per_model}

The figures in \S\ref{sec:benchmark} aggregate across rubric
dimensions and report the \textsc{Medium} jury's RMSE.
This subsection records the underlying per-model and per-dimension
calibration breakdowns on UltraFeedback that motivated the curated
three-judge committees of \S\ref{sec:benchmark_setup}.

\paragraph{Judge set.}
The calibration analysis in this subsection includes three
closed-API reference judges (Claude Opus, Sonnet, and Haiku
\,4.5) in addition to the $13$ open-weight judges of
\S\ref{sec:benchmark_setup}.
The closed-API judges are \emph{reference points only} — they are
not used in any \textsc{RoPoLL} committee — and are included here to
contextualise the open-weight calibration patterns.

\paragraph{Per-dimension MAE.}
\Cref{fig:uf_heatmap_mae} reports the mean absolute error for each
judge against the UltraFeedback rubric dimensions
(Helpfulness, Honesty, Instruction Following, Truthfulness).
Qwen3\,32B and Mistral-Large-3 lead with sub-$0.75$ MAE across all
four dimensions;
the Claude family lies near the bottom of the calibration ranking
despite strong ranking ability
(\Cref{fig:uf_heatmap_bias} below explains why).
\begin{figure}[htbp]
\centering
\includegraphics[width=0.75\textwidth]{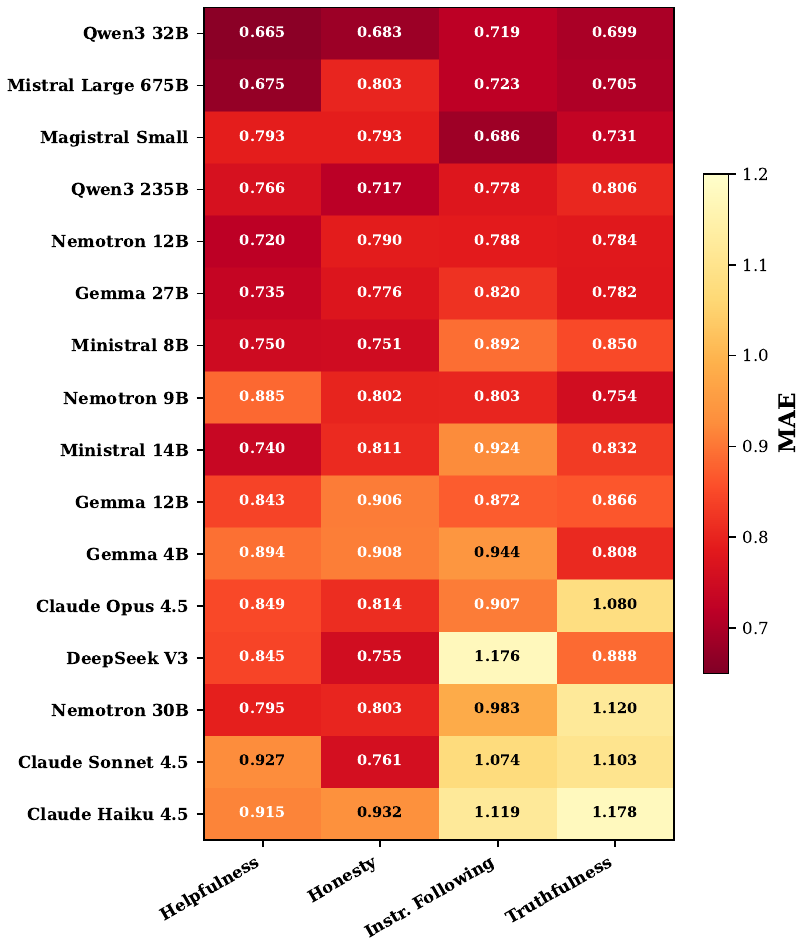}
\caption{Per-dimension MAE for each LLM judge on UltraFeedback ($n{=}1000$), sorted by lowest average error. Qwen3\,32B achieves the lowest MAE across all four dimensions. The Claude family clusters near the bottom despite strong ranking ability, with Instruction Following and Truthfulness showing the largest errors ($>1.0$) due to systematic negative bias.}
\label{fig:uf_heatmap_mae}
\end{figure}

\paragraph{Per-dimension mean bias.}
\Cref{fig:uf_heatmap_bias} reports the signed mean bias
$\E[\hat{y}_{i}^{(k)} - y^{\star,(k)}]$ for each (judge, dimension)
cell.
Two systematic patterns emerge.
The Claude family shows uniformly negative bias across all four
dimensions ($-0.5$ to $-0.8$ on Truthfulness)---a systematic
under-scoring tendency.
Smaller open-weight models (Magistral Small, Gemma\,4B,
Nemotron\,9B) show uniformly positive bias of comparable magnitude.
Qwen3\,32B and Qwen3\,235B are closest to zero across all
dimensions, consistent with their leading MAE.
The bias direction is precisely the contamination structure
Proposition~\ref{prop:mean_bias} formalises:
mixing systematically over-scoring and under-scoring judges leaves
the arithmetic mean's bias bounded only by the worst per-judge
displacement;
the geometric median is robust to such mixed-direction biases
because the joint subgradient balance does not weight per-coordinate
sign.
\begin{figure}[htbp]
\centering
\includegraphics[width=0.75\textwidth]{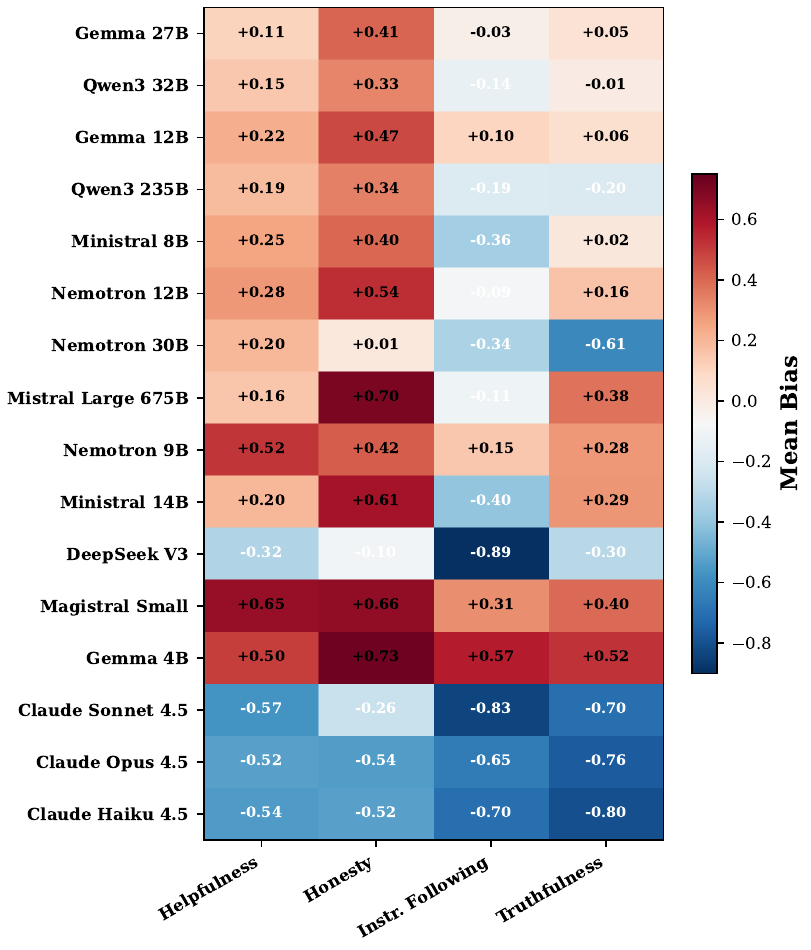}
\caption{Per-dimension mean bias for each LLM judge on UltraFeedback ($n{=}1000$), sorted by lowest absolute bias. Blue cells indicate under-scoring (negative bias); red cells indicate over-scoring (positive bias). The Claude family shows uniformly negative bias across all dimensions, while models like Magistral Small and Gemma\,4B exhibit strong positive bias. Qwen3\,32B and Qwen3\,235B are closest to zero across all dimensions.}
\label{fig:uf_heatmap_bias}
\end{figure}

\end{document}